\newcommand{\mE}{\mathbb{E}}
\newtheorem{prop}{Proposition}
\setlist{nolistsep}
\setlist{nosep}
\newenvironment{breakablealgorithm}
{
	\begin{center}
		\refstepcounter{algorithm}
		\hrule height.8pt depth0pt \kern2pt
		\renewcommand{\caption}[2][\relax]{
			{\raggedright\textbf{\ALG@name~\thealgorithm} ##2\par}%
			\ifx\relax##1\relax 
			\addcontentsline{loa}{algorithm}{\protect\numberline{\thealgorithm}##2}%
			\else 
			\addcontentsline{loa}{algorithm}{\protect\numberline{\thealgorithm}##1}%
			\fi
			\kern2pt\hrule\kern2pt
		}
	}{
	\kern2pt\hrule\relax
\end{center}
}
\begin{document}

\pagestyle{plain}
\mainmatter

\title{Large-Scale Optimal Transport via Adversarial Training with Cycle-Consistency} 

\author{Guansong Lu\thanks{Equal contribution} \and
	Zhiming Zhou$^\star$ \and
	Jian Shen \and 
	Cheng Chen \and
	Weinan Zhang \and
	Yong Yu}
\institute{
	Shanghai Jiao Tong University, Shanghai, China\\
	\email{$\{$gslu, heyohai, rockyshen$\}$@apex.sjtu.edu.cn}, \email{jack\_chen1990@sjtu.edu.cn},
	\email{$\{$wnzhang, yyu$\}$@apex.sjtu.edu.cn},
}

\maketitle

\begin{abstract}

Recent advances in large-scale optimal transport have greatly extended its application scenarios in machine learning. However, existing methods either not explicitly learn the transport map or do not support general cost function. In this paper, we propose an end-to-end approach for large-scale optimal transport,  which directly solves the transport map and is compatible with general cost function. It models the transport map via stochastic neural networks and enforces the constraint on the marginal distributions via adversarial training. The proposed framework can be further extended towards learning Monge map or optimal bijection via adopting cycle-consistency constraint(s). We verify the effectiveness of the proposed method and demonstrate its superior performance against existing methods with large-scale real-world applications, including domain adaptation, image-to-image translation, and color transfer. 

\keywords{optimal transport \and adversarial training \and cycle-consistency 
}

\end{abstract}

\section{Introduction}

The idea of optimal transport (OT), dating back to 1781 \cite{monge}, has recently been widely studied. It establishes an optimal mapping between two distributions with a minimal transportation cost. The optimal mapping obtained via OT has a wide range of applications such as domain adaptation \cite{courty2017optimal}, image-to-image translation \cite{ot-cyclegan} and color transfer \cite{ferradans2014regularized}. Besides, the minimal transportation cost can be used to measure the distance between two distributions \cite{rubner2000earth}. 

Given the cost of moving one unit of mass from one point to another, the Monge formulation of OT aims to find an optimal deterministic mapping (each source has a single target), named Monge map, to transport the source distribution to the target distribution with a minimal transportation cost. However, the Monge formulation is not always feasible. The Kantorovich formulation relaxes the Monge formulation and minimizes the overall cost over transport plan, which is a joint distribution whose marginals are the source and target distributions respectively. Such a transport plan allows each point to be mapped to multiple targets and thus is no longer a deterministic mapping in general. 

Kantorovich formulation can be solved as a linear programming \cite{network_simplex}. However, linear programming solvers usually suffer from high time complexity. To solve large-scale OT efficiently, recent approaches turn to stochastic gradient algorithms. Among them, \cite{largescale_semidual,wgan} focused on solving the minimal transportation cost but failed to address the problem of learning an optimal mapping between the two distributions. \cite{spot} learns two mappings from a latent distribution to the two data distributions respectively. It enables sampling from the optimal transport plan, but the mapping between the two data distributions is not explicitly learned, and how a source is mapped to the target is hard to retrieve. \cite{large-scale} proposed a two-step approach to learn a transport map: first solves an optimal transport plan (in terms of the density of the joint distribution) and then fits the barycentric mapping of the optimal transport plan with a deep neural network. However, the pushforward distribution of such an estimated map may not well align with the target distribution and tends to have out of distribution samples, which leads to blurry results and poor performance in applications like domain adaptation and color transfer. 

In this paper, we propose an end-to-end approach for large-scale optimal transport. Given the source and target distribution, we directly model the transport map via a stochastic neural network, which learns how to map the samples in the source distribution to the target distribution. We use adversarial training to ensure that the pushforward distribution matches the target distribution, and at the same time, we minimize the overall transportation cost of the learned map. The training of this framework simply requires standard back-propagation and thus is compatible with any differentiable cost function. Moreover, directly modeling the transport map benefits applications like image-to-image translation and color transfer that require direct access to the transport map. 

Cycle-consistency is a prevalent idea in the domain of image translation. And we find that under the circumstance of OT, and within our framework, cycle-consistency is also very useful. The first interesting finding is that one-side cycle-consistency can softly link the Kantorovich formulation and the Monge formulation: introducing cycle-consistency constraint on and only on target samples in our aforementioned framework, which is Kantorovich-based, will regularize the model towards learning an optimal deterministic mapping, i.e., a Monge map, where each sample tends to have a single target. Note that although the strict Monge formulation is not always feasible, such a regularization-based formulation is generally feasible. On the other hand, the commonly used two-side cycle-consistency, if introduced, can further regularize the model towards establishing an optimal bijection, which means a bijection with minimized transportation cost. That is, when the proposed framework is combined with two-side cycle-consistency, it can be viewed as an end-to-end version of OT-CycleGAN \cite{ot-cyclegan}. 

The contributions of this paper can be summarized as follows:
\begin{itemize}
	\item We propose an end-to-end framework for solving large-scale optimal transport based on the Kantorovich formulation, which directly models the transport map and is compatible with general cost function. 
	\item We show that one-side cycle-consistency constraint can regularize the 
	
	Kantorovich-based formulation towards learning a deterministic mapping and thus build a soft link between Monge formulation and Kantorovich one. 
	\item Furthermore, if two-side cycle-consistency is introduced in the proposed OT framework, it will regularize the model towards learning an optimal bijection between the two data distributions with the transportation cost minimized, which can be viewed as an end-to-end version of OT-CycleGAN \cite{ot-cyclegan}. 
	\item We verify the effectiveness and demonstrate the superior performance of the proposed method with large-scale real-world applications, including domain adaptation, image-to-image translation and color transfer. 
\end{itemize}

\section{Preliminaries}

\subsection{Monge Formulation}

Given two distributions $\mu$ and $\nu$ defined in domain $X$ and $Y$ respectively, and a cost function $c: X \times Y \rightarrow \mathbb{R}^+$, the Monge formulation aims to find a deterministic mapping $f: X \rightarrow Y$ which transports the mass of the source distribution $\mu$ to the target distribution $\nu$ with the minimal transportation cost:
\begin{equation}
\label{monge_formulation}
\inf_{f: f\#\mu = \nu} \mathbb{E}_{x \sim \mu} [c(x,f(x))],
\end{equation}
where $f\#\mu = \nu$ denotes that the pushforward measure of $\mu$ under $f$ is $\nu$. The optimal mapping with the minimal transportation cost is referred to as a Monge map. Brenier \cite{brenier1991polar} proved the existence and uniqueness of the Monge map when the source distribution $\mu$ is continuous and $c=\|x-y\|^2$. The result was later generalized to more general cost, e.g., strictly convex and super linear \cite{ot_applymath}. However, in many cases, such a Monge map does not exist \cite{large-scale} and thus it is not always feasible to solve the Monge formulation.

\subsection{Kantorovich Formulation}

To make the OT problem generally feasible, Kantorovich \cite{kantorovich} relaxed the Monge formulation by optimizing over a joint distribution, where each source can be mapped to multiple targets with different probabilities, rather than deterministically mapped to a single target:
\begin{equation}
\label{kantorovich_formulation}
\inf_{\pi \in \prod(\mu, \nu)} \mathbb{E}_{(x,y) \sim \pi} [c(x,y)],
\end{equation}
where $\prod(\mu, \nu)$ denotes the set of all joint distributions defined on $X \times Y$ with the marginal distributions being $\mu$ and $\nu$ respectively, i.e., $\mathbb{E}_{y \sim \pi(x, y)} \mathbbm{1} = \mu(x)$, $\mathbb{E}_{x \sim \pi(x, y)} \mathbbm{1} = \nu(y)$. Such a joint distribution $\pi$ is called a transport plan, and the one with the minimal transportation cost is referred to as the optimal transport plan. With a transport plan, each source point $x$ is transported to possibly more than one target points $y$ according to the conditional distribution $\pi(y|x)$, which means a transport plan implies a stochastic mapping. 

\section{Kantorovich Solver}

Solving the OT problem in the Kantorovich formulation requires to model and optimize over the transport plan, which is a joint distribution $\pi(x, y)$ whose marginal distributions are $\mu(x)$ and $\nu(y)$ respectively. To achieve this, we model $\pi(x,y)$ via parameterizing the conditional distribution $\pi(y|x)$ as a stochastic neural network and enforce the constraint on the marginal distributions via adversarial training. 

\subsection{Stochastic Neural Network}

In the Kantorovich formulation, we only need to sample from the joint distribution $\pi(x,y)$ and does not need to access the probability densities. Given that we can easily sample from $\mu(x)$, we propose to model $\pi(x,y)$ via parameterizing the conditional distribution $\pi(y|x)$. Once $\pi(y|x)$ is modeled, sampling from the joint distribution $\pi(x,y)$ boils down to sampling $x$ from the marginal distribution $\mu$ and then sampling $y$ from the conditional distribution $\pi(y|x)$. 

Due to the excellent expression ability of deep neural networks, nowadays in deep learning, we usually model and optimize functions as deep neural networks. However, typically, deep neural network is deterministic. Hence, we introduce randomness into the neural network to achieve its ability of modeling stochasticity. Specifically, we model the mapping function as neural network $G_{xy}$ and augment the input with an independent random noise $z$, which gives $G_{xy}(x, z)$. The random noise follows some given distribution, e.g., Uniform or Gaussian. Then given a source sample $x$, the sampling procedure for conditional distribution $\pi(y|x)$ can be carried out through the following process:
\begin{equation}
y = G_{xy}(x, z), ~ z \sim p(z).
\end{equation}
For simplicity, we omit the input $z$ in the notation and use $G_{xy}(x)$ to denote the stochastic mapping from $x$ to $y$ and the distribution of $y$ conditioned on $x$. 

With $\pi(y|x)$ modeled as a stochastic neural network $G_{xy}(x)$, the OT problem defined in Eq. \eqref{kantorovich_formulation} can be rewritten as: 
\begin{equation}
\label{reformulate_hardconstraint}
\inf_{G_{xy}\#\mu = \nu} L_{opt}(G_{xy}) = \mathbb{E}_{x \sim \mu, y \sim G_{xy}(x)} [c(x,y)].
\end{equation}

\subsection{Adversarial Training}

The constraint in Eq. \eqref{kantorovich_formulation} requires the marginal distributions of the joint distribution $\pi$ to be $\mu$ and $\nu$. Since we can guarantee the source distribution is $\mu$, after the above reformulation, the constraint in Eq. \eqref{reformulate_hardconstraint} now becomes the pushforward measure of $\mu$ under $G_{xy}$ should be $\nu$. 

Imposing a constraint over a pushforward distribution can be difficult, but fortunately, recent progress of GANs \cite{gan} provide an efficient framework to minimize the divergence between two distributions. We therefore leverage GANs and enforce the constraint in Eq. \eqref{reformulate_hardconstraint} via an additional adversarial optimization over $G_{xy}$. It minimizes the divergence between the pushforward measure $G_{xy}\#\mu$ and the target distribution $\nu$. When the global minimum of such adversarial training is achieved, the two distributions would be identical. 

In GANs \cite{gan}, besides the mapping function $G_{xy}$ which can be regarded as the generator, there requires another function $D_{y}$ called the discriminator (also named critic). The discriminator tries to distinguish the samples from the target distribution $\nu$ and the samples from the generator $G_{xy}$, while the generator aims to map the source distribution $\mu$ to the target distribution $\nu$ such that the discriminator cannot distinguish which distribution a sample is from. GANs is usually formulated as a minimax game as below:
\begin{equation}
\begin{aligned}
\min\limits_{G_{xy}} \max\limits_{D_{y}} L_{gan} (G_{xy},D_{y}). 
\end{aligned}
\end{equation} 

The vanilla GANs \cite{gan} formulates $L_{gan} (G_{xy},D_{y})$ such that it equivalently minimizes the Jensen–Shannon (JS) divergence between $\nu$ and $G_{xy}\#\mu$. However, it has later been shown that optimizing the JS divergence will lead to various training issues. WGAN \cite{wgan} proposed to use the Wasserstein-1 distance instead and achieved superior training stability and results. Hence, we adopt the WGAN and define $L_{gan} (G_{xy},D_{y})$ as: 
\begin{equation}
\begin{aligned}
L_{gan} (G_{xy},D_{y}) &={\mE}_{y \sim \nu}[D_{y}(y)] -{\mE}_{x \sim \mu}[D_{y}(G_{xy}(x))],
\end{aligned}
\end{equation}
where $D_{y}$ is required to be 1-Lipschitz. To impose the Lipschitz constraint, we adopt the WGAN-GP \cite{wgangp} which introduces gradient penalty on $D_{y}$:
\begin{equation}
L_{gp}(D_{y})= {\mE}_{y \sim P_{\tilde{y}}} \, [ (\lVert \nabla D_{y}(y) \rVert_2 -1)^2],
\end{equation}
where $P_{\tilde{y}}$ is the distribution of uniformly distributed linear interpolations between $y \sim \nu$ and $y' \sim G_{xy}\#\mu$.

With the constraint of $G_{xy}\#\mu=\nu$ imposed via the adversarial training, the Kantorovich problem now becomes:
\begin{equation}
\begin{aligned}
\min\limits_{G_{xy}} \max\limits_{D_{y}} L_{opt}(G_{xy}) + \lambda_{gan} L_{gan}(G_{xy},D_{y}) + \lambda_{gp} L_{gp}(D_{y}),
\end{aligned}
\end{equation}
where $\lambda_{gan}$ and $\lambda_{gp}$ are the coefficients for GAN loss and gradient penalty respectively. 

The model can be trained by the standard back-propagation algorithm with two iterative steps similar to the GANs' training: first train $D_y$ for $n_{critic}$ iterations and then train $G_{xy}$ for one iteration. 

\begin{figure}[!t]
	\centering
	\begin{subfigure}[b]{1.\linewidth}
	    \begin{subfigure}[b]{0.25\linewidth}
    		\centering
    		\includegraphics[width=0.9\linewidth]{./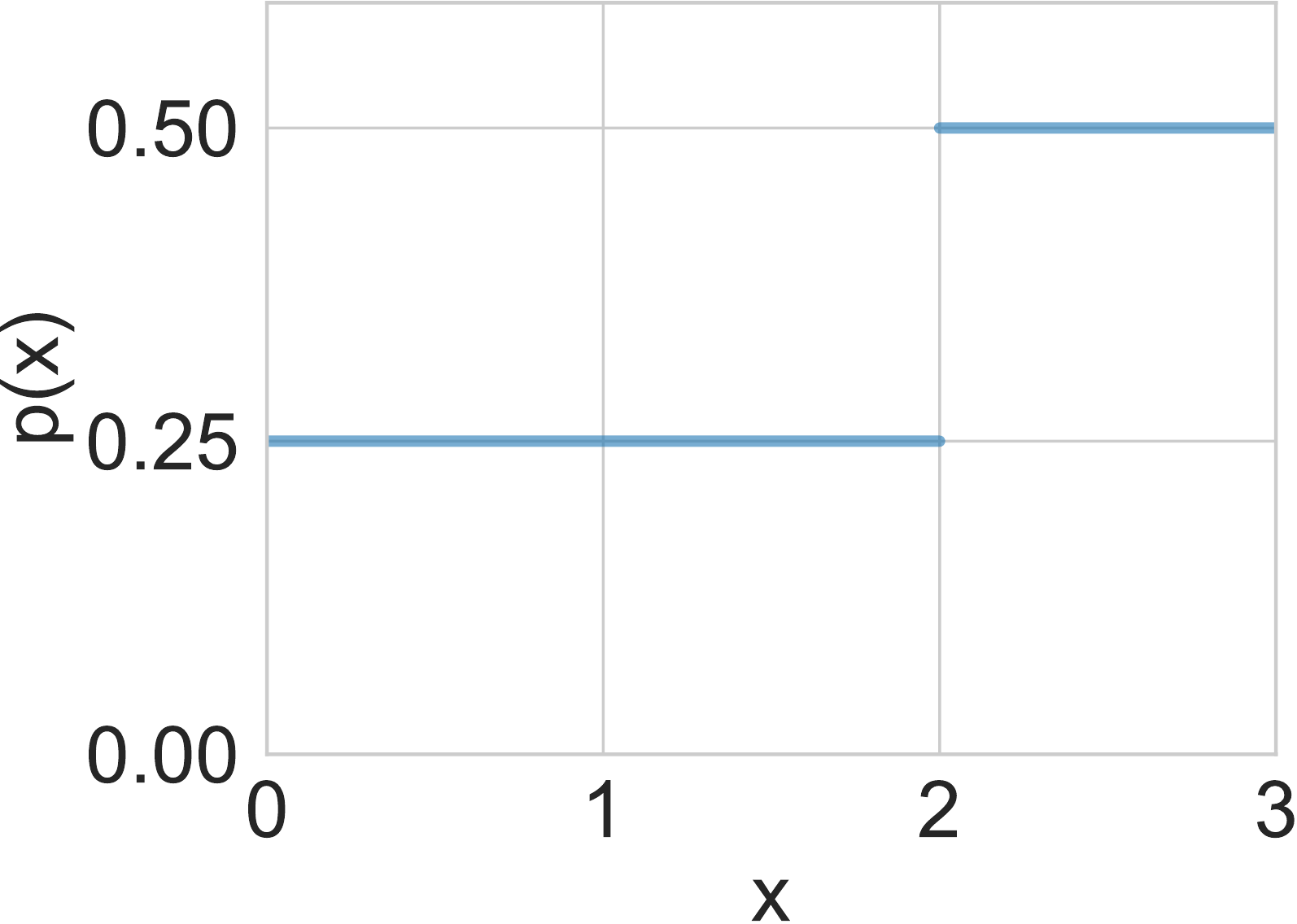}
    		\caption{Source }
    	\end{subfigure}
    	\begin{subfigure}[b]{0.25\linewidth}
    		\centering
    		\includegraphics[width=0.9\linewidth]{./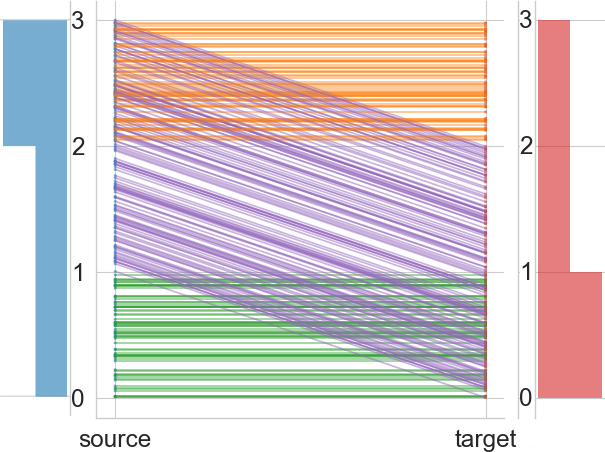}
    		\caption{Kantorovich}
    	\end{subfigure}
    	\begin{subfigure}[b]{0.25\linewidth}
    		\centering
    		\includegraphics[width=0.9\linewidth]{./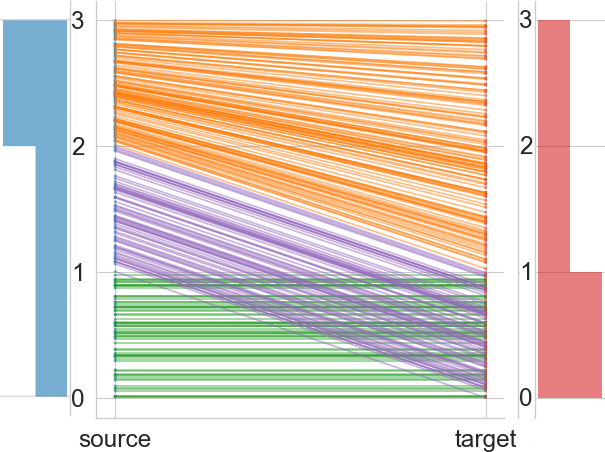}
    		\caption{Monge}
    	\end{subfigure}
    	\begin{subfigure}[b]{0.25\linewidth}
    		\centering
    		\includegraphics[width=0.9\linewidth]{./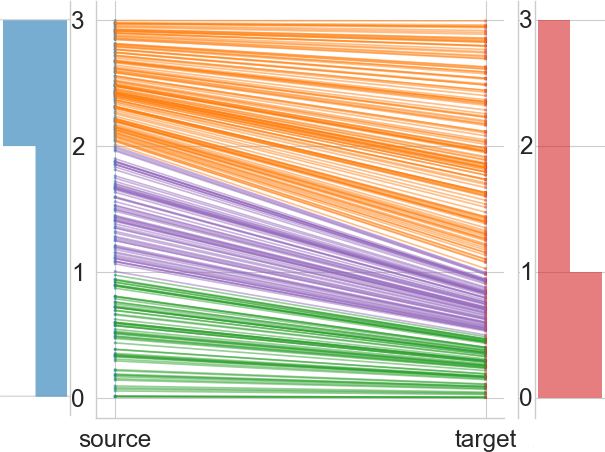}
    		\caption{Bijection}
    	\end{subfigure}
	\end{subfigure}
	\begin{subfigure}[b]{1.\linewidth}
	    \vspace{3pt} 
	    \begin{subfigure}[b]{0.25\linewidth}
    		\centering
    		\includegraphics[width=0.9\linewidth]{./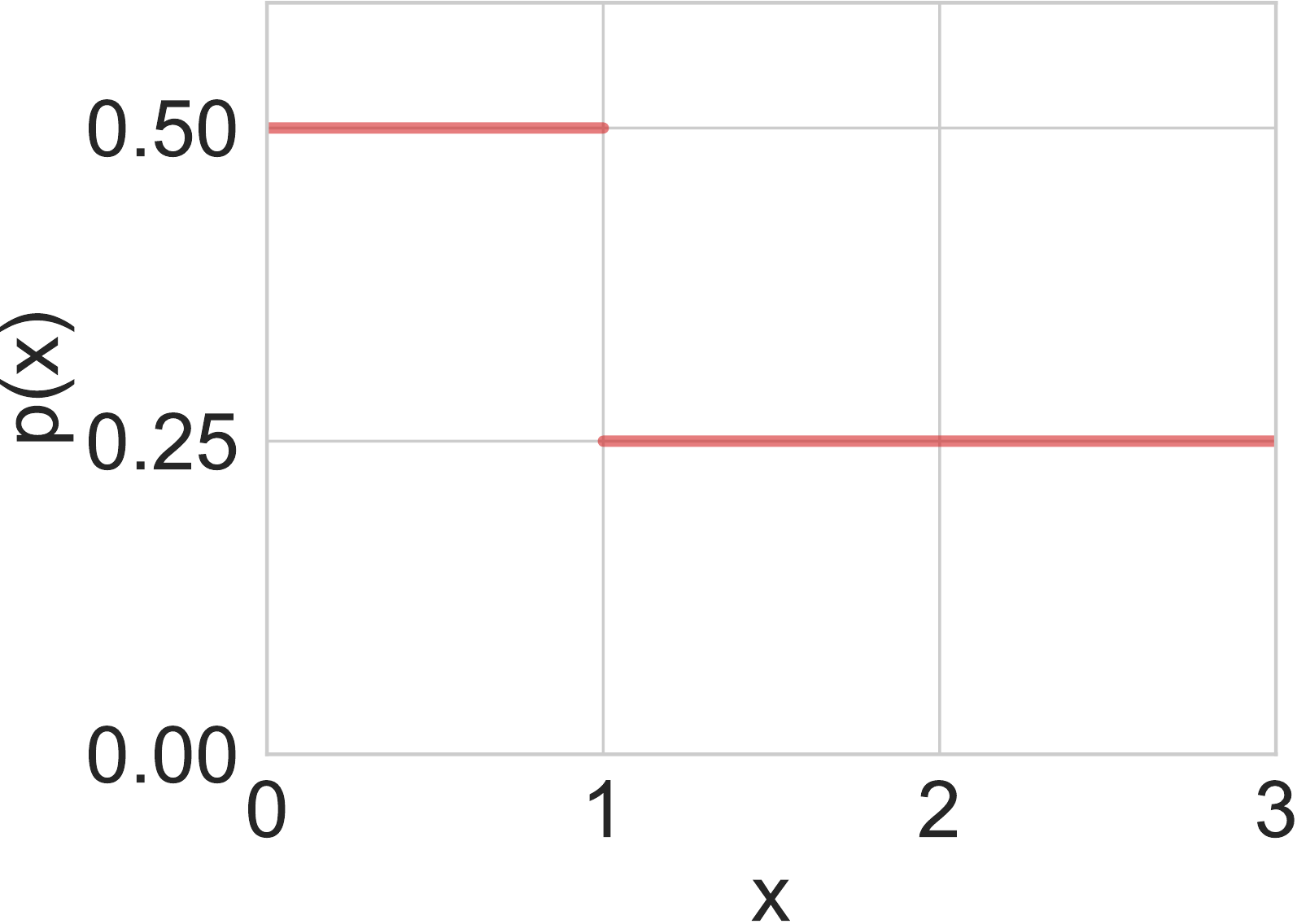}
    		\caption{Target }
    	\end{subfigure}
    	\begin{subfigure}[b]{0.25\linewidth}
    		\centering
    		\includegraphics[width=0.9\linewidth]{./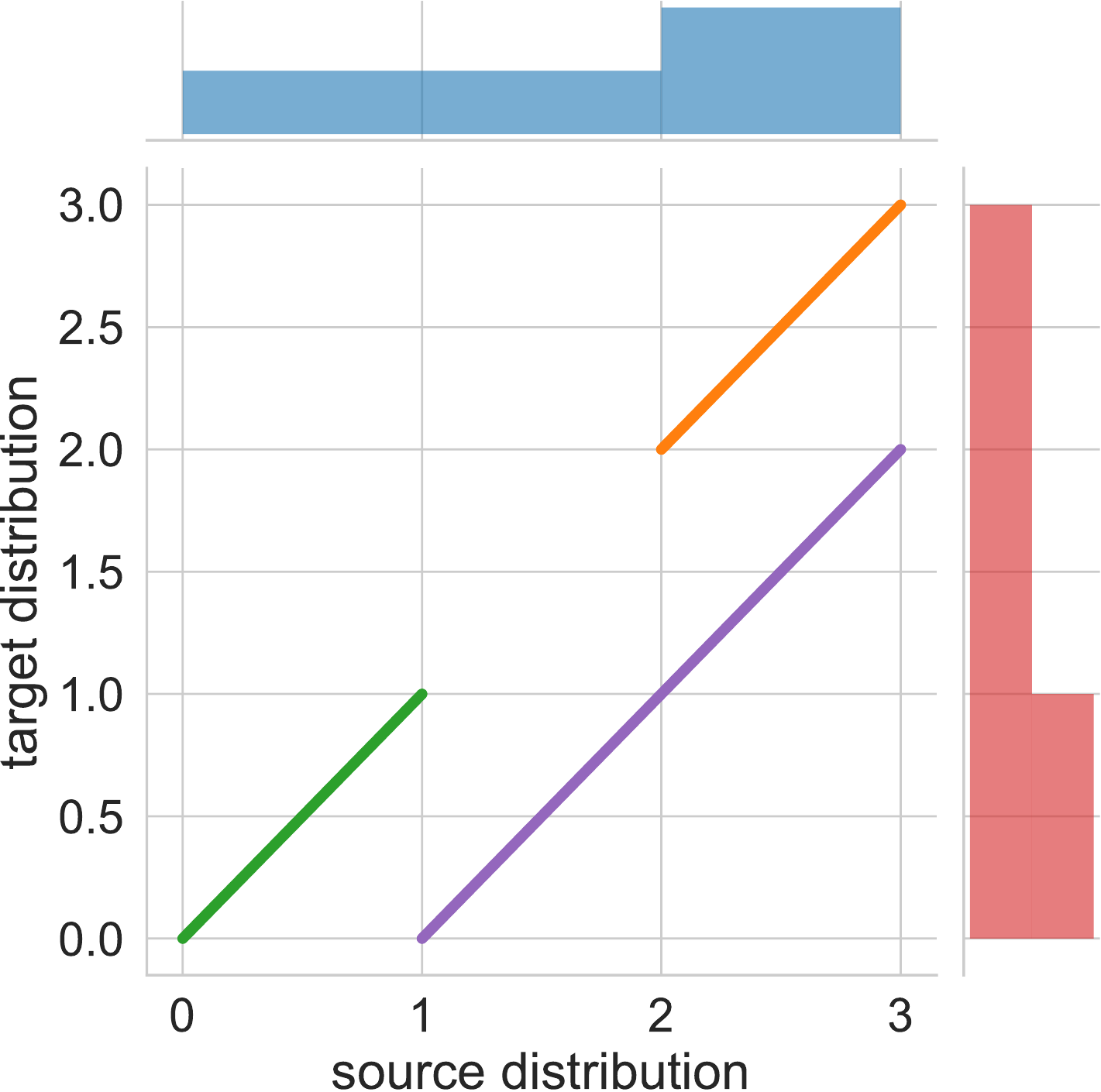}
    		\caption{Kantorovich}
    	\end{subfigure}
    	\begin{subfigure}[b]{0.25\linewidth}
    		\centering
    		\includegraphics[width=0.9\linewidth]{./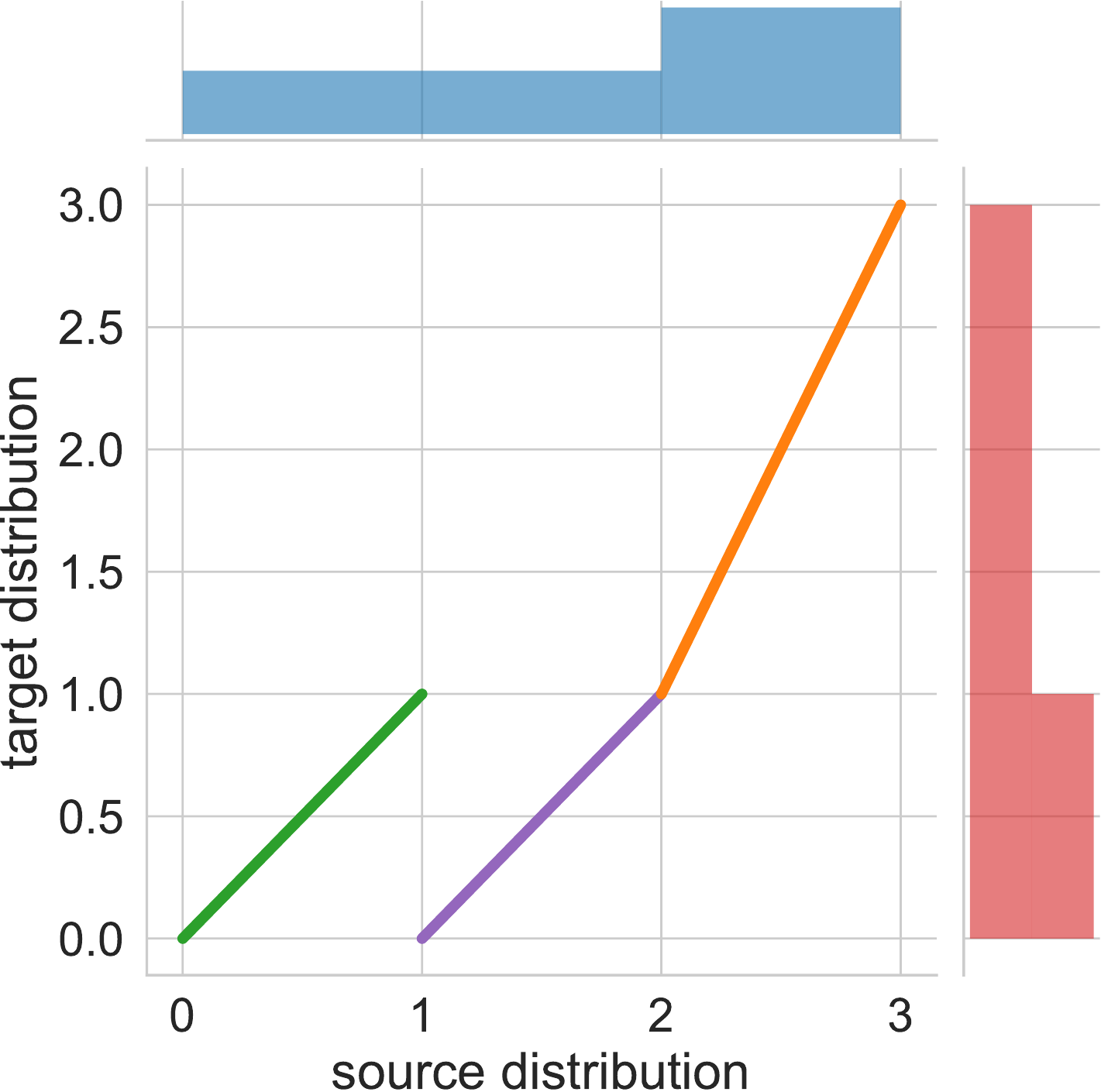}
    		\caption{Monge}
    	\end{subfigure}
    	\begin{subfigure}[b]{0.25\linewidth}
        	\centering
        	\includegraphics[width=0.9\linewidth]{./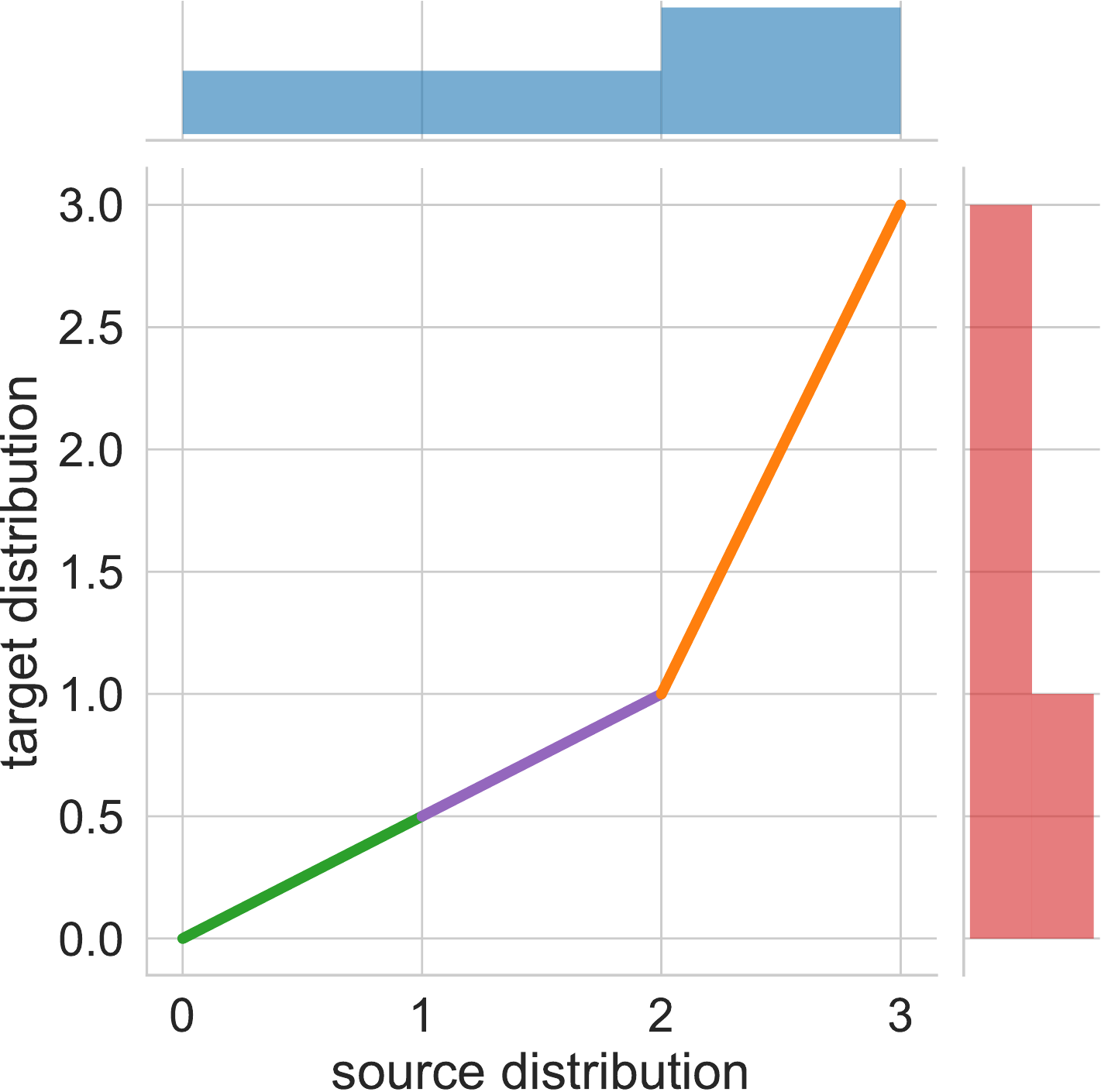}
        	\caption{Bijection}
    	\end{subfigure}
	\end{subfigure}
	\caption{Illustration for the difference between Kantorovich, Monge, and Bijection OT problem. (a) Source distribution; (e) Target distribution; $c(x, y) = \| x - y \|$. (b-d) are the line representations of optimal mapping of Kantorovich, Monge, and Bijection OT problem, respectively; (f-h) are the corresponding transport plan representations. A Kantorovich plan can be one-to-many and many-to-one, a Monge is required to be deterministic and nevertheless can be many-to-one, and a bijection is the most restrictive and further requires it to be one-to-one.} 
	\label{illustrate_monge_kantorovich}
	\vspace{-10pt}
\end{figure}

\section{Extensions with Cycle-Consistency}

Cycle-consistency deriving from the idea of dual learning \cite{duallearning} is first proposed for language translation to get rid of the requirement for paired examples and is later adopted for unsupervised image-to-image translation \cite{cyclegan,dualgan,disco}. Besides the mapping $G_{xy}$ from $X$ to $Y$, cycle-consistency introduces one more mapping $G_{yx}$ from $Y$ to $X$ and requires samples can be reconstructed after applying these two mappings sequentially, i.e., $G_{xy}(G_{yx}(y)) \approx y$ and $G_{yx}(G_{xy}(x)) \approx x$. 
The cycle-consistency loss can be formulated as:
\begin{equation}
\label{eq_one_side}
\begin{aligned}
L_{cycle}(\mu) & = \mathbb{E}_{x \sim \mu} \mathbb{E}_{y \sim G_{xy}(x)} \mathbb{E}_{\hat{x} \sim G_{yx}(y)}  [\|\hat{x} - x\|_2], \\
L_{cycle}(\nu) & = \mathbb{E}_{y \sim \nu} \mathbb{E}_{x \sim G_{yx}(y)} \mathbb{E}_{\hat{y} \sim G_{xy}(x)} [\|\hat{y} - y\|_2]. \end{aligned}
\end{equation} 

\subsection{One-Side Cycle-Consistency and Monge Solver}
\label{cycle-consistency loss}

In previous works \cite{cyclegan}, the two cycle-consistency constraints always appear together. Interestingly, we found that one-side cycle-consistency can regularize the Kantorovich solver towards learning a Monge map. Specifically, we have the following proposition: 
\begin{prop}
	\label{one-side cycle-consistency}
	Given two distributions $\mu$ and $\nu$ defined in domain $X$ and $Y$ respectively and two stochastic mappings $G_{xy}: X \rightarrow Y $ and $G_{yx}: Y \rightarrow X$. If $G_{yx}\#\nu = \mu$ and $L_{cycle}(\nu)=0$, then 
	\begin{enumerate}
		\item $G_{xy}$ becomes a deterministic mapping;
		\item $\forall ~ y_1$, $y_2$, if $y_1 \neq y_2$, then $p(G_{yx}(y_1)=G_{yx}(y_2))=0$.
	\end{enumerate}
\end{prop}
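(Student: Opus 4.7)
The plan is to unpack the two hypotheses in order: first, extract from $L_{cycle}(\nu)=0$ a pointwise deterministic cycle identity, and then transport this identity through $G_{yx}\#\nu = \mu$ to recover both conclusions.

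First I would observe that the integrand in $L_{cycle}(\nu)$ is the non-negative function $\|\hat y - y\|_2$. Since its expectation vanishes, it must vanish almost surely with respect to the joint measure on $(y,x,\hat y)$ obtained by the sequential sampling $y\sim\nu$, $x\sim G_{yx}(y)$, $\hat y\sim G_{xy}(x)$. Unfolding this by Fubini/disintegration gives the pointwise statement that for $\nu$-a.e.\ $y$ and for $G_{yx}(y)$-a.e.\ $x$, the conditional distribution $G_{xy}(x)$ is the Dirac mass $\delta_y$.

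For claim 1, I would invoke $G_{yx}\#\nu=\mu$: the $x$-marginal of the joint distribution of $(y,G_{yx}(y))$ is exactly $\mu$. Consequently the set of $x\in X$ at which $G_{xy}(x)$ is a Dirac mass carries full $\mu$-measure. Because $G_{xy}(x)$, as a distribution on $Y$, depends only on $x$, the location of this Dirac mass is a measurable function of $x$; call it $f(x)$. This gives $G_{xy}(x)=\delta_{f(x)}$ for $\mu$-a.e.\ $x$, which is exactly the determinism asserted in (1).

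For claim 2 I would re-express the cycle identity as $f(G_{yx}(y))=y$ holding for $\nu$-a.e.\ $y$ and $G_{yx}(y)$-a.e.\ $x$. Arguing by contradiction, suppose $y_1\ne y_2$ (both in $\mathrm{supp}(\nu)$) satisfy $P(G_{yx}(y_1)=G_{yx}(y_2))>0$, where the two evaluations use independent noise. This forces the measures $G_{yx}(y_1)$ and $G_{yx}(y_2)$ to share a common mass point $x^*$ on which the cycle identity can be applied from both sides, giving $f(x^*)=y_1$ and $f(x^*)=y_2$, hence $y_1=y_2$, a contradiction.

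The main obstacle, as usual, is the measure-theoretic bookkeeping that glues these almost-sure statements together: I must ensure that the exceptional $\nu$-null set on which the cycle identity fails is not the one that happens to contain my candidate $y_1,y_2$, and that the ``common coincidence point $x^*$'' produced in the contradiction for claim 2 genuinely lies in the intersection of the supports where $f$ is defined. This is handled by restricting to $y\in\mathrm{supp}(\nu)$, absorbing countably many $\mu$-null sets, and formalising the coincidence event via the disintegration of the product measure $G_{yx}(y_1)\otimes G_{yx}(y_2)$ against the diagonal; the rest of the argument is then a clean chain of substitutions.
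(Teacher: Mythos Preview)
Your proposal is correct and follows essentially the same logic as the paper's proof: both extract from $L_{cycle}(\nu)=0$ that $G_{xy}\circ G_{yx}$ reproduces $y$ almost surely, use $G_{yx}\#\nu=\mu$ to transfer this to a statement about $\mu$-a.e.\ $x$ for claim (1), and then for claim (2) argue by contradiction that a common image point $x^*$ of $G_{yx}(y_1)$ and $G_{yx}(y_2)$ would force $G_{xy}(x^*)$ to equal both $y_1$ and $y_2$. The only difference is presentational: the paper runs claim (1) as a direct contradiction (assume some $x_0$ where $G_{xy}$ splits mass, pull back through $G_{yx}\#\nu=\mu$ to a witness $y_0$, and show $L_{cycle}(\nu)>0$), whereas you first isolate the a.e.\ cycle identity and then push it forward; and the paper silently assumes the existence of a common atom $x_0$ in claim (2), which you correctly flag as needing the diagonal-disintegration argument. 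Your added measure-theoretic care (null-set bookkeeping, restriction to supports) is warranted and fills gaps the paper leaves implicit, but the underlying mechanism is identical.
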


The formal proof is included in the Appendix. As an illustrative explanation, we show in Fig. \ref{illustrate_one_side} the two cases that will be punished by $L_{cycle}(\nu)$. Therefore, it can regularize the stochastic mapping $G_{xy}$ towards a deterministic mapping and regularize $G_{yx}$ away from mapping two different target samples to the same source sample. Symmetrical results can be obtained for the other side of cycle-consistency, i.e., $L_{cycle}(\mu)$. 

Based on Proposition \ref{one-side cycle-consistency}, we propose to adopt the one-side cycle-consistency $L_{cycle}(\nu)$ to further regularize our Kantorovich solver towards learning an optimal deterministic mapping, i.e., a Monge map. 

To apply the cycle-consistency constraint $L_{cycle}(\nu)$, we need to incorporate another mapping network $G_{yx}$. And to enforce $G_{yx}\#\nu = \mu$, i.e., the pushforward measure of $\nu$ under $G_{yx}$ is $\mu$, we introduce another critic network $D_{x}$ and train $G_{yx}$ adversarially as well. 

Combining all these components, we attain the overall objective for our Monge solver: 
\begin{equation}
\begin{aligned}
\min\limits_{G_{xy},G_{yx}} \max\limits_{D_{y},D_{x}} 
& \ L_{opt}(G_{xy}) + \lambda_{gan_{xy}} L_{gan}(G_{xy}, D_{y}) + \lambda_{gp_{xy}} L_{gp}(D_{y})\\
&  + \lambda_{cycle_{\nu}} L_{cycle}(\nu) + \lambda_{gan_{yx}} L_{gan}(G_{yx}, D_{x}) + \lambda_{gp_{yx}} L_{gp}(D_{x}), 
\end{aligned}
\end{equation}
where $\lambda_{cycle_{\nu}}$ is the coefficient for the cycle-consistency loss. For training, we iterative train $D_y$ and $D_x$ for $n_{critic}$ iterations and then train $G_{xy}$ and $G_{yx}$ for one iteration. 

\begin{figure}[!t]
	\centering
	\begin{subfigure}[b]{0.5\linewidth}
		\centering
		\includegraphics[width=0.7\linewidth]{./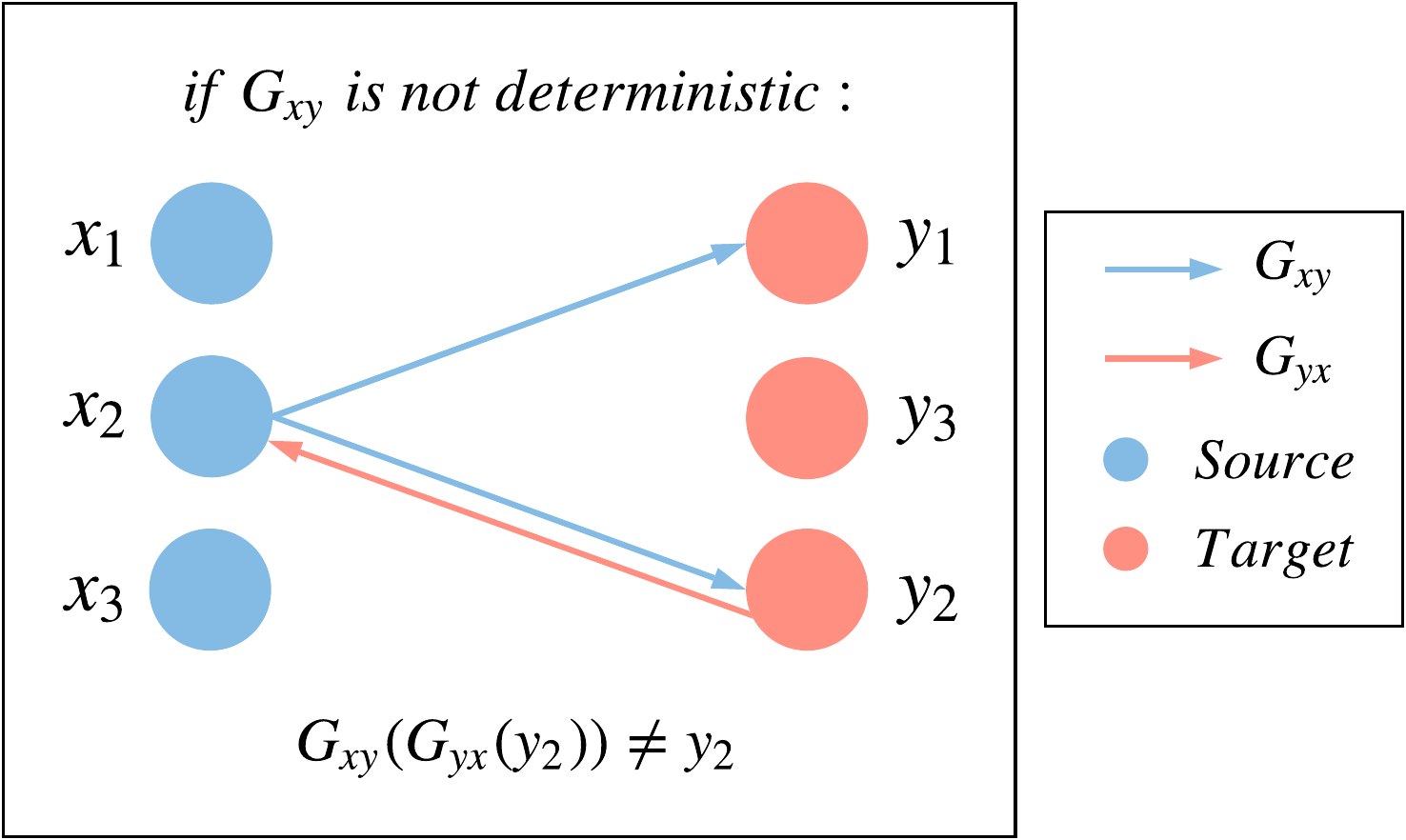}
		\label{one_side_cycle_deterministic} 
	\end{subfigure}
	\begin{subfigure}[b]{0.5\linewidth}
		\centering
		\includegraphics[width=0.7\linewidth]{./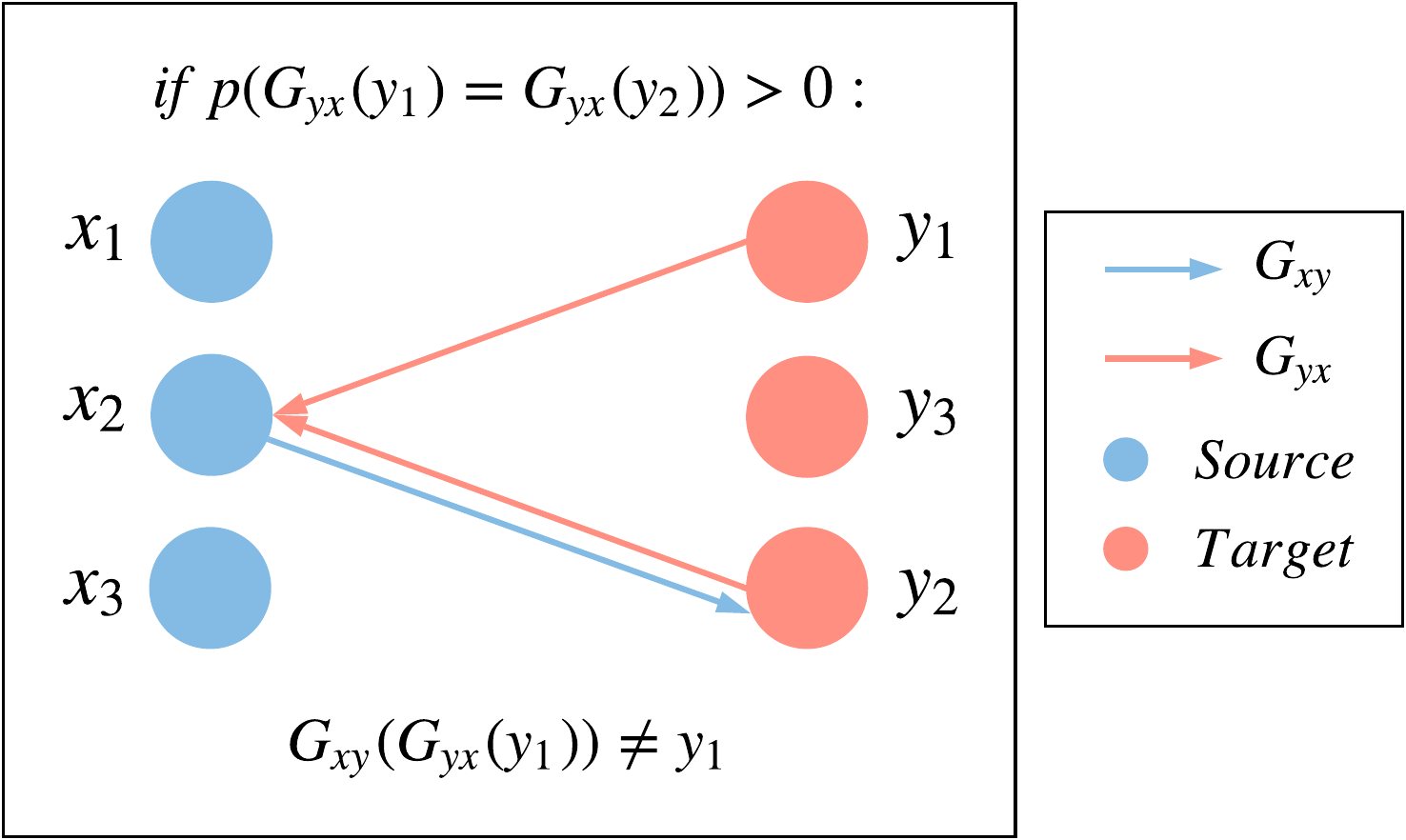}
		\label{one_side_cycle_not_collapse} 
	\end{subfigure}
	\caption{Illustration for Proposition \ref{one-side cycle-consistency}: the cases that will be punished by the one-side cycle-consistency constraint $L_{cycle}(\nu)$.}
	\label{illustrate_one_side}
	\vspace{-10pt}
\end{figure}

\subsection{Two-Side Cycle-Consistency and Optimal Bijection Solver}
\label{discussion two-side cycle-consistency}

It is known that two-side cycle-consistency can establish a one-to-one mapping (i.e., bijection) between two distributions \cite{cyclegan}. For completeness, we restate it as follows (the reorganized proof is also provided in the Appendix): 
\begin{prop}
	\label{two-side cycle-consistency}
	Given two distributions $\mu$ and $\nu$ defined in domain $X$ and $Y$ respectively and two stochastic mappings $G_{xy}: X \rightarrow Y$ and $G_{yx}: Y \rightarrow X$. If $G_{xy}\#\mu = \nu$, $G_{yx}\#\nu = \mu$, $L_{cycle}(\mu)=0$ and $L_{cycle}(\nu)=0$, then $G_{xy}$, $G_{yx}$ becomes bijections. 
\end{prop}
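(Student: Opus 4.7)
The plan is to reduce Proposition~\ref{two-side cycle-consistency} to two applications of Proposition~\ref{one-side cycle-consistency}, and then use the two cycle-consistency equations to exhibit $G_{xy}$ and $G_{yx}$ as mutual inverses on full-measure subsets. First I would apply Proposition~\ref{one-side cycle-consistency} directly: the hypotheses $G_{yx}\#\nu=\mu$ and $L_{cycle}(\nu)=0$ immediately give that $G_{xy}$ is a deterministic mapping and that $G_{yx}$ is almost-surely injective on $\nu$, in the sense that $p(G_{yx}(y_1)=G_{yx}(y_2))=0$ for $y_1\neq y_2$ drawn from $\nu$.

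Next, I would invoke the symmetric version of the same proposition, swapping the roles of source/target and of the two maps. The remaining hypotheses $G_{xy}\#\mu=\nu$ and $L_{cycle}(\mu)=0$ then yield that $G_{yx}$ is likewise deterministic and that $G_{xy}$ is almost-surely injective on $\mu$. At this point, both maps are deterministic and injective up to $\mu$- or $\nu$-null sets, so I can treat them as ordinary measurable point maps. With determinism in hand, the cycle-consistency losses collapse to the pointwise identities $G_{yx}(G_{xy}(x))=x$ for $\mu$-a.e.\ $x$ and $G_{xy}(G_{yx}(y))=y$ for $\nu$-a.e.\ $y$. Combined with the pushforward conditions, this produces full-measure subsets $X_0\subseteq X$ and $Y_0\subseteq Y$ on which $G_{xy}$ and $G_{yx}$ are inverses of each other, i.e., both are bijections between $X_0$ and $Y_0$ in the measure-theoretic sense.

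The main obstacle I anticipate is the bookkeeping of null sets. Determinism, almost-sure injectivity, and the two cycle identities each hold only outside a $\mu$- or $\nu$-null exceptional set, and I need to union them into a single exceptional set whose complement is invariant under both $G_{xy}$ and $G_{yx}$. The pushforward conditions $G_{xy}\#\mu=\nu$ and $G_{yx}\#\nu=\mu$ are the critical tool here: they let me transport a $\mu$-null set through $G_{xy}$ into a $\nu$-null set and conversely, so that iterated preimages of exceptional sets remain negligible. Once that invariance is verified, the conclusion "bijection" must be interpreted as a bijection between full-measure subsets — which is the strongest statement one can make since $G_{xy}$ and $G_{yx}$ are only specified up to modification on null sets anyway.
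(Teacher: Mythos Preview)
Your proposal is correct and shares the paper's core strategy: apply Proposition~\ref{one-side cycle-consistency} once in each direction to obtain that both $G_{xy}$ and $G_{yx}$ are deterministic and almost-surely injective. The paper then concludes via a small auxiliary lemma stating that a deterministic map which pushes $\mu$ forward to $\nu$ (hence surjective onto the support of $\nu$) and is injective must be a bijection; you instead plug determinism back into the two cycle-consistency equations to read off $G_{yx}\circ G_{xy}=\mathrm{id}$ and $G_{xy}\circ G_{yx}=\mathrm{id}$ almost everywhere and conclude the maps are mutual inverses. Both routes are equivalent and short. Your version is in fact more careful than the paper's, which treats the statement at an informal level and does not track null sets at all; your discussion of unioning the exceptional sets and using the pushforward hypotheses to keep their images negligible is the right way to make the argument rigorous, and is extra content beyond what the paper provides.
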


Two-side cycle-consistency can ensure the mapping is a bijection. However, the bijection between two distributions is generally not unique and in some applications one may prefer the bijection with best quality in some aspects.

Viewing the bijection as a transport between the two distributions, we can formulate the problem of seeking the bijection with best property as an optimal transport problem, where the transport plan/mapping is further required to be a bijection (just like Monge requires the mapping to be deterministic). We name such problem as optimal bijection transport (OBT). 

Actually, the problem of OBT has been considered by \cite{ot-cyclegan}. However, their method involves separated procedure to calculate the optimal transport and then use the solved optimal transport plan as a reference to train a CycleGAN, which falls in short in efficiency and accuracy. 

With our new perspective, to achieving the same goal, we can directly incorporate two-side cycle-consistency in our Kantorovich solver. As such, the overall objective of our optimal bijection solver is as follows: 
\begin{equation}
\begin{aligned}
\min\limits_{G_{xy},G_{yx}} \max\limits_{D_{y},D_{x}} 
& \ L_{opt}(G_{xy}) \\
& + \lambda_{cycle_{\mu}} L_{cycle}(\mu) + \lambda_{gan_{xy}} L_{gan}(G_{xy}, D_{y}) + \lambda_{gp_{xy}} L_{gp}(D_{y}) \\
& + \lambda_{cycle_{\nu}} L_{cycle}(\nu) + \lambda_{gan_{yx}} L_{gan}(G_{yx}, D_{x}) + \lambda_{gp_{yx}} L_{gp}(D_{x}),
\end{aligned}
\end{equation}
where $\lambda_{cycle_{\mu}}$ is the coefficient for the cycle-consistency loss on $\mu$. 

\subsection{Discussion}

So far, we have dealt with three OT problems that have different levels of restrictions on the transport plan. The Kantorovich formulation is relatively free and the transport plan can be one-to-many and many-to-one. The Monge formulation requires the source-to-target map to be deterministic (i.e., not one-to-many), but the map still can be many-to-one. And OBT further requires the map being bijection (i.e., one-to-one). 

In the objectives, the essential difference lies in they have no cycle-consistency, one-side cycle-consistency, or two-side cycle-consistency. And other components are the support of OT and cycle-consistency. 

Practically, we need to formulate the problem as the most suitable version of OT problem and use the corresponding solvers. For example, when only a source-to-target deterministic transport is required, a Monge solver might be sufficient and a bijection can be unnecessary and hence over constrained. And if a deterministic mapping does not benefit or it actually prefers stochastic mapping, we may simply use the Kantorovich solver. 

Finally, we should note that both our Monge solver and optimal bijection solver are regularization-based methods. There exists a trade-off between the OT objective and the cycle-consistency constraint, which can be tuned via $\lambda_{cycle}$. 

\section{Experiments}
\label{experiments}

In this section, we validate and study the performance of the proposed solvers. Synthetic experiments and large-scale real-world applications including domain adaptation, image-to-image translation, and color transfer, are considered. 

\subsection{Synthetic Experiments}
\label{toy_experiments}

\begin{figure}[!t]
	\centering
	\begin{subfigure}[b]{0.5\linewidth}
		\centering
		\hspace{25pt}
		\includegraphics[width=0.6\linewidth]{./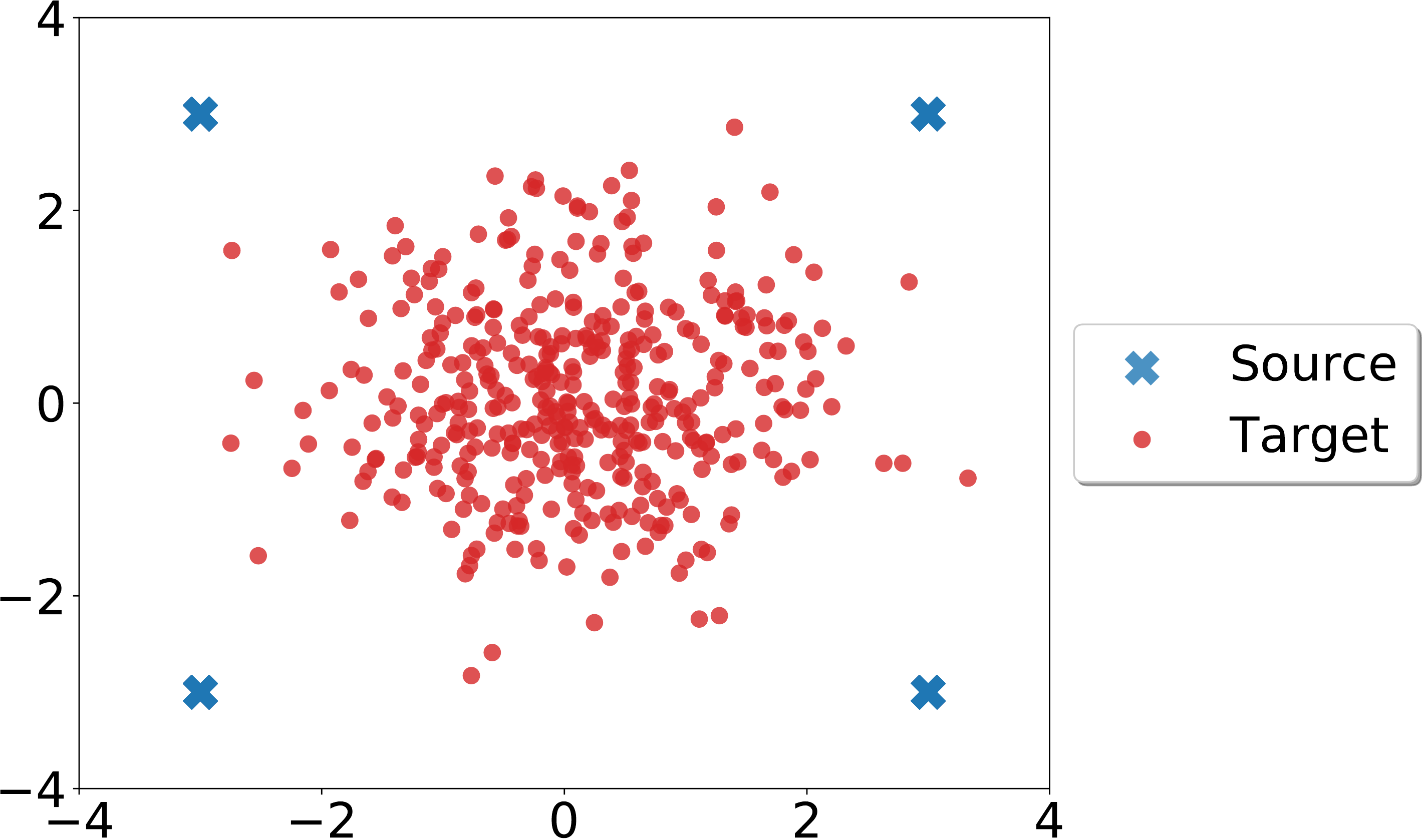} 
		\caption{Source and target samples} 
		\label{discrete2continuous_sample}
	\end{subfigure}
	\begin{subfigure}[b]{0.5\linewidth}
		\centering
		\hspace{25pt}
		\includegraphics[width=0.6\linewidth]{./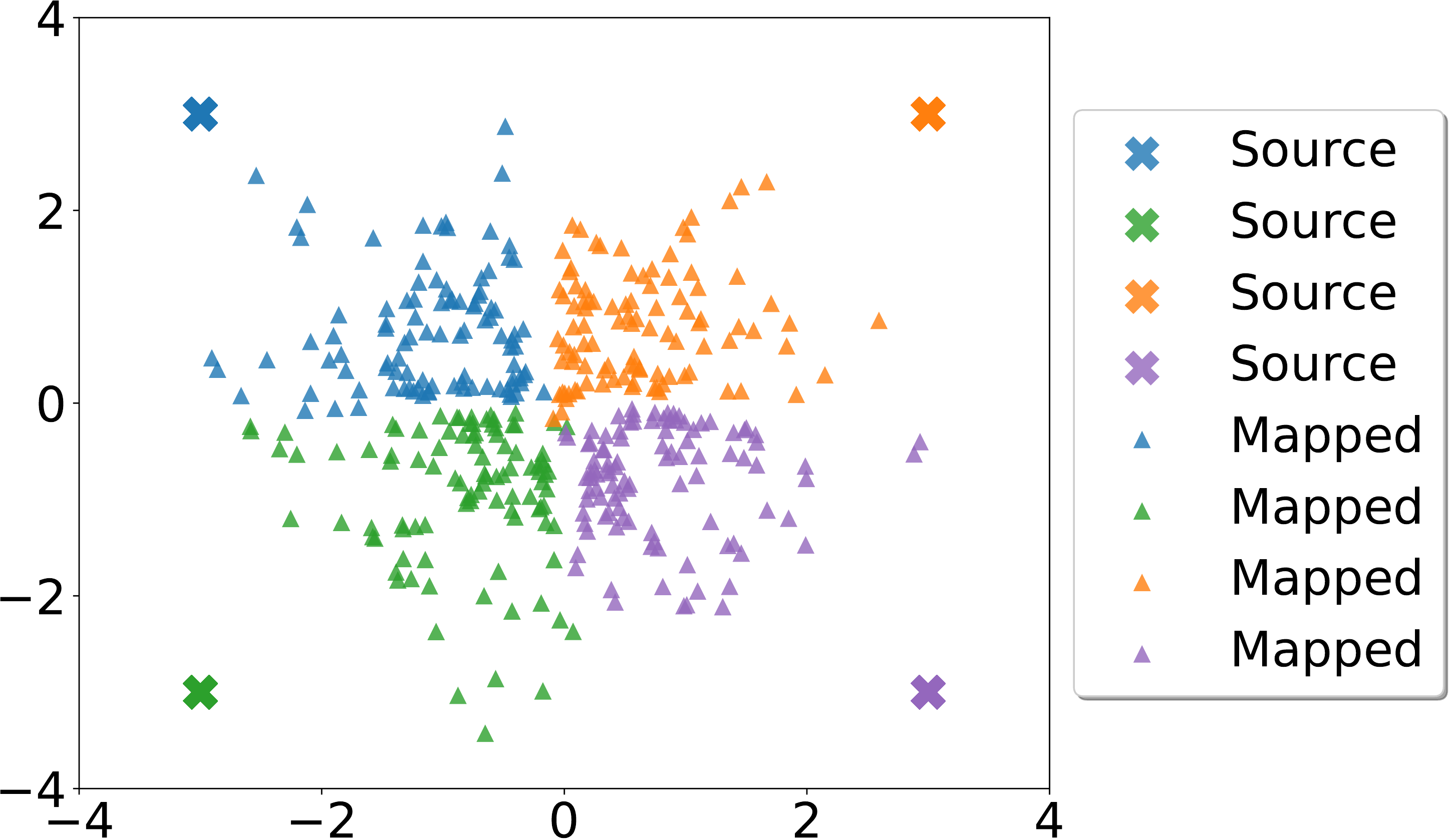} 
		\caption{Source and Mapped samples} 
		\label{discrete2continuous_mapping} 
	\end{subfigure}
	\vspace{-3pt} 
	\caption{Verifying the Kantorovich solver. With stochastic neural network, each source samples are mapped stochastically to multiple samples. The target distribution is well recovered with adversarial training.} 
	\label{discrete2continuous_ot} 
\end{figure}

\begin{figure}[!t]
	\begin{subfigure}[b]{0.33\linewidth}
		\centering
		\includegraphics[width=0.6\linewidth]{./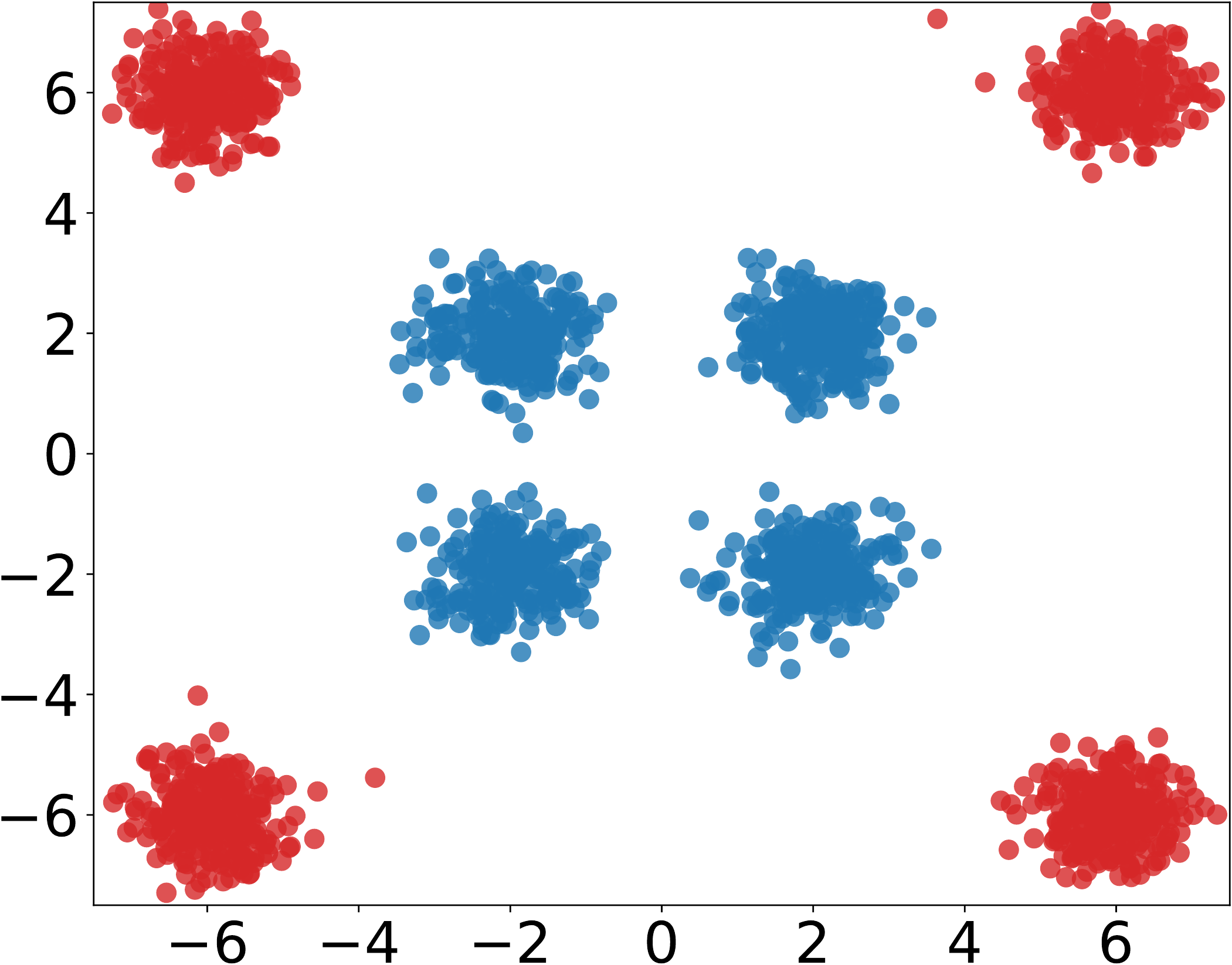} 
	\end{subfigure}
	\begin{subfigure}[b]{0.33\linewidth}
		\centering
		\includegraphics[width=0.6\linewidth]{./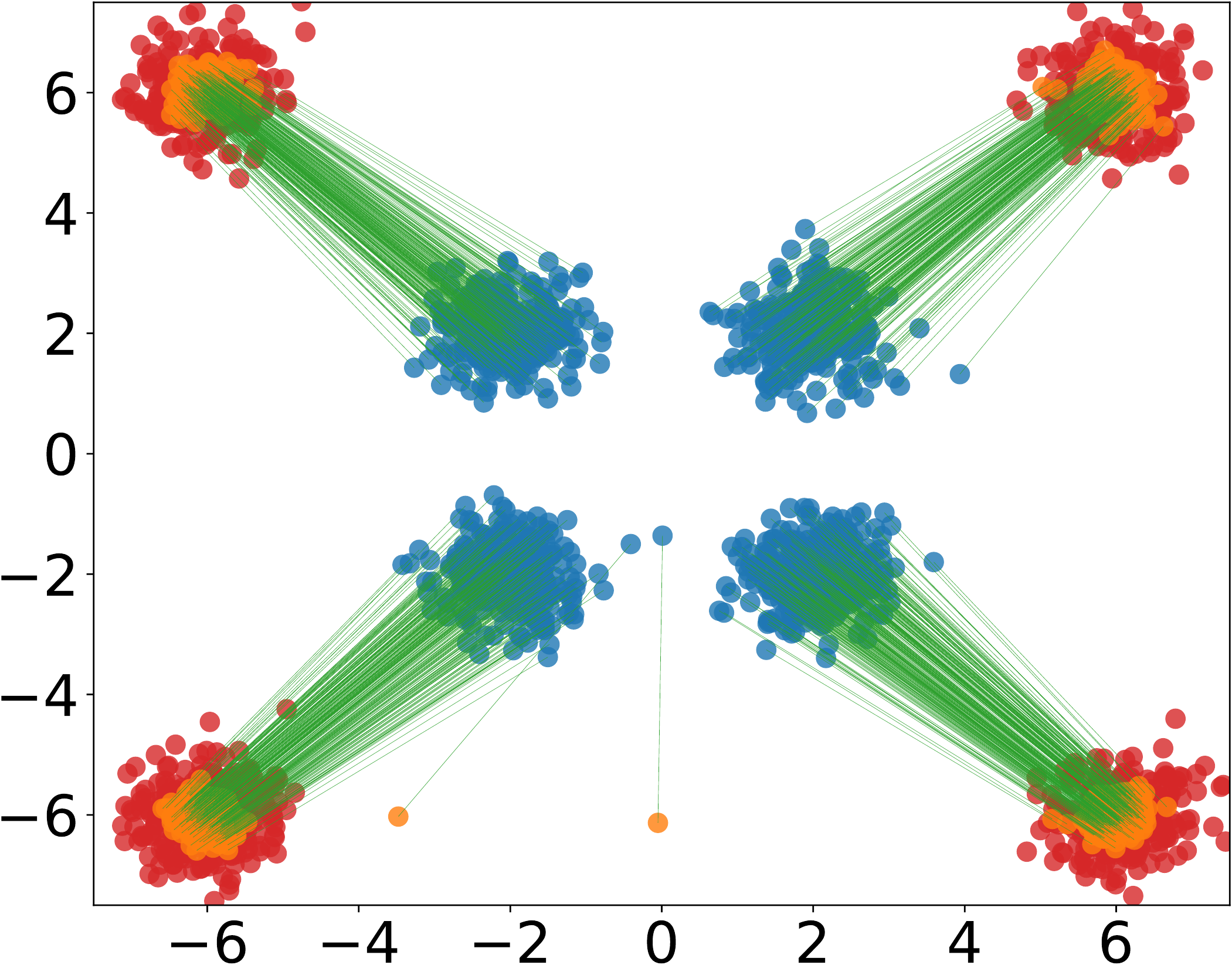} 
	\end{subfigure}
	\begin{subfigure}[b]{0.33\linewidth}
		\centering
		\includegraphics[width=0.6\linewidth]{./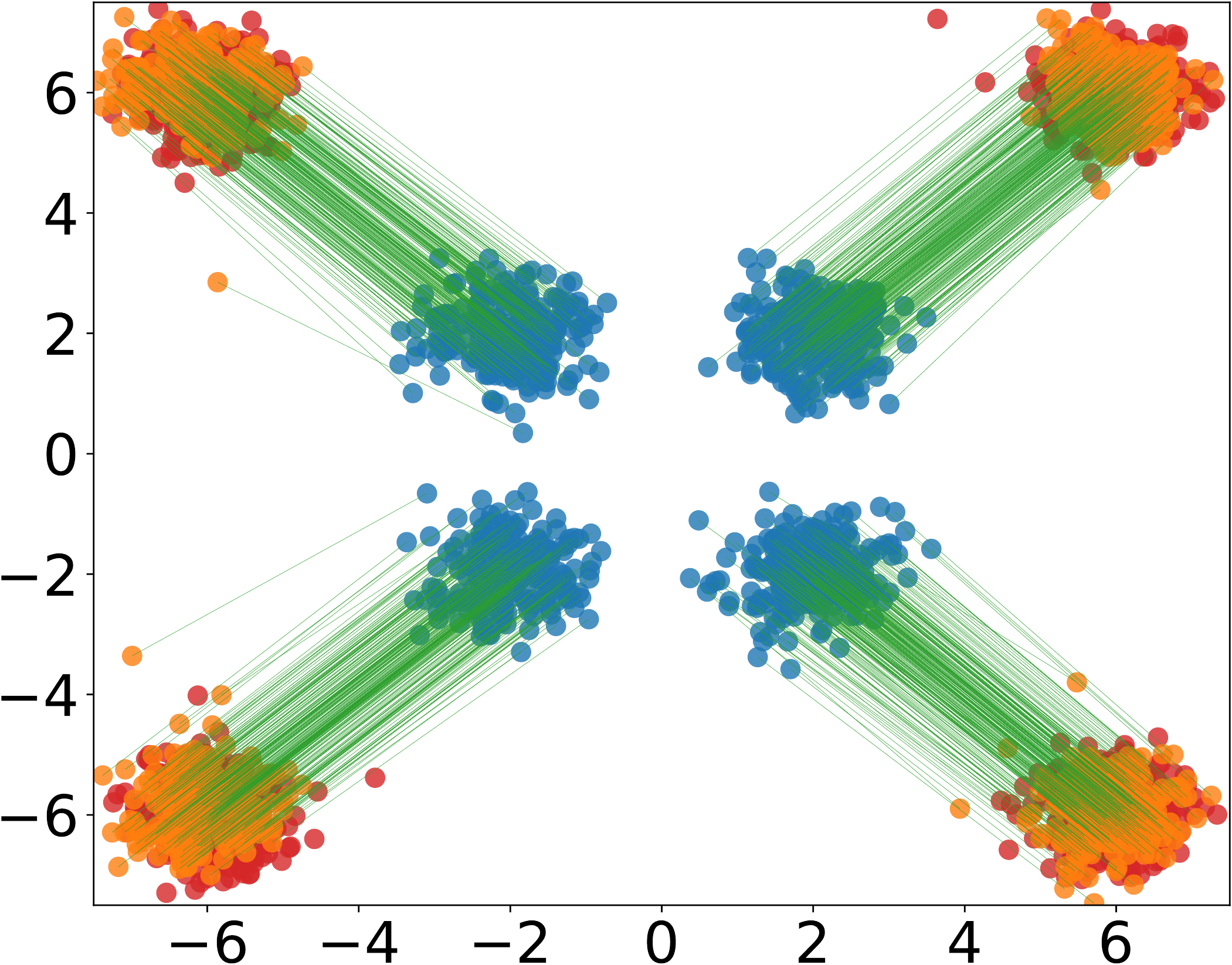} 
	\end{subfigure}
	
	\begin{subfigure}[b]{0.33\linewidth}
		\centering
		\includegraphics[width=0.6\linewidth]{./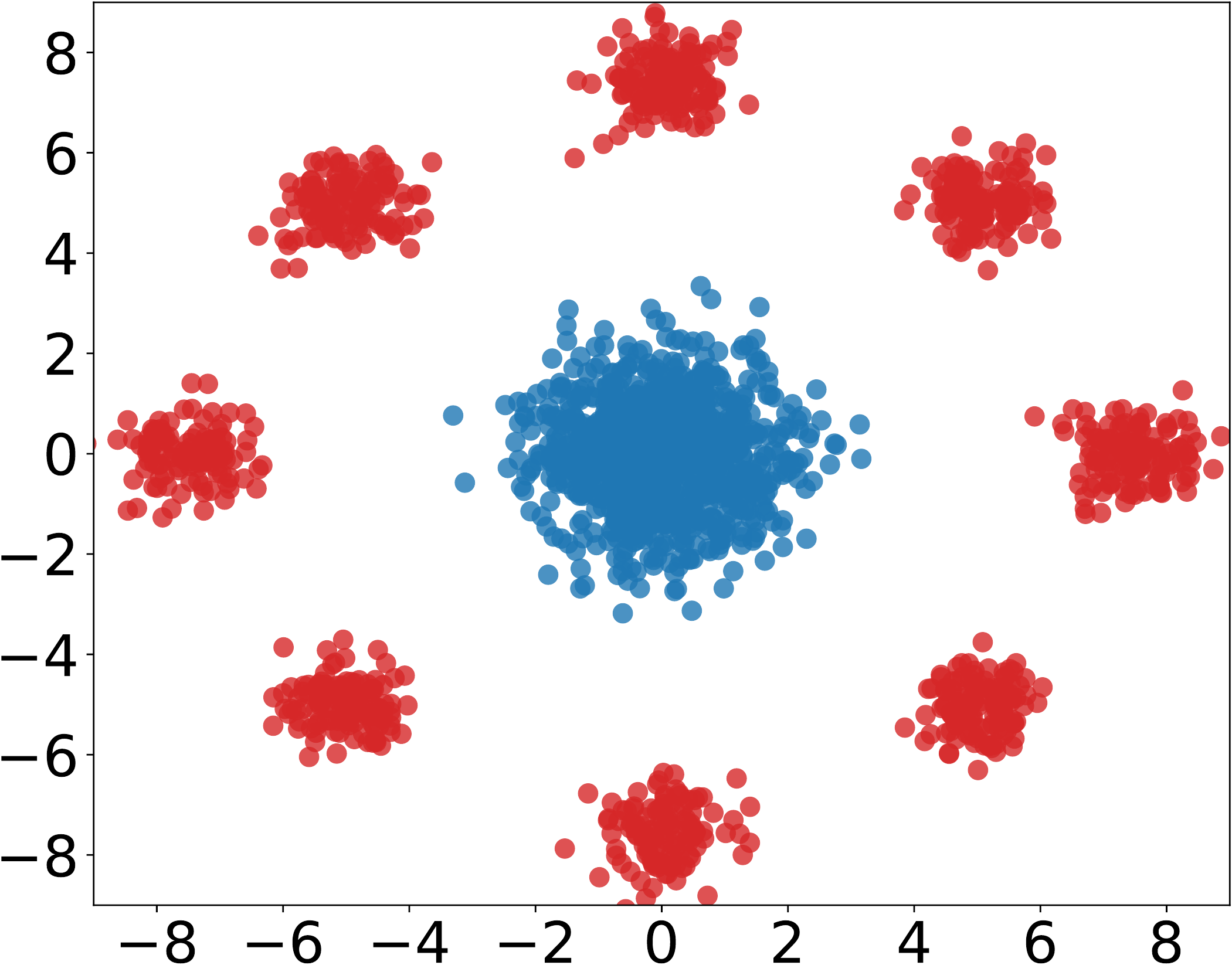} 
	\end{subfigure}
	\begin{subfigure}[b]{0.33\linewidth}
		\centering
		\includegraphics[width=0.6\linewidth]{./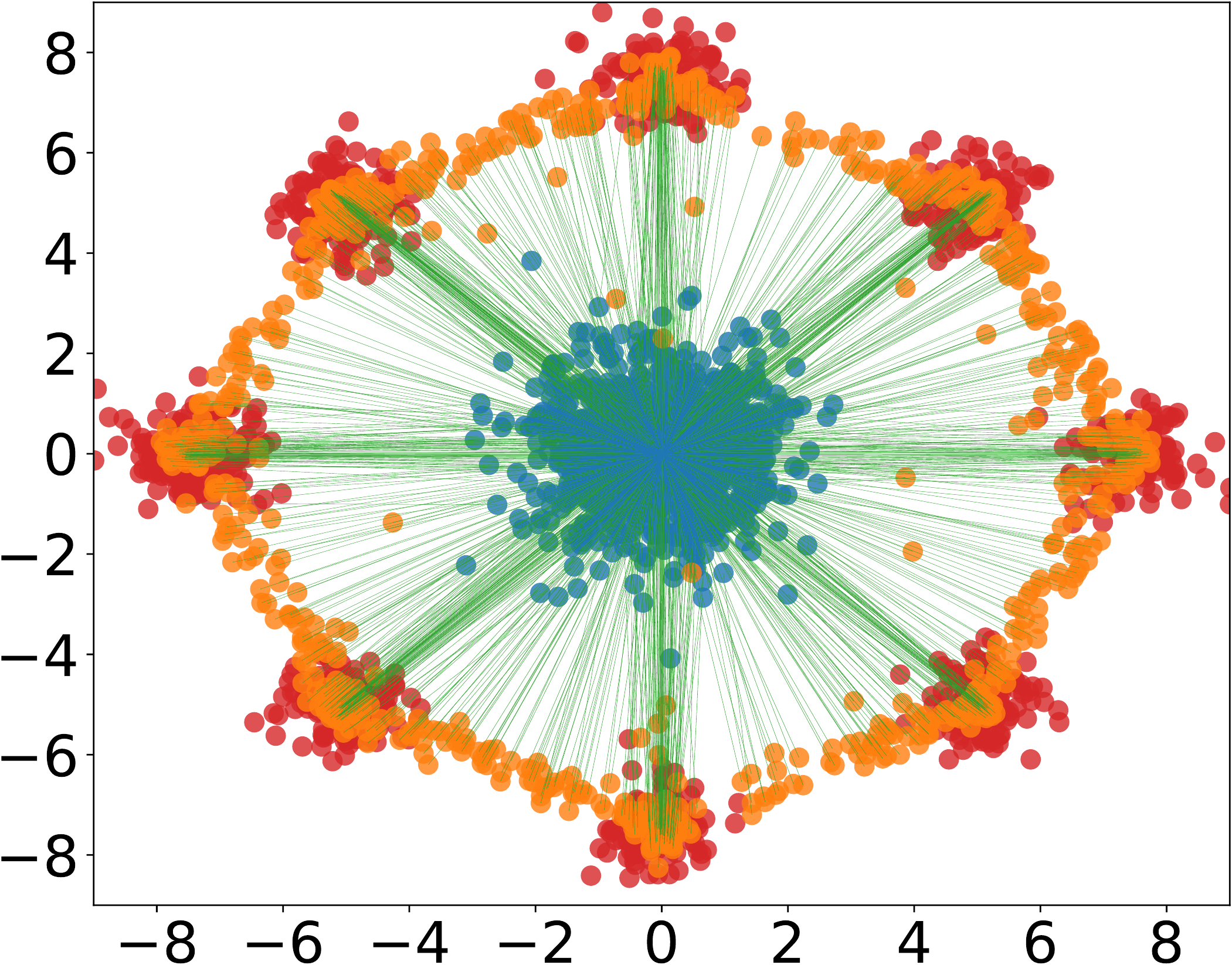} 
	\end{subfigure}
	\begin{subfigure}[b]{0.33\linewidth}
		\centering
		\includegraphics[width=0.6\linewidth]{./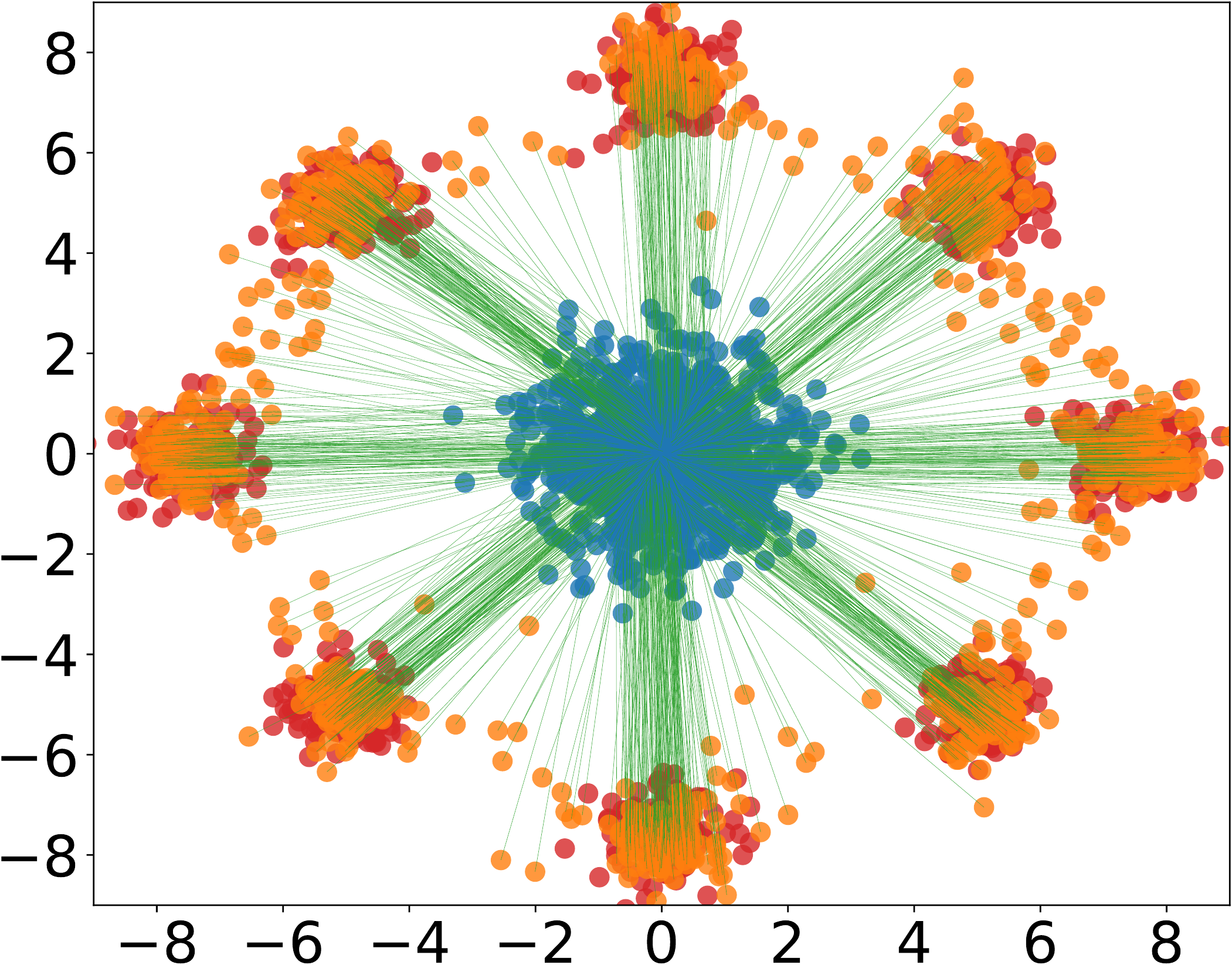} 
	\end{subfigure}
	
	\begin{subfigure}[b]{0.33\linewidth}
		\centering
		\hspace{-5pt}
		\includegraphics[width=0.6\linewidth]{./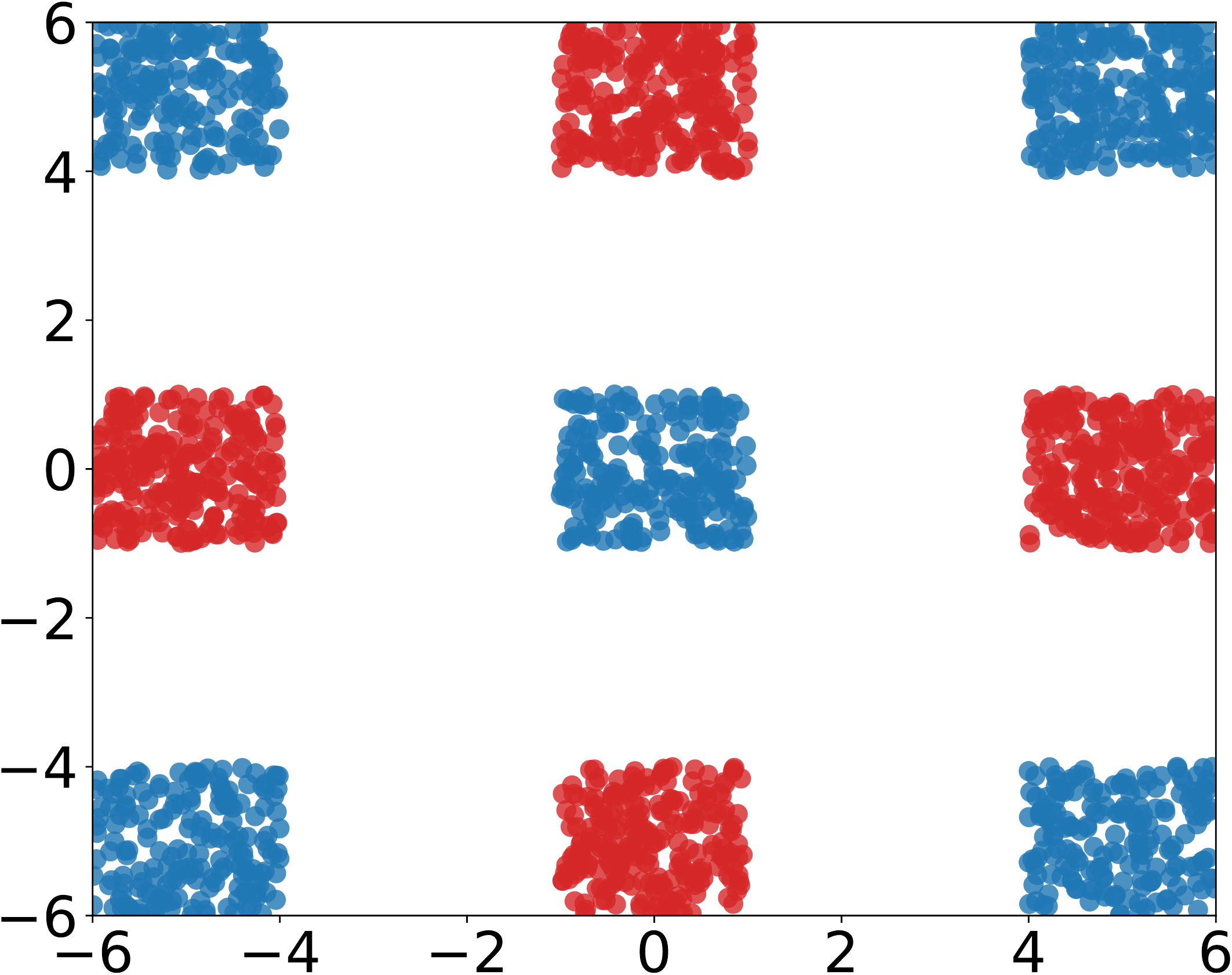} 
		\caption{Samples} 
	\end{subfigure}
	\begin{subfigure}[b]{0.33\linewidth}
		\centering
		\hspace{-5pt}
		\includegraphics[width=0.6\linewidth]{./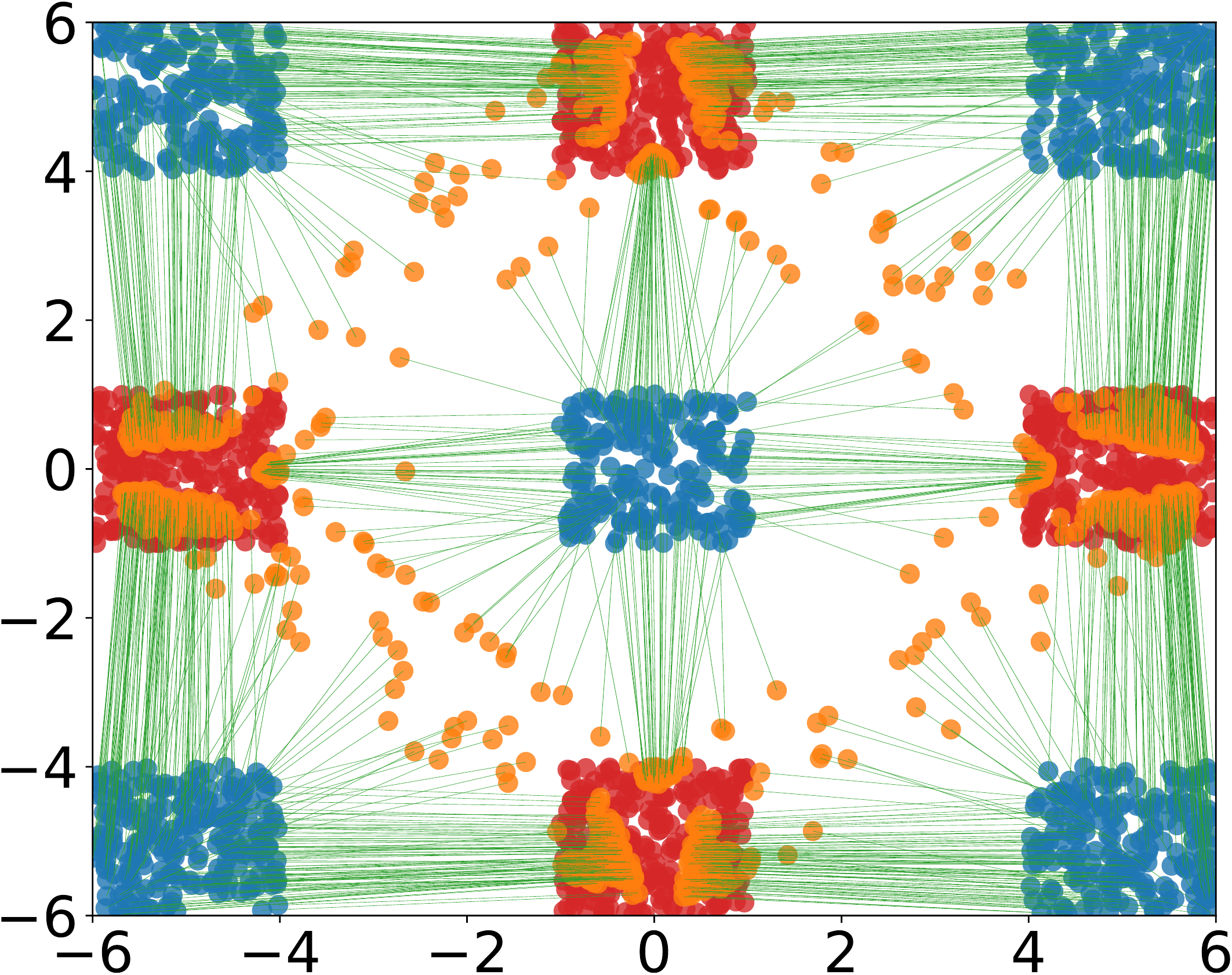} 
		\caption{BOT} 
	\end{subfigure}
	\begin{subfigure}[b]{0.33\linewidth}
		\centering
		\hspace{-5pt}
		\includegraphics[width=0.6\linewidth]{./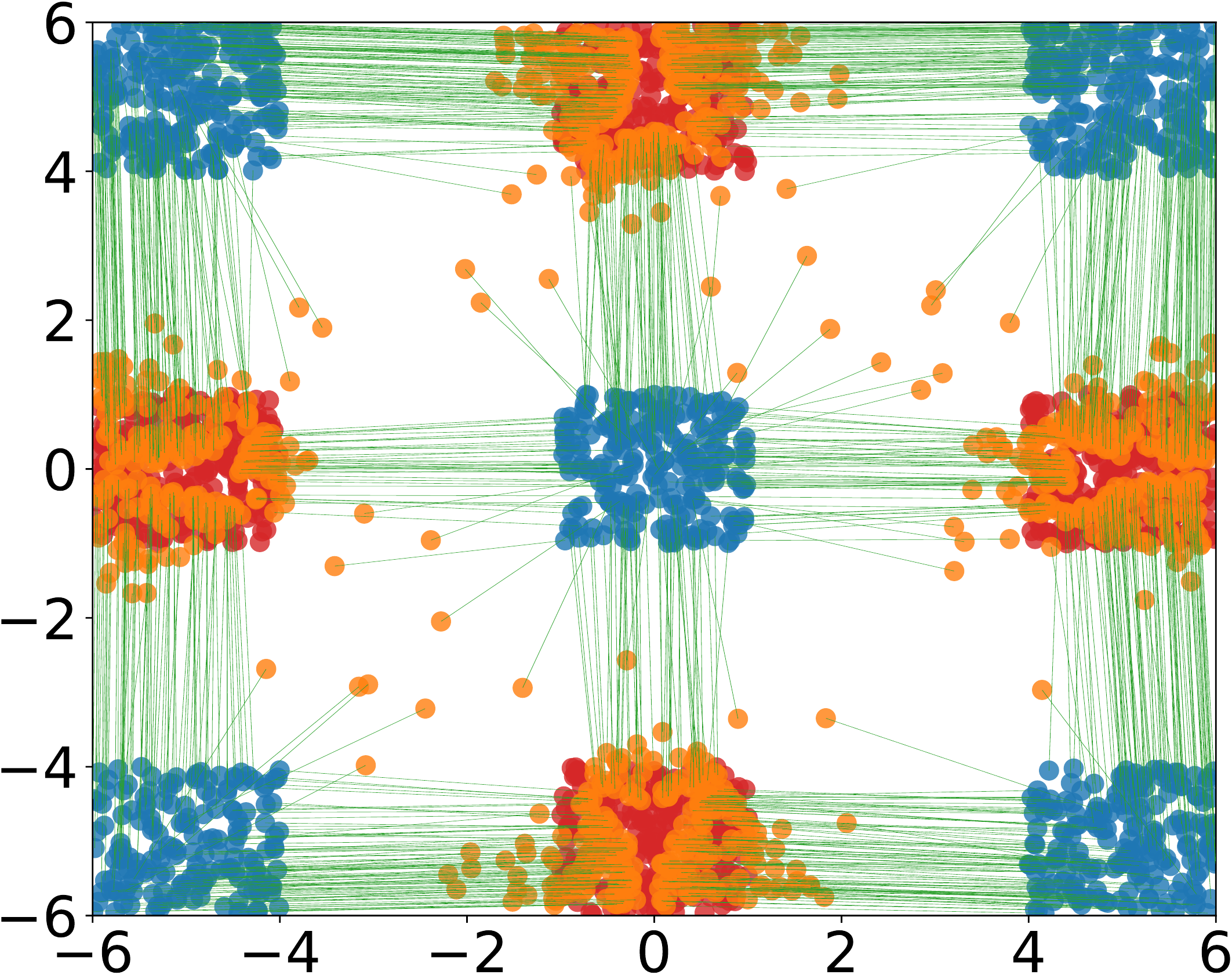} 
		\caption{K-solver} 
	\end{subfigure}
	\vspace{-3pt} 
	\caption{Mappings learned by BOT and our Kantorovich solver on three 2D examples. Blue: source samples. Red: target samples. Orange: mapped samples. Green: the mapping. BOT exhibits collapse and out of distribution samples. Kantorovich solver achieves better performance in general.}
	\label{comparison_mapping}
	\vspace{-15pt}
\end{figure}

To study the effectiveness of stochastic neural network and adversarial training, we consider the OT problem from a discrete distribution to a continuous distribution. As shown in Fig. \ref{discrete2continuous_sample}, the discrete distribution is a uniform distribution supported on 4 discrete points: (-3, -3), (-3, 3), (3, -3) and (3, 3), while the continuous distribution is the standard Gaussian distribution $\mathcal{N}((0, 0)^T, I)$. We adopt the squared Euclidean distance $c(x, y) = \| x - y \| ^ 2$ as the cost function. 

\noindent 
Fig. \ref{discrete2continuous_mapping} shows the result of the Kantorovich solver. The mapped samples from the same source are marked with the same color. As we can see, with the stochastic neural network, each source sample is stochastically mapped to multiple samples. Besides, the target distribution is well recovered with adversarial training. Note that Monge and bijection mapping are not reasonable requests in this setting. 

To demonstrate the superior accuracy of our framework, we further compare it against BOT (Barycentric-OT) \cite{large-scale} on three 2D examples, including: (i) 4-Gaussian: both source and target are mixtures of 4 Gaussians, and the mixture centers of source are closer to each other than those of target; (ii) 8-Gaussian: source is the standard Gaussian and target is mixture of 8 Gaussians; (iii) Checkerboard: source and target are mixtures of uniform distributions over 2D squares of 5 and 4 chucks respectively and the mixture centers of source and target form an alternating checkerboard pattern. In this experiments, the cost function is also the squared Euclidean distance $c(x, y) = \| x - y \| ^ 2$. 

Fig. \ref{comparison_mapping} shows the results. The learned maps of BOT are noticeably collapsed in the case of 4-Gaussians and Checkerboard, and there are a large number of out of distribution mapped samples in the case of 8-Gaussian. This is because BOT learns the map by approximating the barycentric mapping. In contrast, the proposed solvers achieve better performance in general. We show the results of the Monge solver and the Bijection solver in the Appendix.

\subsection{Unsupervised Domain Adaptation}
\label{sec_exp_da}

\begin{table}[!t]
	\centering
	\vspace{-10pt}
    \caption{Results on domain adaptation among digit datasets.} 
	\vspace{5pt}
	\begin{tabular}{c|c|c|c|c}
		\hline 
		\multirow{2}{*}{Method} & ~~~~MNIST~~~~ & ~~~~USPS~~~~ & ~~~~SVHN~~~~ & ~~~~MNIST~~~~ \\ 
		                       & USPS & MNIST & MNIST & MNISTM \\
		\hline		
		
		\rule{0pt}{2.0ex}
		Source only & 81.5\% & 47.9\% & 80.8\% & 61.6\% \\
		\hline
		
 		\rule{0pt}{2.0ex}
 		CoGAN \cite{cogan} & 91.2\% & 89.1\% & - & - \\
 		\hline

 		\rule{0pt}{2.0ex}
 		ADDA \cite{adda} & 89.4\% & 90.1\% & 76.0\% & - \\
 		\hline

 		\rule{0pt}{2.0ex}
 		UNIT \cite{unit} & 96.0\% & 93.6\% & 90.5\% & - \\
 		\hline

 		\rule{0pt}{2.0ex}
 		CyCADA \cite{cycada} & 95.6\% & 96.5\% & 90.4\% & - \\
 		\hline

		\rule{0pt}{2.0ex}
		BOT \cite{large-scale} & 72.6\% & 60.5\% & 62.9\% & - \\
		\hline
		
		\rule{0pt}{2.0ex}
		StochJDOT \cite{stochjdot} & 93.6\% & 90.5\% & 67.6\% & 66.7\% \\
		\hline
		
		\rule{0pt}{2.0ex}
		DeepJDOT \cite{deepjdot} & 95.7\% & 96.4\% & \bf{96.7\%} & 92.4\% \\
		\hline
		
		\rule{0pt}{2.0ex}
		SPOT \cite{spot} & 97.5\% & 96.5\% & 96.2\% & 94.9\% \\
		\hline
		
		\rule{0pt}{2.0ex}
		K-solver & \bf{99.0\%} & \bf{97.1\%} & {95.7}\% & \bf{98.2\%} \\
		\hline

		\rule{0pt}{2.0ex}
		M-solver & {99.0\%} & {96.7\%} & {95.8}\% & {97.4\%} \\
		\hline

		\rule{0pt}{2.0ex}
		B-solver & {98.9\%} & {96.6\%} & \bf{96.7}\% & {97.5\%} \\
		\hline

		\rule{0pt}{2.0ex}
		Target only & 98.2\% & 99.0\% & 99.0\% & 96.1\% \\
		\hline
	\end{tabular}
	\label{domain_adaptation_table}
	\vspace{-10pt}
\end{table}

In domain adaptation, labeled data for a task are available in the source domain and there are only unlabeled data in the target domain. The objective of domain adaptation is to address the lack of labeled data problem and learn a well-performing model in the target domain based on these data. 

In this section, we explore OT for domain adaptation. To adapt the class labels from the source domain to the target domain, we learn an optimal mapping between the samples from the source domain and the samples from the target domain. Follow the common choice \cite{cycada,spot}, we define the cost function to be the cross-entropy $\mathcal{H}$ between the label of the source sample and the label prediction of the translated target sample: 
\begin{equation}
c(x, y) = \mathcal{H}(C_x(y), l(x)), 
\end{equation}
where $C_{x}$ is a pre-trained classifier on the source, $l(x)$ denotes the class label of $x$. For more training details, please refer to the Appendix. 

We perform domain adaptation between four digit image datasets: MNIST \cite{mnist}, USPS \cite{usps}, SVHN \cite{svhn}, and MNISTM \cite{mnistm}, and consider the following four adaptation directions: MNIST-to-USPS, USPS-to-MNIST, MNIST-to-MNISTM, and SVHN-to-MNIST. We compare our methods with various baselines, including BOT \cite{large-scale}, StochJDOT \cite{stochjdot}, DeepJDOT \cite{deepjdot}, SPOT \cite{spot}, CoGAN \cite{cogan}, ADDA \cite{adda}, UNIT \cite{unit} and CyCADA \cite{cycada}. 

The results are shown in Table \ref{domain_adaptation_table}. Here we also include the ``Source only'' and ``Target only'', which is the resulting accuracy of classifiers that trained with labeled source data and labeled target data respectively. They can be used as the empirical lower bound and upper bound. As we can see, domain adaptation based on our methods achieve large performance improvements over ``Source only'' on all tasks and approach the ``Target only'' results. And compared with other baseline methods, our ones generally achieve superior performances. 

For the task of domain adaptation, deterministic or bijection is not essential requirement, as a source is mapped to multiple targets with same label is acceptable. So, we think it is is understandable that our three solver share similar performance in this task. But we do can tell that the Monge solver and optimal bijection solver is slightly worse than the Kantorovich solver. We will see the similar in unsupervised image-to-image translation. We understand it as the cycle-consistency constraint when unnecessarily introduced will drive the objective a little bit towards unnecessary property thus degenerate its performance. 

\subsection{Unsupervised Image-to-Image Translation}
\label{sec_exp_image2image}

Image-to-image translation aims to establish a desired mapping between two image distributions so as to translate images in the source domain to images in the target domain. In a supervised case, such a desired mapping is defined by a large number of paired examples. 

OT can be used for unsupervised image-to-image translation via attaining an optimal mapping from the source image distribution to the target image distribution. For different tasks, different cost function $c(x,y)$ can be accordingly designed to reflect the desired property.
Compared with other unsupervised approaches, like CycleGAN \cite{cyclegan}, OT-based methods can better control the map towards being the desired. 

We perform image-to-image translation on the following two tasks: Edges-to-Handbags \cite{dataset_handbag} and Handbags-to-Shoes \cite{dataset_shoes}. For the first task, we expect the translated sample to be of similar sketch with the input sample and therefore design the cost function as the $L_2$ norm between feature maps extracted through different convolution kernels for edge detection. For the second, we expect the color of translated sample to be similar to the input sample and therefore define the cost function as the mean squared distance between the average color vectors. 

We compare our solvers against CycleGAN \cite{cyclegan}, BOT \cite{large-scale}, and OT-CycleGAN \cite{ot-cyclegan} with different reference coefficients $\lambda_{ref}$. Note that SPOT \cite{spot} does not learn a transport map and is thus not applicable for this task. We use the Kernel Inception Distance (KID) \cite{kid_score} and mismatching degree \cite{ot-cyclegan} to quantitatively evaluate different methods. KID computes the squared maximum mean discrepancy (MMD) between target distribution and distribution of the mapped images in the feature space, where the feature is extracted from the Inception network architecture \cite{inceptionnet}. Mismatching degree measures the average difference between source and corresponding mapped images. Both metrics are the lower the better. 

\begin{figure*}[!t]
	\begin{subfigure}[t]{0.5\textwidth}
		\centering
		\includegraphics[width=0.98\textwidth]{./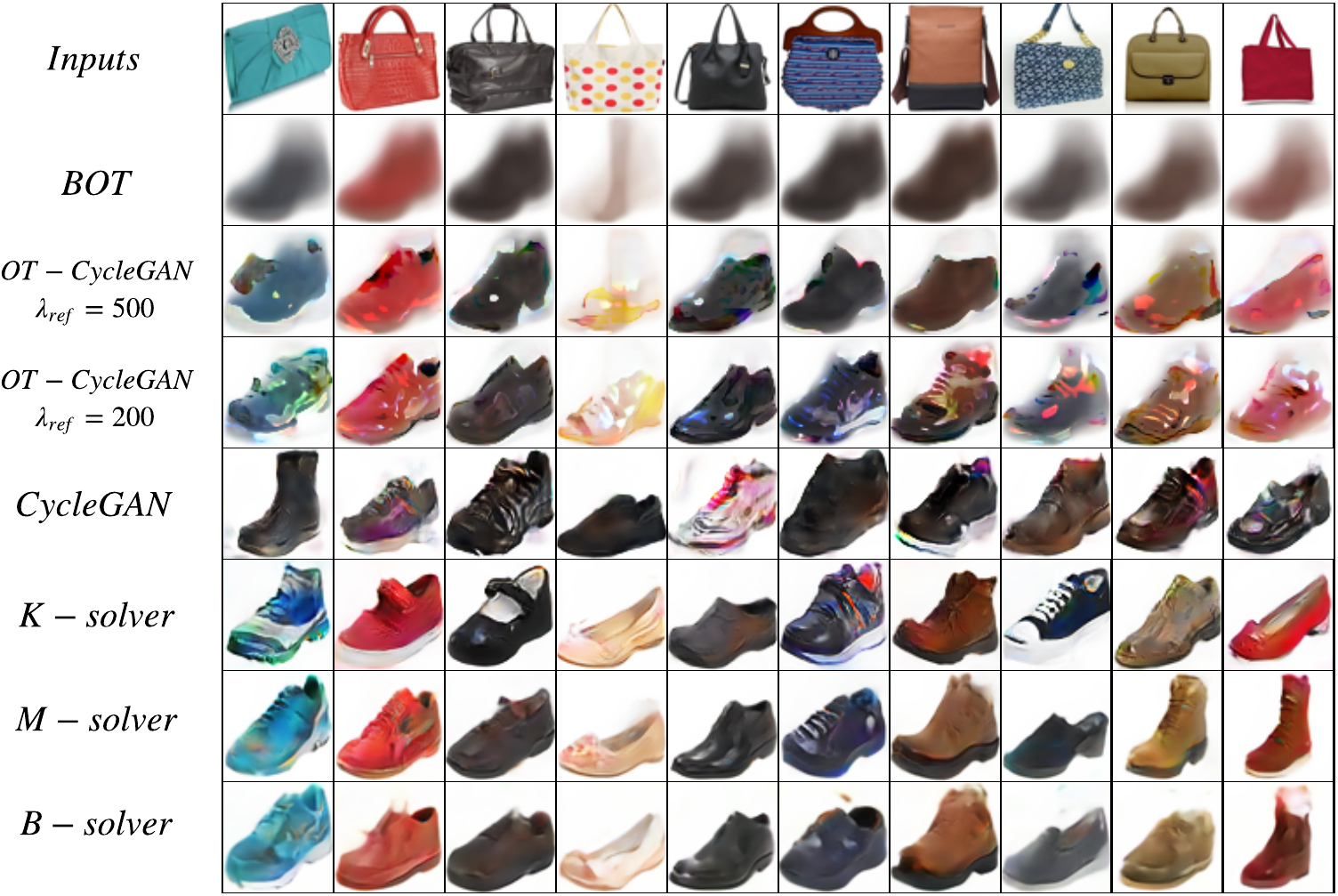}
		\caption{Handbags-to-Shoes}
	\end{subfigure}
	\begin{subfigure}[t]{0.5\textwidth}
		\centering
		\includegraphics[width=0.98\textwidth]{./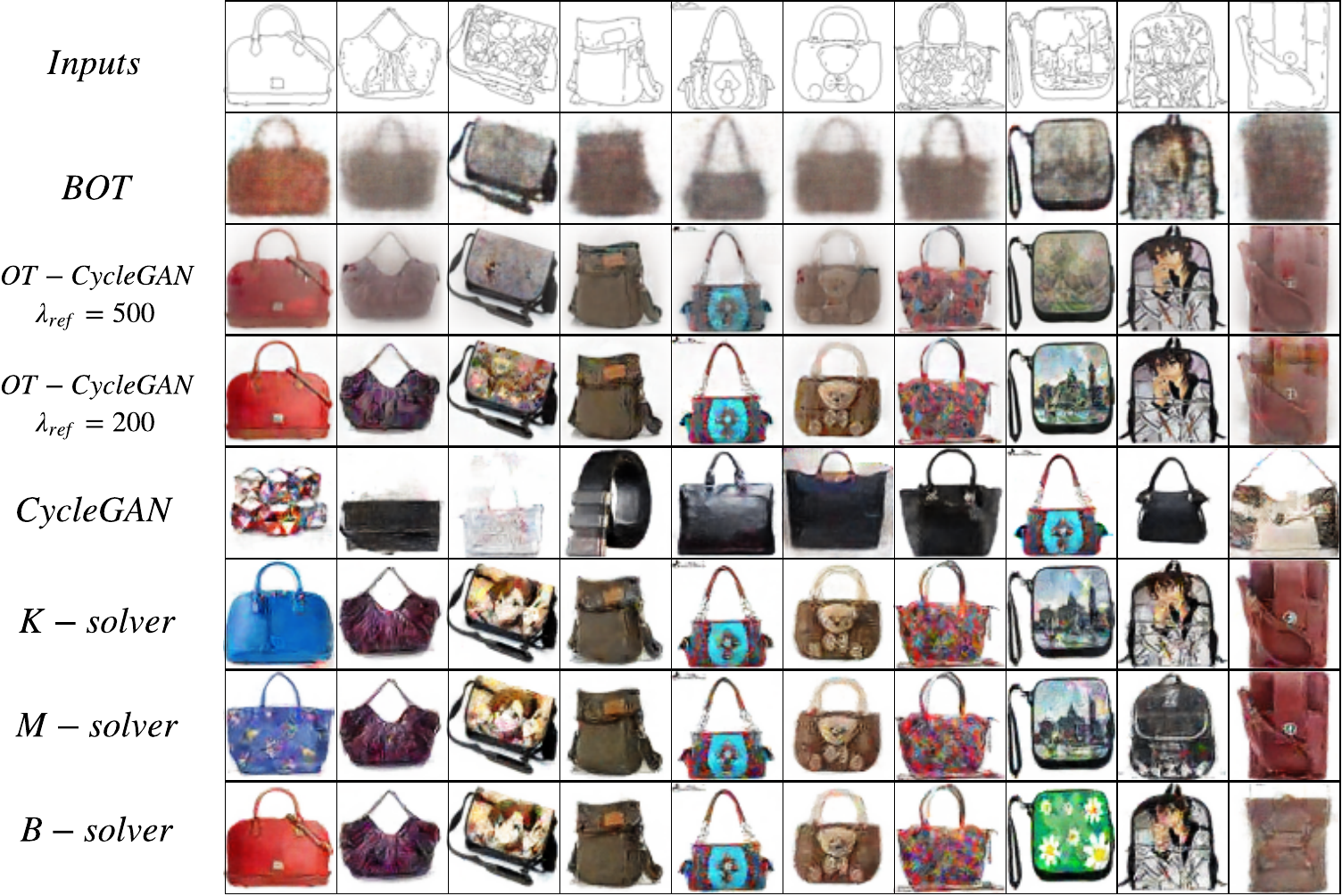}
		\caption{Edges-to-Handbags}
	\end{subfigure}
	\vspace{-5pt}
	\caption{Visual results on unsupervised image-to-image translation.}
	\label{image2image_translation}
	\vspace{-5pt}
\end{figure*}

\begin{table}[!t]
	\centering
	\vspace{-10pt}
	\caption{Quantitative comparison on image-to-image translation.}
	\vspace{3pt}
	\begin{tabular}{c|c|c|c|c}
	    \multicolumn{5}{c}{Handbags2shoes: h $\rightarrow$ s.  Edges2handbags: e $\rightarrow$ h.} \\
		\hline
		\multirow{2}{*}{Method} & \multicolumn{2}{c|}{~~~~~~~~~~~~~~KID~~~~~~~~~~~~~~} & \multicolumn{2}{c}{~mismatching degree~~} \\ \cline{2-5} 
		& ~~~h $\rightarrow$ s~~~ & e $\rightarrow$ h & ~~~h $\rightarrow$ s~~~    & e $\rightarrow$ h                 \\ \hline 
		BOT \cite{large-scale}                    & 27.31$\pm$0.07  & 16.16$\pm$0.24 & ~~~8.7                & 245.00             \\ \hline
		OT-CycleGAN \cite{ot-cyclegan} ($\lambda_{ref}=500$)            & 12.25$\pm$0.12  & ~~1.95$\pm$0.10  & ~~13.2               & 290.03             \\ \hline
		OT-CycleGAN \cite{ot-cyclegan} ($\lambda_{ref}=200$)            & ~~6.86$\pm$0.10   & ~~1.85$\pm$0.08  & ~~16.3               & 360.82             \\ \hline
		CycleGAN \cite{cyclegan}               & ~~5.14$\pm$0.09   & ~~1.89$\pm$0.10  & 135.5              & 478.99             \\ \hline
		K-solver               & ~~2.27$\pm$0.04   & ~~1.54$\pm$0.09  & ~~8.9               & 329.87             \\ \hline
		M-solver               & ~~3.28$\pm$0.05   & ~~1.91$\pm$0.09  & ~~11.2                & 330.35             \\ \hline
		B-solver               & ~~4.73$\pm$0.06   & ~~2.04$\pm$0.11  & ~~12.0                & 329.86             \\ \hline		
	\end{tabular}
	\label{image_translation_table}
	\vspace{-15pt}
\end{table}

Table \ref{image_translation_table} shows the results in terms of KID and mismatching degree. As we can see, CycleGAN achieves low KIDs but has high mismatching degrees, which is reasonable because it has no explicit control on the property of the learned mapping. In contrast, BOT achieves low mismatching degrees but has high KIDs, since it uses the barycentric projection of an optimal transport plan, which changes the distribution of translated images and thus not match with the target distribution. OT-CycleGAN generates relatively better results, but it requires a good balance: large reference coefficient results in low mismatching degree but high KID, and vice versa. Fig. \ref{image2image_translation} shows the visual results of different methods. We can see the results of BOT and OT-CycleGAN with large reference weight are noticeably blurry. And the results of OT-CycleGAN even with a relatively small reference weight is also not clear enough. Compared with the baseline methods, the proposed Kantorovich solver can well control the mapping and at the same time generates realistic samples. 

Besides, we can see from Table \ref{image_translation_table}, the optimal bijection solver (B-solver), as an end-to-end version of OT-CycleGAN, has comparative and somewhat better results than OT-CycleGAN. Furthermore, if comparing our three solvers, we can see that the KID and mismatch degree generally decreases as the number of cycle-consistency decreases. Given that OT is already sufficient to establish a well-defined unsupervised mapping between two distributions, and deterministic or not is not critical for this task, we think the results are reasonable and echo with the results in domain adaption. Note that CycleGAN which has cycle-consistency also holds a relatively high KID. 

\subsection{Color Transfer}

\begin{figure}[!t]
    \begin{subfigure}[b]{0.5\linewidth}
    	\begin{subfigure}[b]{0.5\linewidth}
    		\centering
    		\includegraphics[width=0.9\linewidth]{./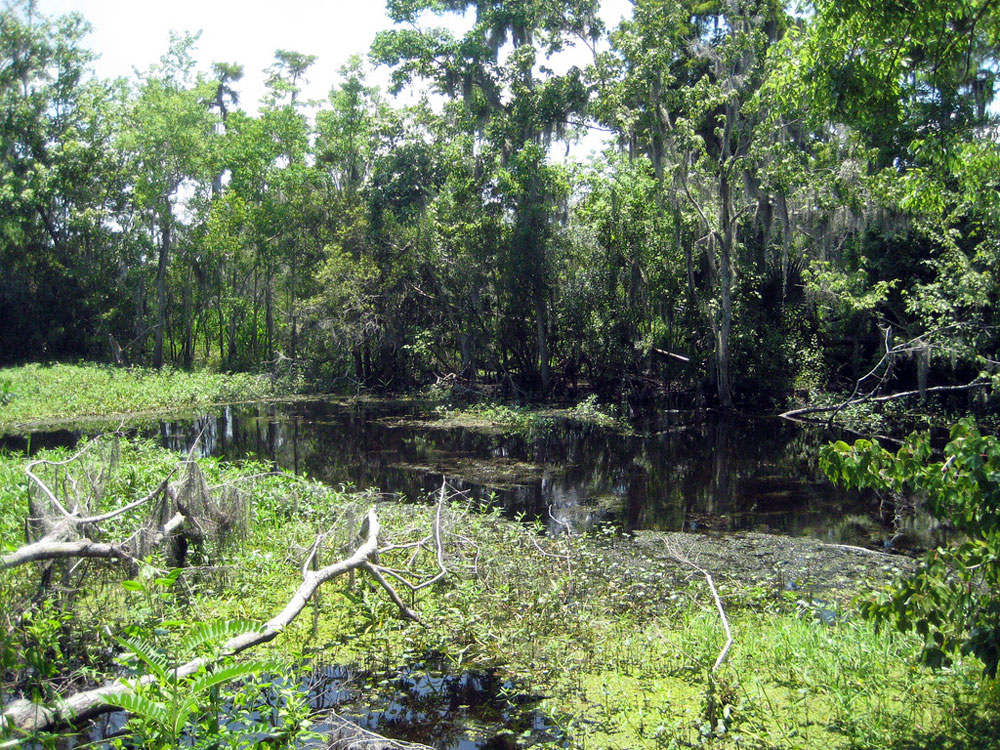} 
    	\end{subfigure}
    	\begin{subfigure}[b]{0.5\linewidth}
    		\centering
    		\includegraphics[width=0.9\linewidth]{./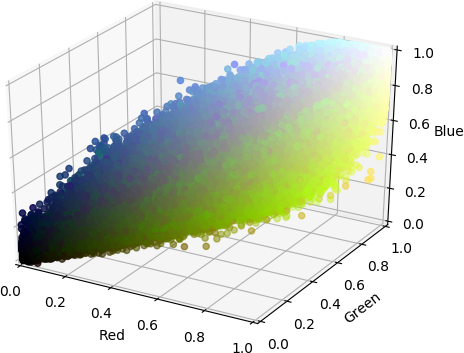} 
    	\end{subfigure}
    	\caption{Source image} 
    \end{subfigure}
    \begin{subfigure}[b]{0.5\linewidth}
    	\begin{subfigure}[b]{0.5\linewidth}
    		\centering
    		\includegraphics[width=0.9\linewidth]{./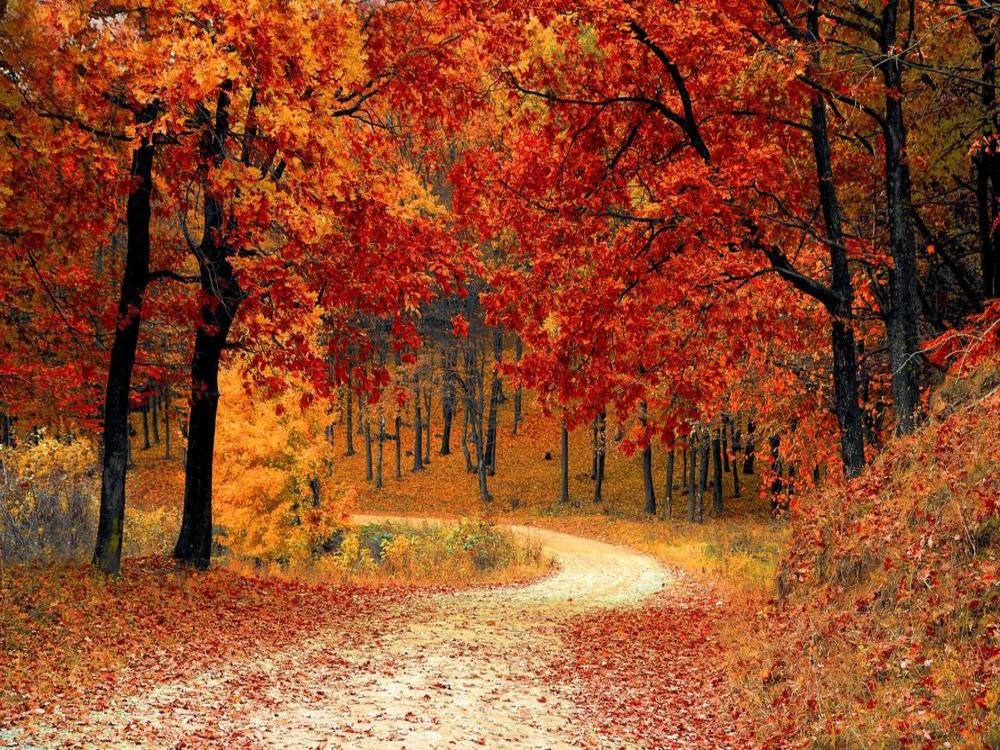} 
    	\end{subfigure}
    	\begin{subfigure}[b]{0.5\linewidth}
    		\centering
    		\includegraphics[width=0.9\linewidth]{./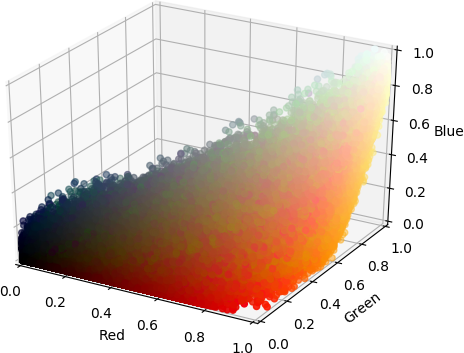} 
    	\end{subfigure}
    	\caption{Target image} 
    \end{subfigure}
	\caption{Source image, target image and corresponding 3D color distributions.}
	\label{color_transfer_sample}
\end{figure}

\begin{figure}[!t]
	\begin{subfigure}[b]{0.2\linewidth}
		\centering
		\includegraphics[width=0.99\linewidth]{./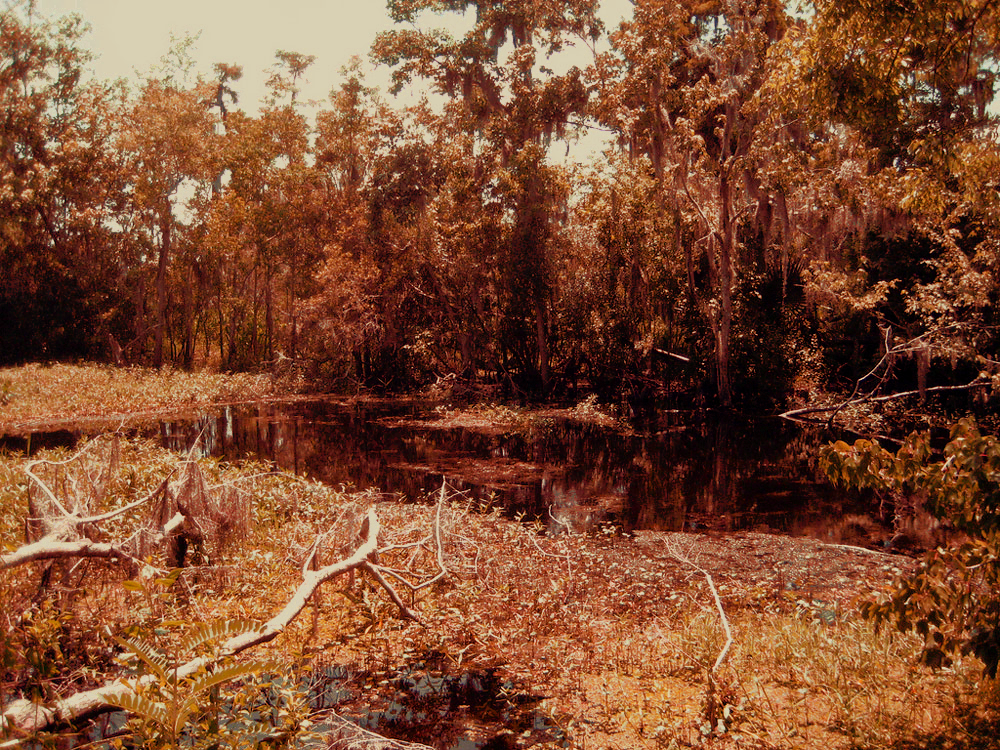} 
	\end{subfigure}
	\begin{subfigure}[b]{0.2\linewidth}
		\centering
		\includegraphics[width=0.99\linewidth]{./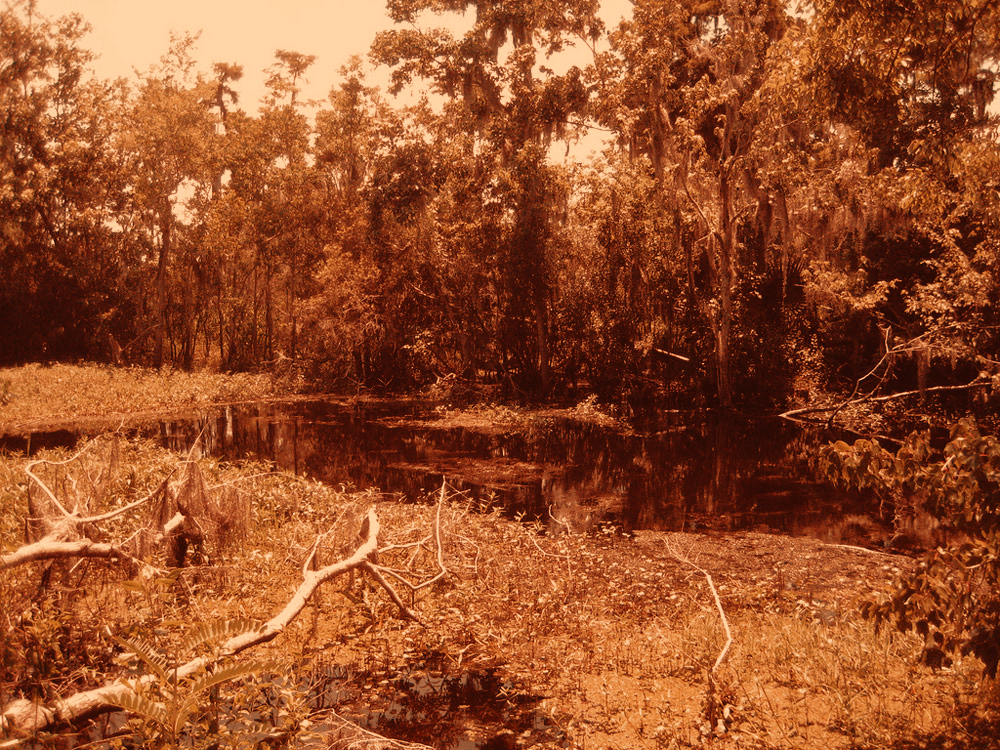} 
	\end{subfigure}
	\begin{subfigure}[b]{0.2\linewidth}
		\centering
		\includegraphics[width=0.99\linewidth]{./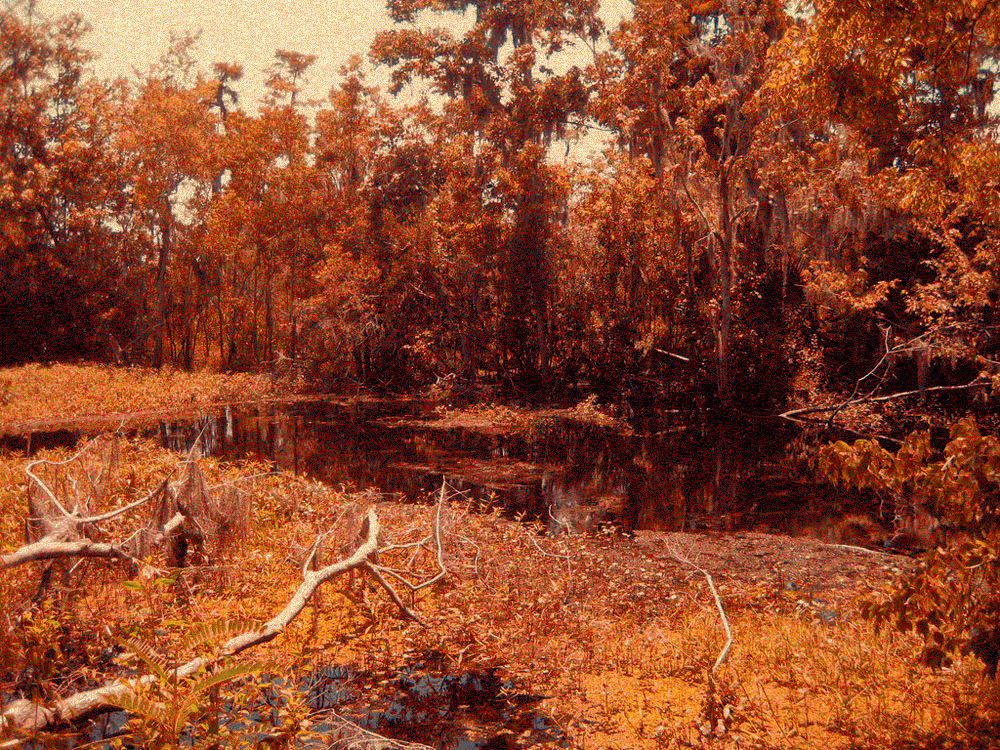} 
	\end{subfigure}
	\begin{subfigure}[b]{0.2\linewidth}
		\centering
		\includegraphics[width=0.99\linewidth]{./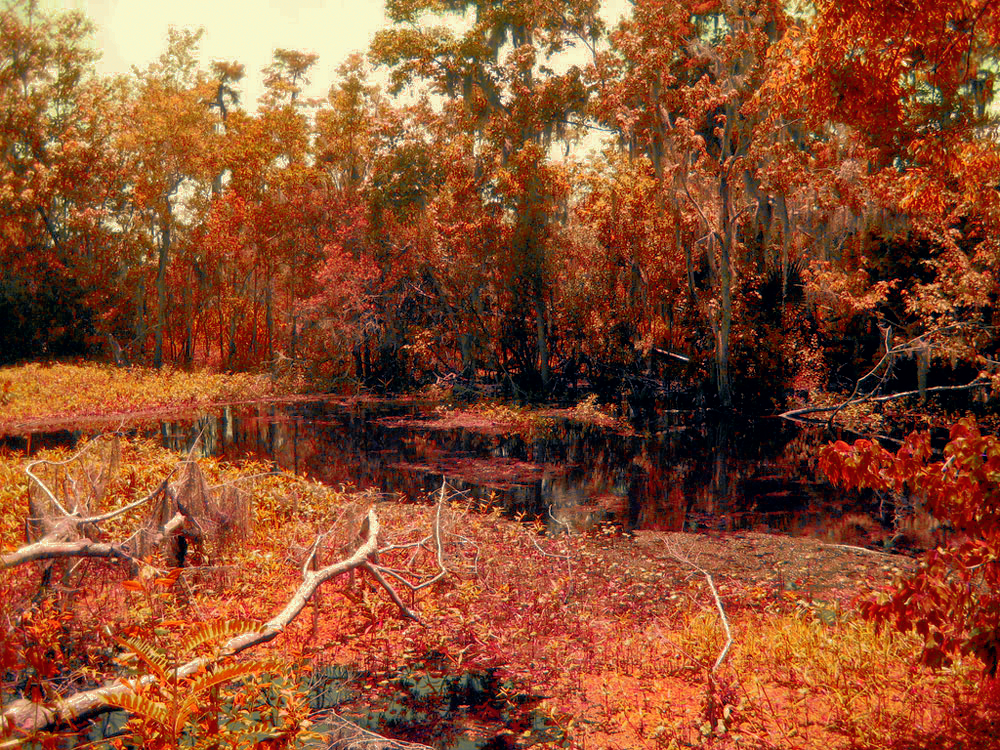}
	\end{subfigure}
	\begin{subfigure}[b]{0.2\linewidth}
		\centering
		\includegraphics[width=0.99\linewidth]{./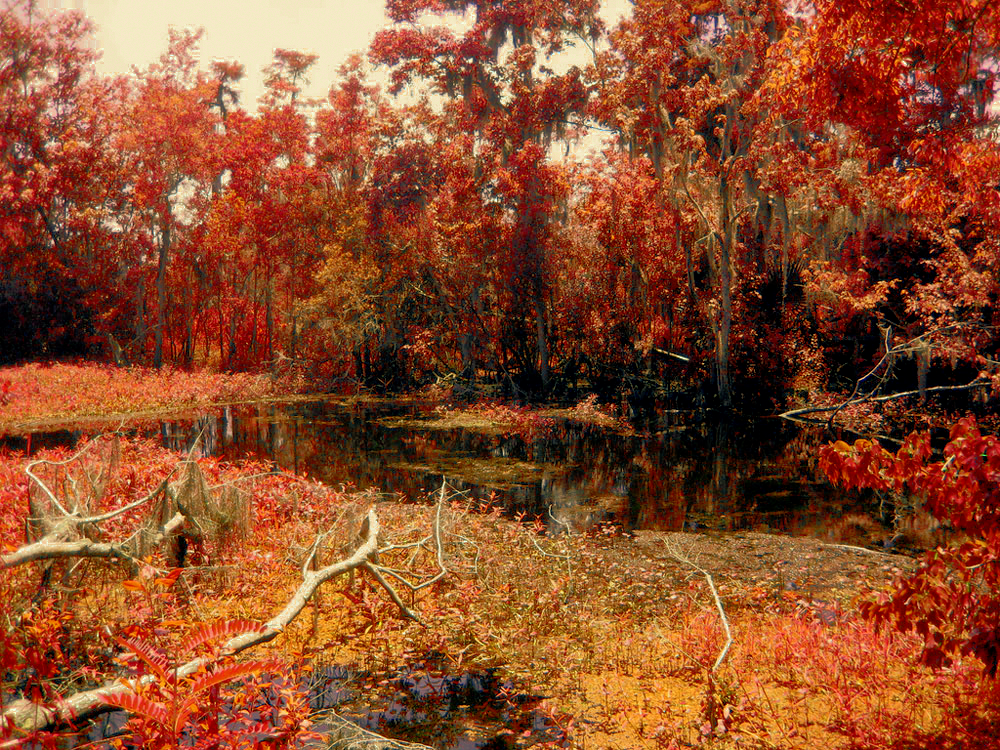}
	\end{subfigure}
	
	\begin{subfigure}[b]{0.2\linewidth}
		\centering
		\includegraphics[width=0.99\linewidth]{./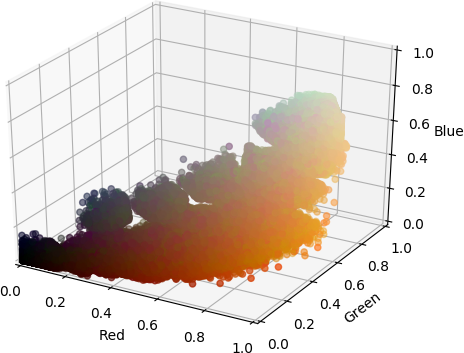} 
		\caption{ROT} 
	\end{subfigure}
	\begin{subfigure}[b]{0.2\linewidth}
		\centering
		\includegraphics[width=0.99\linewidth]{./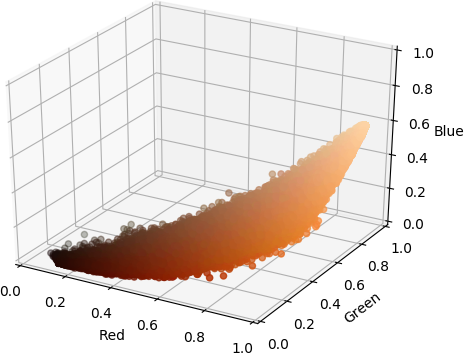} 
		\caption{BOT} 
	\end{subfigure}
	\begin{subfigure}[b]{0.2\linewidth}
		\centering
		\includegraphics[width=0.99\linewidth]{./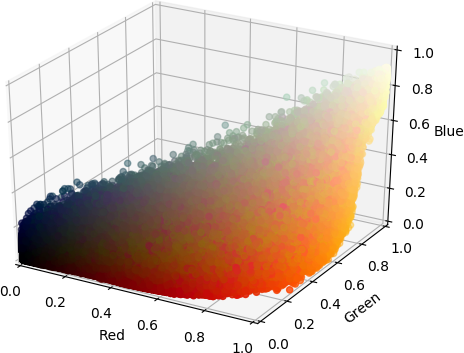} 
		\caption{Kantorovich} 
	\end{subfigure}
	\begin{subfigure}[b]{0.2\linewidth}
		\centering
		\includegraphics[width=0.99\linewidth]{./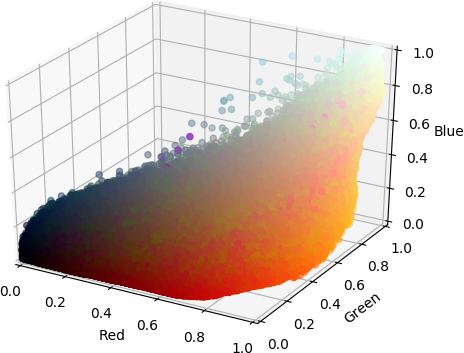} 
		\caption{Monge} 
	\end{subfigure}
	\begin{subfigure}[b]{0.2\linewidth}
		\centering
		\includegraphics[width=0.99\linewidth]{./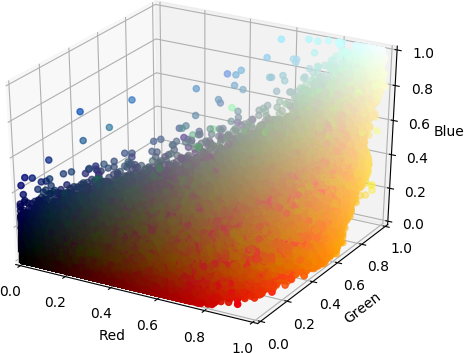} 
		\caption{Bijection} 
	\end{subfigure}
	\vspace{-5pt}
	\caption{Transferred image and the corresponding 3D color distributions.}
	\label{comparison_color_transfer}
	\vspace{-15pt}
\end{figure}

Given source image $X$ and target image $Y$, color transfer aims to transfer the color style from image $Y$ to image $X$. OT-based methods seek to establish an optimal mapping from the color histogram of $X$ to the one of $Y$, and change the color of $X$ according to the mapping of the color histogram. 

In this experiment, we compare BOT \cite{large-scale} and the proposed Kantorovich, Monge and Bijection solver. We also include ROT \cite{ferradans2014regularized}, which is also an OT-based color transfer solution but is not large-scale. We use the two images shown in Fig. \ref{color_transfer_sample} as the source and target images as a demonstration for this task. 

Fig. \ref{comparison_color_transfer} shows the results by different methods. As we can see, the transferred histogram of ROT is collapsed and the visual result looks blurry. It is mainly because ROT is not large-scale so that sub-sampling and interpolation on color histograms is required. The transferred histogram of BOT is also collapsed, this is again because it bases on barycentric mapping which by natural has such a tendency. Results of our solvers are relatively more clear and the transferred color histogram is very close to the target's. 

Deterministic mapping in this task becomes kind of important, as the mapping of a same source might be used multiple times in a same image. With a close look at the result of Kantorovich solver, we can find that the sky appears noisy, which is a result of stochastic mapping of the Kantorovich solver. In contrast, the results of Monge solver and optimal bijection solver does not have such problem. Besides, we can also see that the results of Monge solver and optimal bijection solver look sharper than the Kantorovich solver. 

We show the learned color mappings of our solvers in the Appendix, from which we can tell that the mapping learned by the Kantorovich solver is stochastic, and on the contrary, the mapping learned by the Monge solver is deterministic. This result empirically verifies Proposition \ref{one-side cycle-consistency}. 

\section{Related Work}

Recently, lots of works has been devoted to accelerating the computation of OT to broaden its application scenario. Among them, \cite{sinkhorn} introduced entropy regularization into OT and solved the dual by Sinkhorn and Knopp's algorithm. However, it still has a complexity of $O(n^2)$ and cannot be used in continuous settings. \cite{largescale_semidual} proposed to optimize a semi-dual objective function with stochastic gradient algorithms and parameterized the dual variables as kernel expansions. It has a time complexity of $O(n)$ and hence scales moderately. 

Among the algorithms for large-scale OT, SPOT \cite{spot} learns a mapping from a latent variable $z$ to the OT plan by an implicit generative learning-based framework. However, the optimal mapping between the source distribution and the target distribution is not explicitly solved and hard to retrieve. BOT \cite{large-scale} gets the density of optimal transport plan via a stochastic dual approach and then fits a deep neural network to the barycentric projection of the solved transport plan as approximated Monge map. However, the pushforward distribution of such an estimated map may not well align with the target distribution, which results in out of distribution samples and poor performance in related applications. 

\cite{scalable_unbalanced_ot} formulated unbalanced OT as a problem of simultaneously learning of a transport map and a scaling factor. But, the Monge problem and optimal bijection transport are not considered. \cite{ot_inputconvex_nn,w2gan_origin} focused on the OT problem with Wasserstein-2 metric, which leads to limited application scenarios. On the contrary, our method allows to use any differentiable cost function.

\section{Conclusion}

In this paper, we proposed an end-to-end framework for large-scale optimal transport, which can be further extended towards learning a Monge map or an optimal bijection between two distributions. 

We built a soft links between the Kantorovich formulation and the Monge formulation with one-side cycle-consistency constraint. We extended the concept of OT and introduced the problem of optimal bijection transport, which can be efficiently solved with our framework's two-side cycle-consistency extension. 

In experiments, we found that though cycle-consistency and OT can both be used to achieve unsupervised pairing, OT seems to be sufficient and more effective than cycle-consistency, when a task-specific cost can be easy defined. But in some tasks, where deterministic mapping or bijection is preferred, cycle-consistency may benefit.

%
%
\bibliographystyle{splncs04}
\bibliography{egbib}

\appendix
\setcounter{prop}{0}

\section{Proof of Proposition \ref{one-side cycle-consistency}}

\begin{prop}
	\label{one-side cycle-consistency_appendix}
	Given two distributions $\mu$ and $\nu$ defined in domain $X$ and $Y$ respectively and two stochastic mappings $G_{xy}: X \rightarrow Y $ and $G_{yx}: Y \rightarrow X$. If $G_{yx}\#\nu = \mu$ and $L_{cycle}(\nu)=0$, then 
	\begin{enumerate}
		\item $G_{xy}$ becomes a deterministic mapping;
		\item $\forall ~ y_1$, $y_2$, if $y_1 \neq y_2$, then $p(G_{yx}(y_1)=G_{yx}(y_2))=0$.
	\end{enumerate}
\end{prop}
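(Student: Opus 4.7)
The plan is to unpack the hypothesis $L_{cycle}(\nu)=0$ into a pointwise cycle identity, and then derive both conclusions from that identity together with the marginal condition $G_{yx}\#\nu=\mu$.

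First I would observe that $L_{cycle}(\nu)$ is the expectation of the non-negative quantity $\|\hat{y}-y\|_2$ against the joint law obtained by sampling $y\sim\nu$, then $x\sim G_{yx}(y)$, then $\hat{y}\sim G_{xy}(x)$. Setting this expectation to zero forces $\hat{y}=y$ almost surely under that joint law. Disintegrating, this says: for $\nu$-a.e.\ $y$, and for $G_{yx}(\cdot\mid y)$-a.e.\ $x$, the conditional law $G_{xy}(\cdot\mid x)$ is the Dirac mass $\delta_y$. In words, the composition $G_{xy}\circ G_{yx}$ almost surely returns the original $y$.

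For the second claim, I would argue by contradiction. Suppose $y_1\neq y_2$ (both in $\mathrm{supp}(\nu)$, since outside the support the cycle hypothesis imposes nothing on $G_{yx}(y_i)$) and $p(G_{yx}(y_1)=G_{yx}(y_2))>0$. Then there is a positive-measure set of $x$ lying in both conditional supports $G_{yx}(\cdot\mid y_1)$ and $G_{yx}(\cdot\mid y_2)$. Applying the cycle identity along each of the two chains forces $G_{xy}(\cdot\mid x)=\delta_{y_1}$ and $G_{xy}(\cdot\mid x)=\delta_{y_2}$ simultaneously, which is impossible since $y_1\neq y_2$.

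For the first claim, I would combine the cycle identity with the pushforward condition. Because $G_{yx}\#\nu=\mu$, $\mu$-a.e.\ $x$ is attained as $G_{yx}(y)$ for some $y\in\mathrm{supp}(\nu)$. The cycle identity at that $x$ then gives $G_{xy}(\cdot\mid x)=\delta_y$, i.e.\ $G_{xy}$ is deterministic at $x$; the second claim makes the corresponding $y$ essentially unique, so $G_{xy}$ is a well-defined deterministic map $\mu$-a.e.

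The main obstacle I expect is the measure-theoretic bookkeeping: rigorously disintegrating the triple joint $\nu\otimes G_{yx}\otimes G_{xy}$, giving a clean meaning to ``conditional support of $G_{yx}(\cdot\mid y)$'', and interpreting the event $\{G_{yx}(y_1)=G_{yx}(y_2)\}$ in terms of the underlying noise variables (independent copies vs.\ shared noise). Once these are set up, the logical core reduces to the slogan ``cycle identity forbids target collisions $\Rightarrow$ forward map is a.s.\ deterministic''.
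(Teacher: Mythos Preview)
Your proposal is correct and follows essentially the same route as the paper: unpack $L_{cycle}(\nu)=0$ into the pointwise identity that $G_{xy}\circ G_{yx}$ returns $y$ almost surely, combine this with $G_{yx}\#\nu=\mu$ so that every relevant $x$ is reached, and derive both conclusions by contradiction. The only cosmetic difference is ordering---the paper proves (1) first and then invokes determinism of $G_{xy}$ inside the proof of (2), whereas you isolate the cycle identity once and argue (2) before (1)---but the underlying logic is the same.
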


\begin{proof}  $ $ \\
	\vspace{-10pt}
	\begin{enumerate}
		
		\item Suppose $G_{xy}$ is not deterministic.

		$G_{xy}$ is not deterministic means $\exists x_0$, $y_1$, $y_2$, $y_1 \neq y_2$ such that $p(G_{xy}(x_0) = y_1) > 0$ and $p(G_{xy}(x_0) = y_2) > 0$. 		
		
		Since $G_{yx}\#\nu = \mu$, we have $\exists y_0$ such that $p(G_{yx}(y_0)=x_0) > 0$.

		As $y_1 \neq y_2$, then at least one of $y_1 \neq y_0$ and $y_2 \neq y_0$ holds. 
		
		Then $L_{cycle}(\nu) \geq \mathbb{E}_{x \sim G_{yx}(y_0)}\mathbb{E}_{\hat{y} \sim G_{xy}(x)} [\|\hat{y} - y_0\|_2] > 0$, which conflicts with the condition $L_{cycle}(\nu) = 0$. 		
		So the hypothesis ``$G_{xy}$ is not deterministic'' does not hold.
		
		Therefore, $G_{xy}$ is deterministic. 
		
		\item Suppose $\exists y_1$, $y_2$, $y_1 \neq y_2$ and $p(G_{yx}(y_1)=G_{yx}(y_2)) > 0$. 
		
		Since $G_{yx}\#\nu = \mu$, we have $\exists x_0$ such that $p(G_{yx}(y_1)=x_0) > 0$ and $p(G_{yx}(y_2)=x_0) > 0$. 	
		
		We already have $G_{xy}$ is deterministic, so at least one of $G_{xy}(x_0) \neq y_1$ and $G_{xy}(x_0) \neq y_2$ holds. 	
		
		Then $L_{cycle}(\nu) \geq (\mathbb{E}_{x \sim G_{yx}(y_1)}\mathbb{E}_{\hat{y} \sim G_{xy}(x)} [\|\hat{y} - y_1\|_2] + \mathbb{E}_{x \sim G_{yx}(y_2)}\mathbb{E}_{\hat{y} \sim G_{xy}(x)} [\|\hat{y} - y_2\|_2]) > 0$, which is conflict with the condition $L_{cycle}(\nu) = 0$. 			
		So the hypothesis does not hold. 
		
		Therefore, $\forall ~ y_1$, $y_2$, if $y_1 \neq y_2$, then $p(G_{yx}(y_1)=G_{yx}(y_2))=0$. 
	\end{enumerate}
\end{proof}

\section{Proof of Proposition \ref{two-side cycle-consistency}}

Before proving Proposition \ref{two-side cycle-consistency}, we first present the following lemma: 
\begin{lemma}
	\label{lemma}
	If stochastic mapping $G_{xy}$ satisfies the following conditions:
	\begin{enumerate}
		\item $G_{xy}\#\mu = \nu$;
		\item $G_{xy}$ is deterministic;
		\item $\forall ~ x_1$, $x_2$, if $x_1 \neq x_2$, then $p(G_{xy}(x_1)=G_{xy}(x_2))=0$,
	\end{enumerate}
	then $G_{xy}$ is a bijection from $\mu$ to $\nu$.
\end{lemma}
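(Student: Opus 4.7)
The plan is to check the three defining properties of a bijection between the probability spaces $(X,\mu)$ and $(Y,\nu)$ --- well-definedness as a single-valued function, injectivity, and surjectivity in the appropriate measure-theoretic sense --- and to argue that each of them corresponds directly to one of the three hypotheses. First I would use condition 2 to reduce $G_{xy}$ to an ordinary measurable function $f:X\to Y$: since $G_{xy}$ is deterministic, the conditional distribution $G_{xy}(x)$ collapses to a point mass $\delta_{f(x)}$ for $\mu$-almost every $x$, so up to a $\mu$-null set we may treat $G_{xy}$ as an ordinary map.

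Next I would invoke condition 3 to establish (essential) injectivity of $f$. The assumption that $p(G_{xy}(x_1)=G_{xy}(x_2))=0$ whenever $x_1\neq x_2$, combined with determinism, says that $f(x_1)\neq f(x_2)$ for $(\mu\otimes\mu)$-almost every pair with $x_1\neq x_2$; a Fubini argument then yields a full-$\mu$-measure subset $X'\subseteq X$ on which $f$ is injective. Condition 1 then supplies surjectivity in the pushforward sense: since $f\#\mu=\nu$, the image $f(X')$ has full $\nu$-measure in $Y$.

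Finally I would combine these to produce an inverse. Because $f$ restricted to $X'$ is injective and its image $Y'=f(X')$ has full $\nu$-measure, one can define $f^{-1}:Y'\to X'$ measurably (using that the image of a measurable injection between standard Borel spaces is Borel, or equivalently via disintegration of $\nu$ against $f$), giving a bijection between full-measure subsets of $(X,\mu)$ and $(Y,\nu)$ that transports $\mu$ to $\nu$ and whose inverse transports $\nu$ to $\mu$. The main obstacle I expect is the technical step of manufacturing a genuine measurable inverse from essentially injective, essentially surjective behavior: the conditions are probabilistic, so the $\mu$-null and $\nu$-null exceptional sets must be reconciled, and one must verify that the resulting inverse is itself measurable. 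Modulo this standard measure-theoretic bookkeeping, the lemma reduces to reading the three hypotheses as the three constituent parts of the bijection property.
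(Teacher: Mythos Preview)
Your proposal is correct and follows essentially the same decomposition as the paper: condition 2 reduces $G_{xy}$ to a single-valued map, condition 1 gives surjectivity via the pushforward, and condition 3 gives injectivity, so $G_{xy}$ is a bijection. The paper's own argument is just this one-line observation without any of the measure-theoretic bookkeeping you add, so your version is if anything more careful than what is required.
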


The proof is straight forward: Given $G_{xy}$ is deterministic, then condition 1 means $G_{xy}$ is a surjection; condition 3 means $G_{xy}$ is a injection. So $G_{xy}$ is a bijection. 

\begin{prop}
	\label{two-side cycle-consistency_appendix}
	Given two distributions $\mu$ and $\nu$ defined in domain $X$ and $Y$ respectively and two stochastic mappings $G_{xy}: X \rightarrow Y$ and $G_{yx}: Y \rightarrow X$. If $G_{xy}\#\mu = \nu$, $G_{yx}\#\nu = \mu$, $L_{cycle}(\mu)=0$ and $L_{cycle}(\nu)=0$, then $G_{xy}$, $G_{yx}$ becomes bijections.
\end{prop}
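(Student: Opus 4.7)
The plan is to reduce Proposition~2 to a symmetric double application of Proposition~1 followed by the auxiliary Lemma. The Lemma tells us that any stochastic map satisfying the correct marginal condition, determinism, and an almost-sure injectivity property is automatically a bijection. So all we have to do is harvest these three ingredients for both $G_{xy}$ and $G_{yx}$ from the hypotheses of Proposition~2.

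First, I would apply Proposition~1 directly: the hypotheses $G_{yx}\#\nu = \mu$ and $L_{cycle}(\nu) = 0$ are exactly what Proposition~1 requires, and its two conclusions give (a) $G_{xy}$ is deterministic, and (b) for all $y_1 \neq y_2$, $p(G_{yx}(y_1) = G_{yx}(y_2)) = 0$, i.e.\ the almost-sure injectivity of $G_{yx}$. Next I would apply Proposition~1 again with the roles of $\mu,\nu$ and of $G_{xy}, G_{yx}$ swapped: the other pair of hypotheses $G_{xy}\#\mu = \nu$ and $L_{cycle}(\mu) = 0$ now play the symmetric role, and yield (a$'$) $G_{yx}$ is deterministic, and (b$'$) for all $x_1 \neq x_2$, $p(G_{xy}(x_1) = G_{xy}(x_2)) = 0$.

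At this point $G_{xy}$ meets all three conditions of the Lemma (marginal $G_{xy}\#\mu=\nu$ is assumed, determinism is (a), and almost-sure injectivity is (b$'$)), so the Lemma yields that $G_{xy}$ is a bijection from $\mu$ to $\nu$. Likewise $G_{yx}$ meets the symmetric form of the three conditions (marginal $G_{yx}\#\nu=\mu$ is assumed, determinism is (a$'$), and almost-sure injectivity is (b)), so the Lemma yields that $G_{yx}$ is a bijection from $\nu$ to $\mu$.

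There is no real obstacle: the whole argument is essentially bookkeeping, and the only mild subtlety is making sure the symmetric use of Proposition~1 is legitimate, which is clear because the statement of Proposition~1 is symmetric in $(X,\mu,G_{xy})$ and $(Y,\nu,G_{yx})$ up to relabelling. The one thing to be slightly careful about is that ``bijection'' here is really an almost-everywhere notion, since the injectivity from Proposition~1 holds only with probability one; I would note explicitly that null sets can be ignored when invoking the Lemma, so the conclusion is a bijection $\mu$-a.s.\ (respectively $\nu$-a.s.), which is all that is meaningful for distributions.
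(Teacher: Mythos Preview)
Your proposal is correct and follows essentially the same route as the paper: two symmetric applications of Proposition~1 to extract determinism and almost-sure injectivity for each map, followed by the Lemma to conclude bijectivity. Your additional remark about the almost-everywhere nature of the bijection is a reasonable clarification but does not depart from the paper's argument.
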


\begin{proof} $ $ \\
	\indent
	Since $G_{yx}\#\nu = \mu$ and $L_{cycle}(\nu)=0$, according to Proposition \ref{one-side cycle-consistency}, we have $G_{xy}$ is deterministic. Since $G_{xy}\#\mu = \nu$ and $L_{cycle}(\mu)=0$, according to Proposition \ref{one-side cycle-consistency}, we have $\forall ~ x_1$, $x_2$, if $x_1 \neq x_2$, $p(G_{xy}(x_1)=G_{xy}(x_2))=0$. Besides, $G_{xy}\#\mu = \nu$, then according to Lemma \ref{lemma}, $G_{xy}$ is a bijection from $\mu$ to $\nu$. 
	
	The same reason, $G_{yx}$ is a bijection from $\nu$ to $\mu$. 
\end{proof}

\begin{figure}[!b]
	\begin{subfigure}[b]{0.198\linewidth}
		\centering
		\includegraphics[width=0.99\linewidth]{./samples_4gau.pdf} 
	\end{subfigure}
	\begin{subfigure}[b]{0.198\linewidth}
		\centering
		\includegraphics[width=0.99\linewidth]{./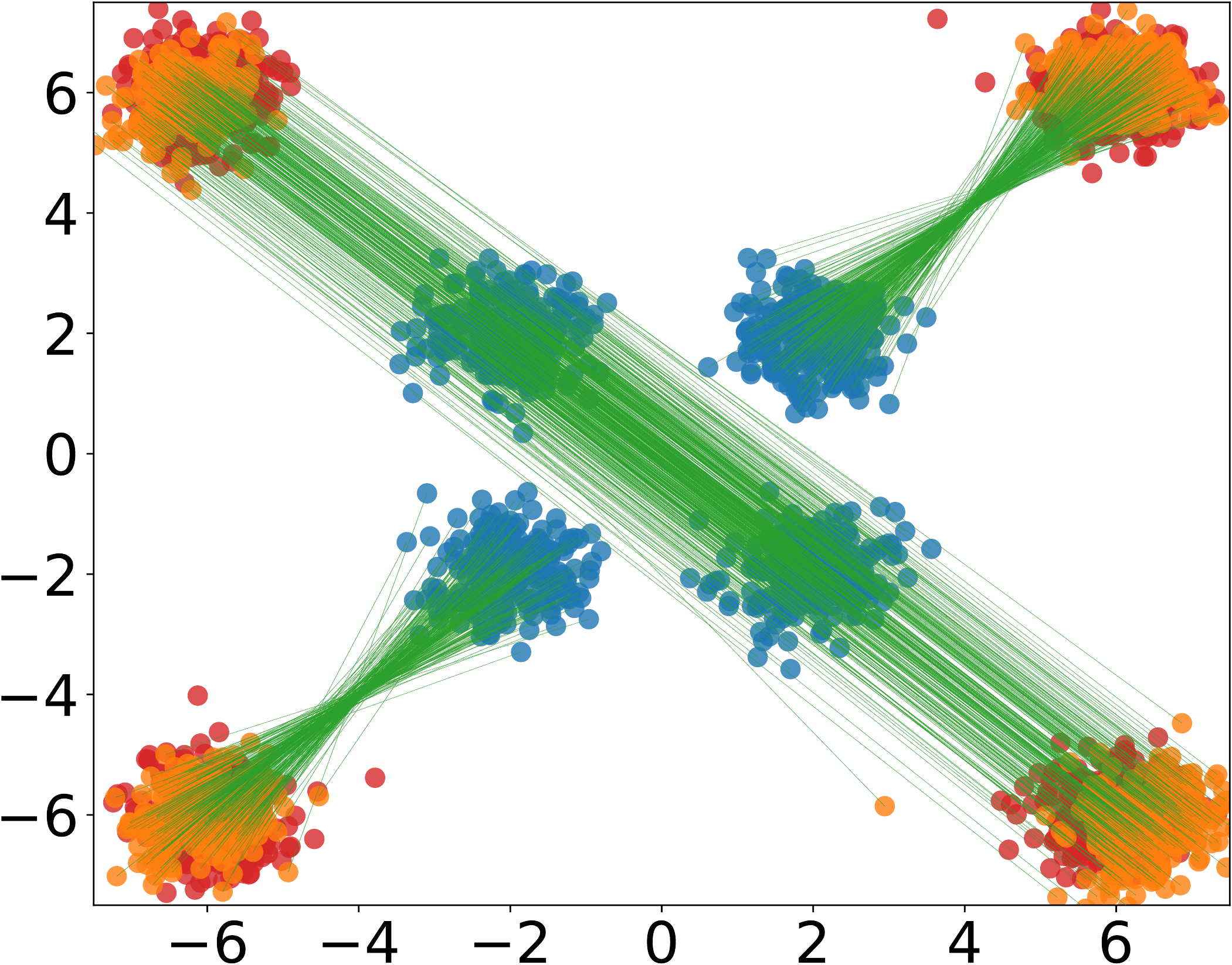} 
	\end{subfigure}
	\begin{subfigure}[b]{0.198\linewidth}
		\centering
		\includegraphics[width=0.99\linewidth]{./mapping_kantorovich_4gau.pdf} 
	\end{subfigure}
	\begin{subfigure}[b]{0.198\linewidth}
		\centering
		\includegraphics[width=0.99\linewidth]{./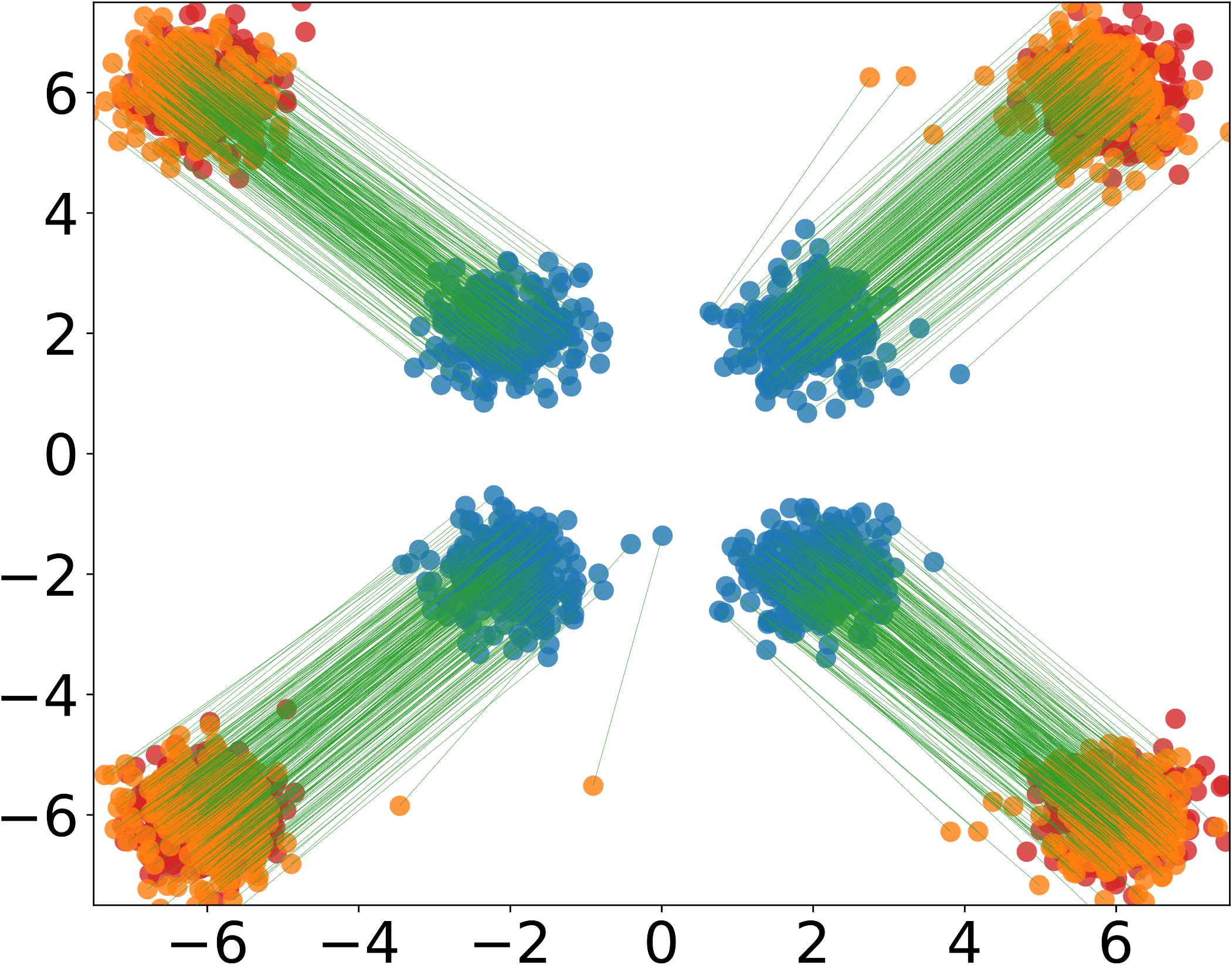} 
	\end{subfigure}
	\begin{subfigure}[b]{0.198\linewidth}
		\centering
		\includegraphics[width=0.99\linewidth]{./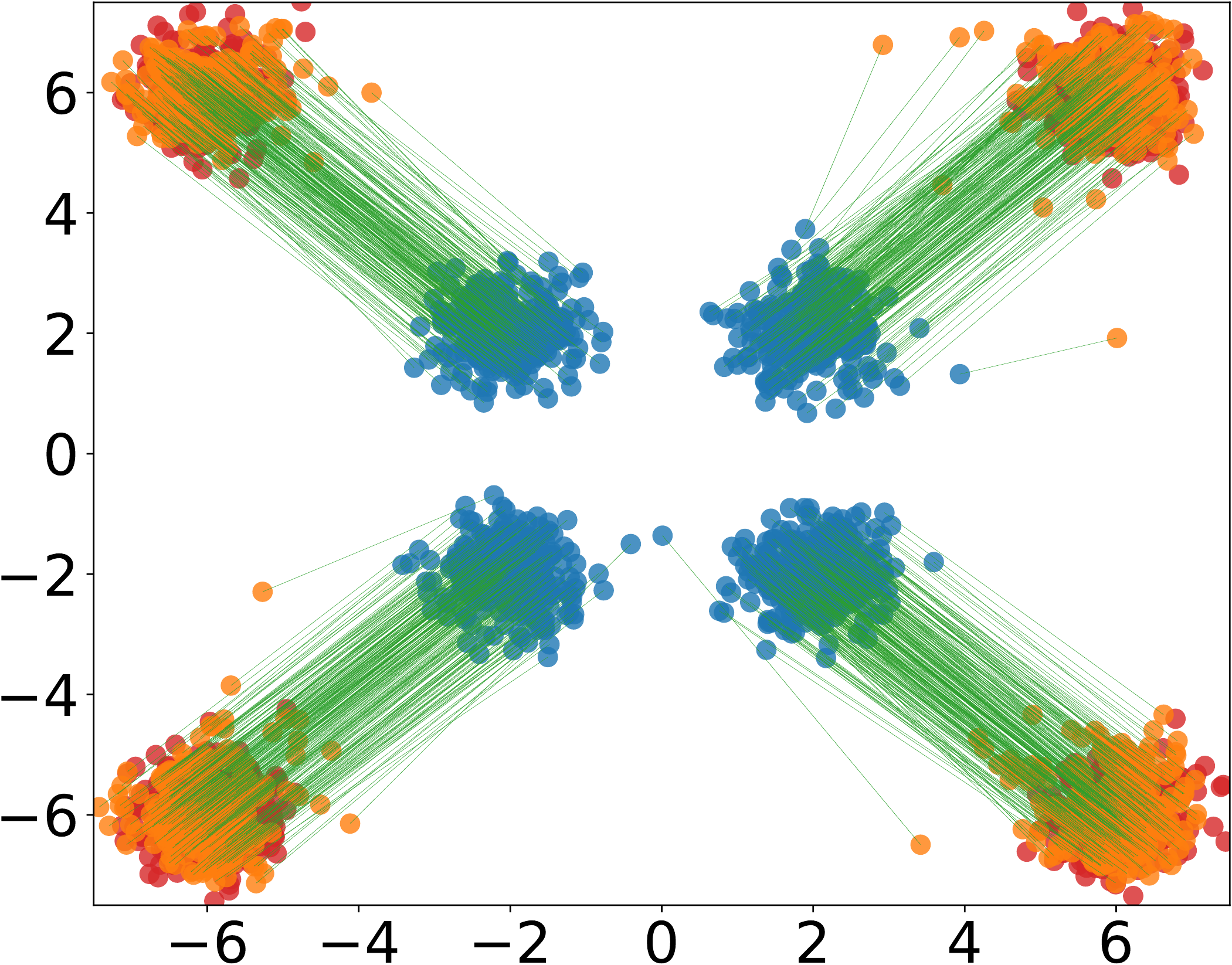} 
	\end{subfigure}
	
	\begin{subfigure}[b]{0.198\linewidth}
		\centering
		\includegraphics[width=0.99\linewidth]{./samples_gau_8gau.pdf} 
	\end{subfigure}
	\begin{subfigure}[b]{0.198\linewidth}
		\centering
		\includegraphics[width=0.99\linewidth]{./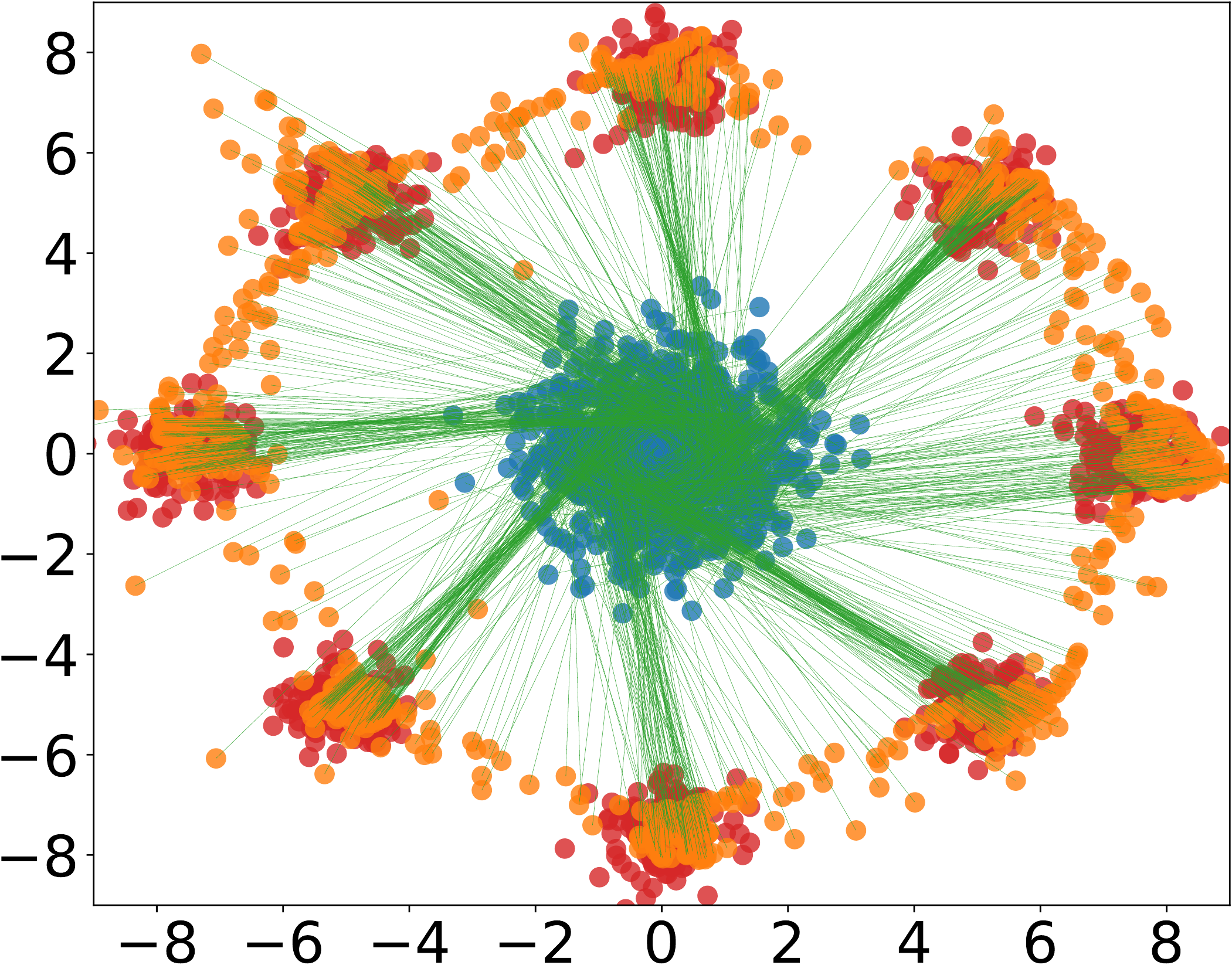} 
	\end{subfigure}
	\begin{subfigure}[b]{0.198\linewidth}
		\centering
		\includegraphics[width=0.99\linewidth]{./mapping_kantorovich_gau_8gau.pdf} 
	\end{subfigure}
	\begin{subfigure}[b]{0.198\linewidth}
		\centering
		\includegraphics[width=0.99\linewidth]{./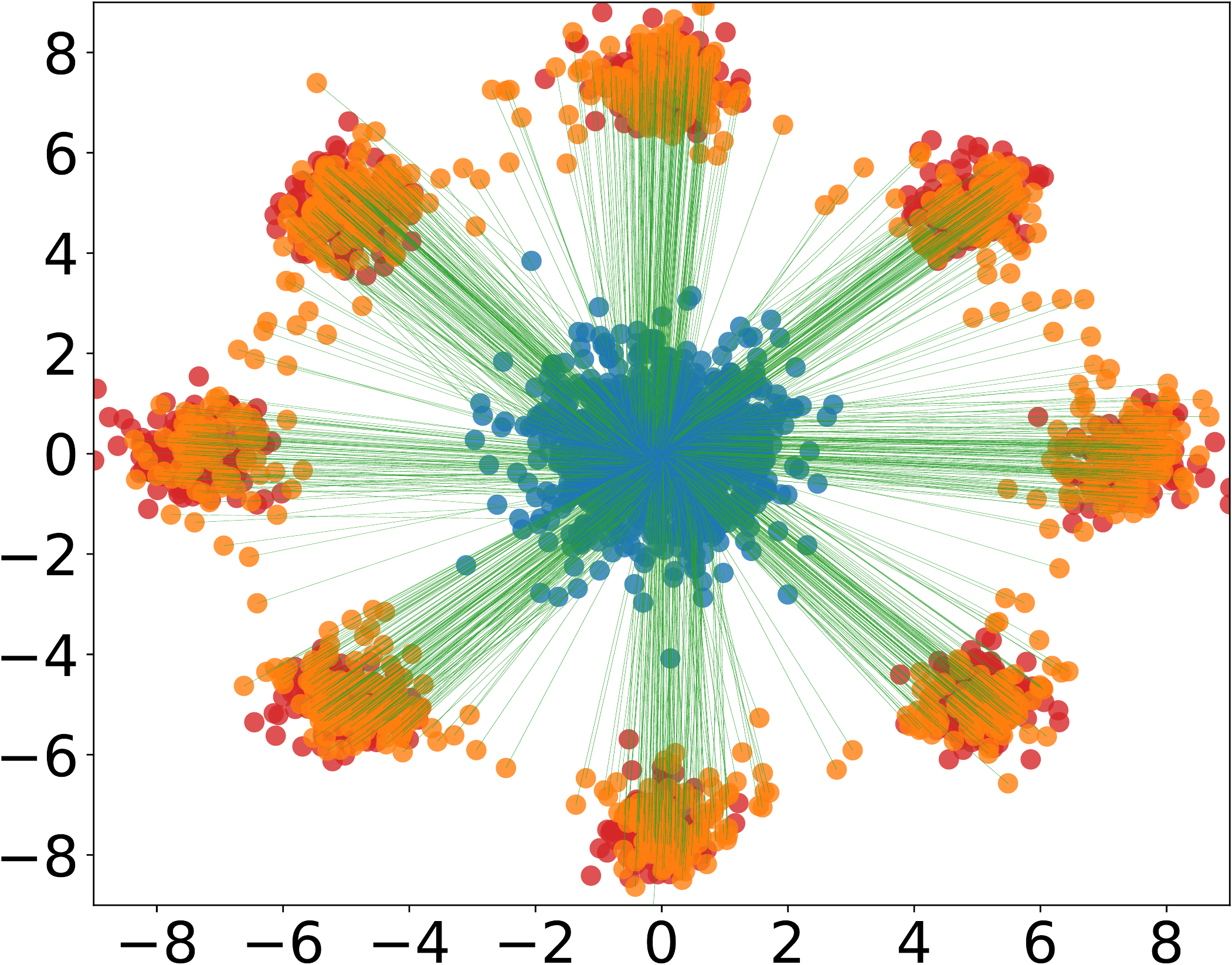} 
	\end{subfigure}
	\begin{subfigure}[b]{0.198\linewidth}
		\centering
		\includegraphics[width=0.99\linewidth]{./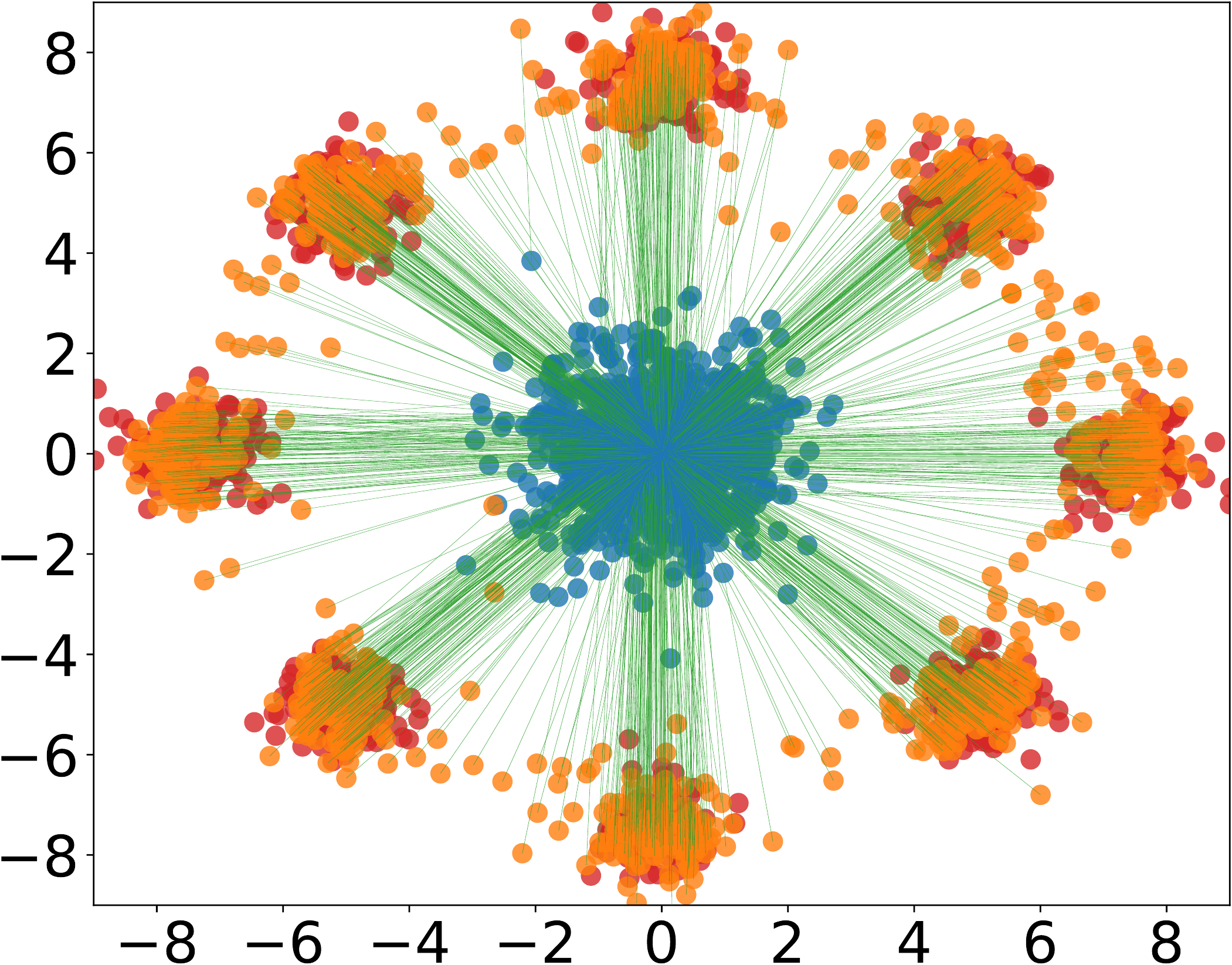} 
	\end{subfigure}
	
	\begin{subfigure}[b]{0.198\linewidth}
		\centering
		\hspace{-3pt}
		\includegraphics[width=0.99\linewidth]{./samples_checkerboard.pdf} 
		\caption{Samples} 
	\end{subfigure}
	\begin{subfigure}[b]{0.198\linewidth}
		\centering
		\hspace{-3pt}
		\includegraphics[width=0.99\linewidth]{./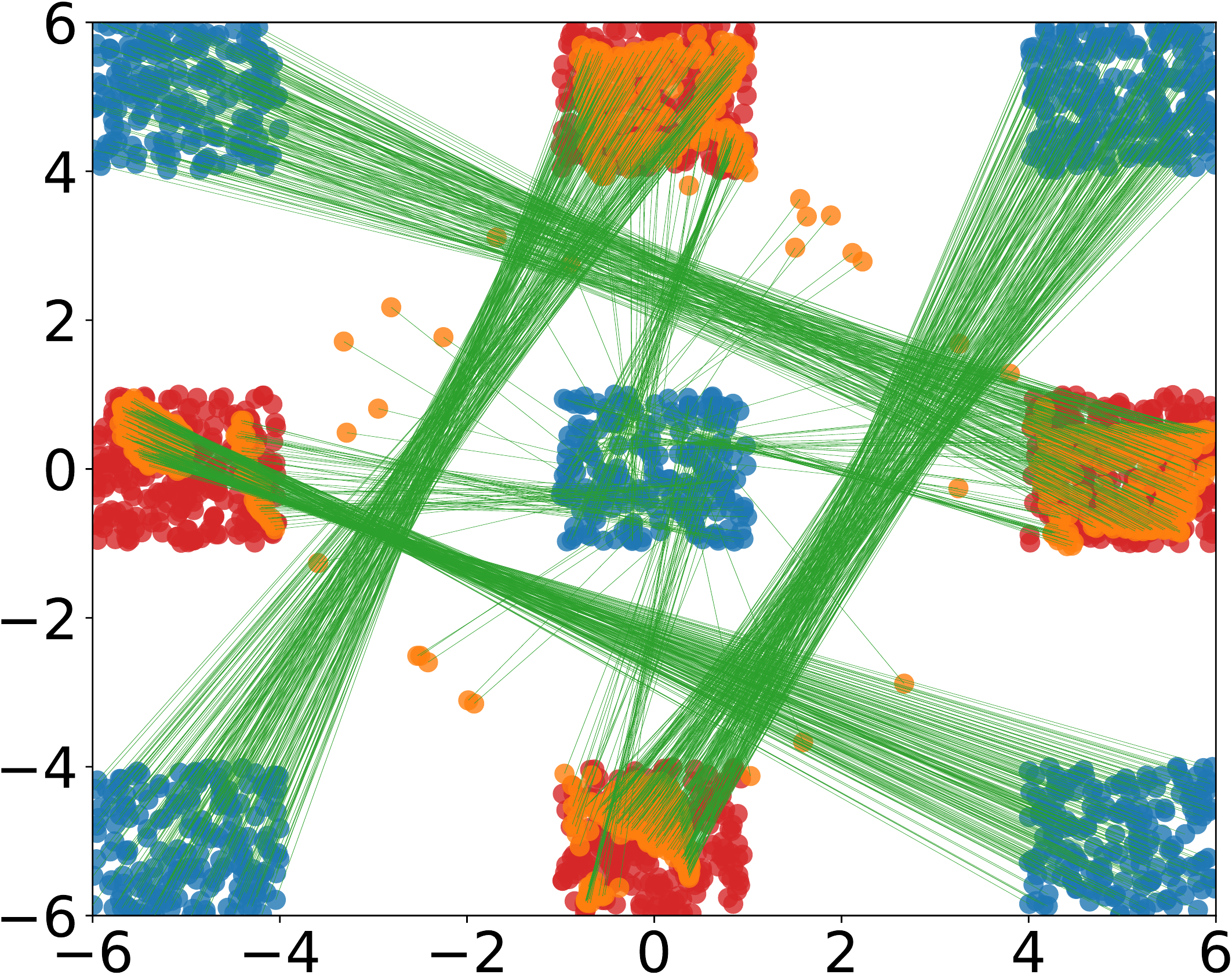} 
		\caption{WGAN-GP} 
	\end{subfigure}
	\begin{subfigure}[b]{0.198\linewidth}
		\centering
		\hspace{-3pt}
		\includegraphics[width=0.99\linewidth]{./mapping_kantorovich_checkerboard.pdf} 
		\caption{K-solver} 
		\label{toy2d_comparison_kantorovich} 
	\end{subfigure}
	\begin{subfigure}[b]{0.198\linewidth}
		\centering
		\hspace{-3pt}
		\includegraphics[width=0.99\linewidth]{./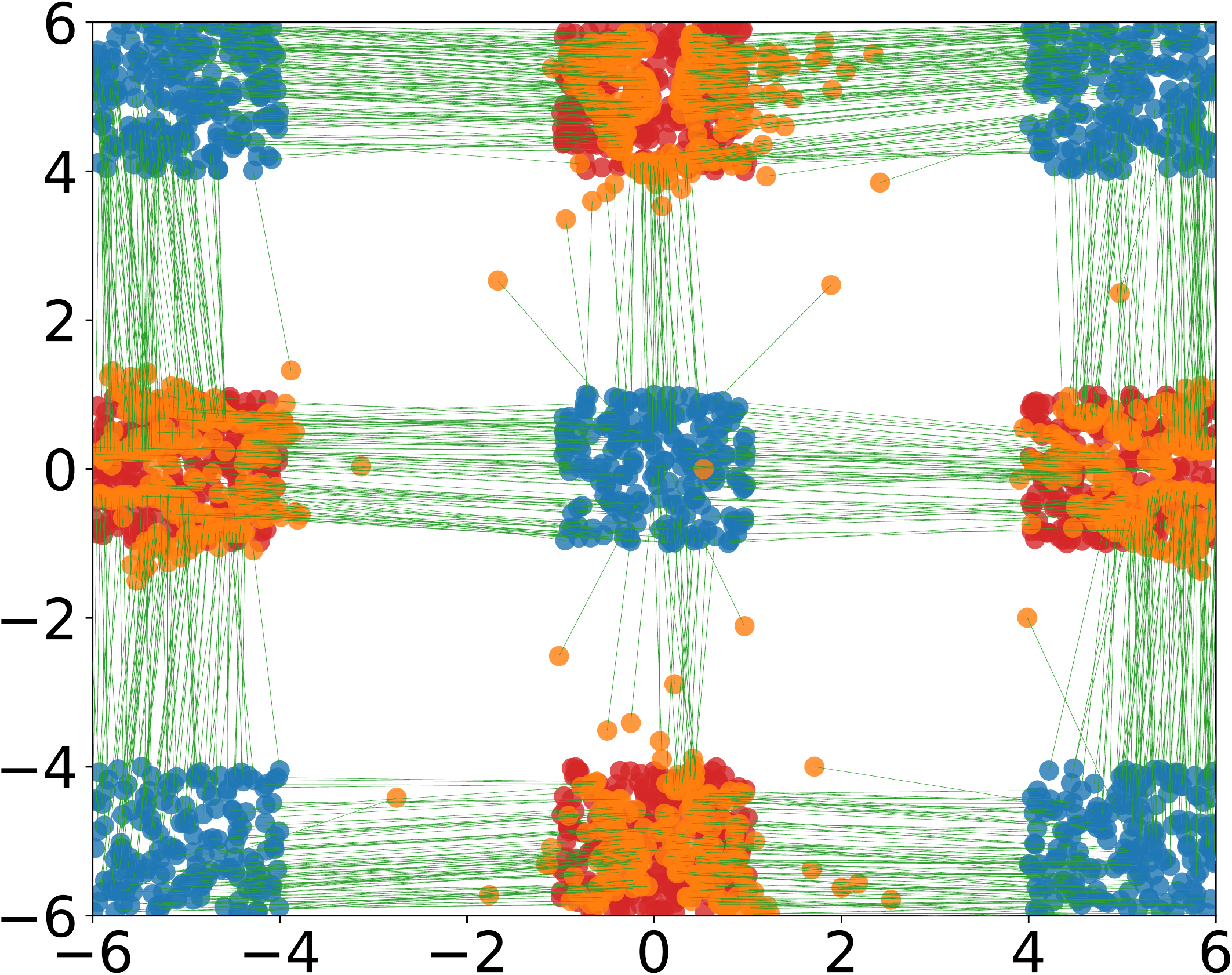} 
		\caption{M-solver} 
	\end{subfigure}
	\begin{subfigure}[b]{0.198\linewidth}
		\centering
		\hspace{-10pt}
		\includegraphics[width=0.99\linewidth]{./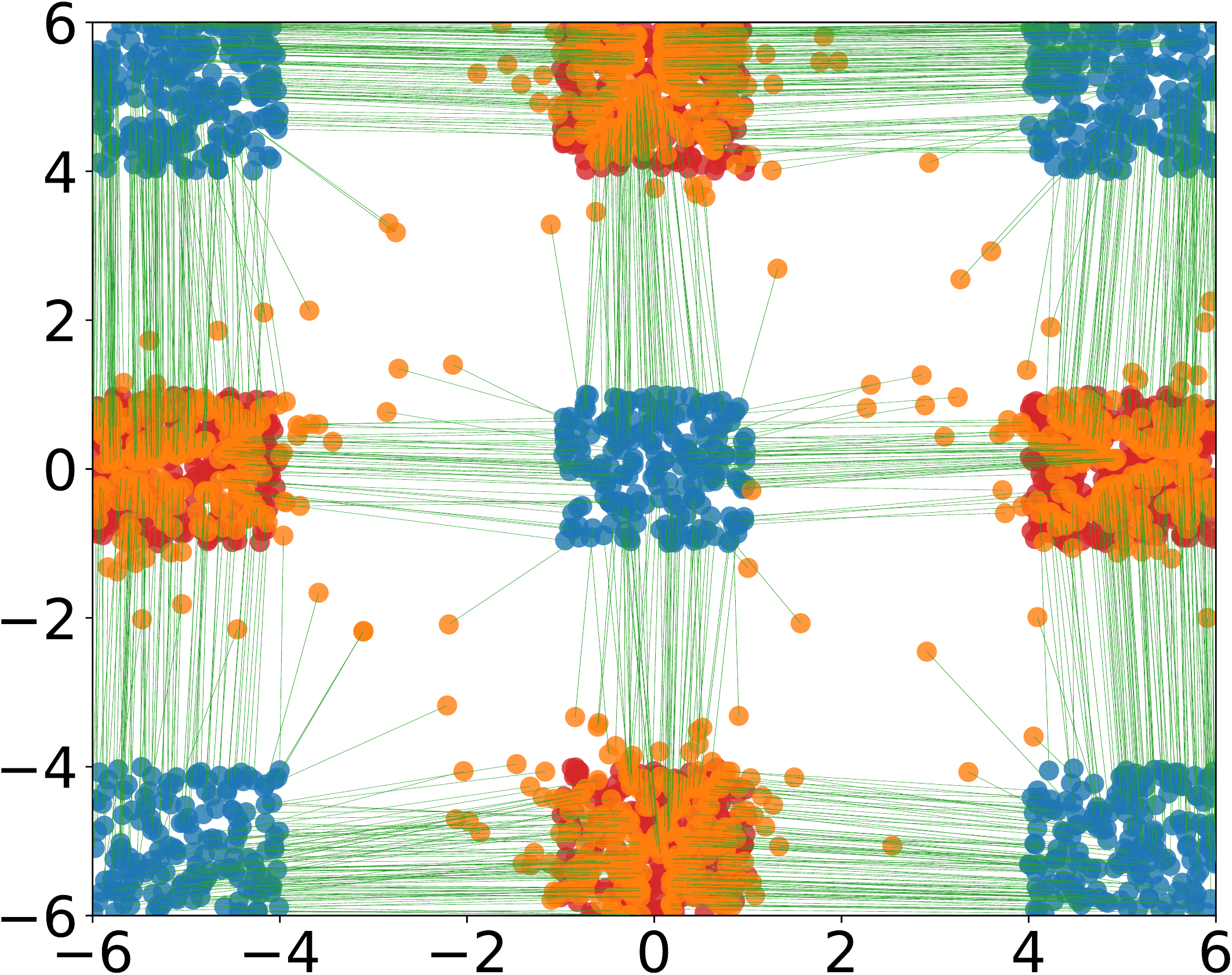} 
		\caption{B-solver} 
	\end{subfigure}
	\vspace{-3pt} 
	\caption{Mappings learned by WGAN-GP generator, our Kantorovich solver, Monge solver and Bijection solver on three 2D examples. Blue: source samples. Red: target samples. Orange: mapped samples. Green: the mapping. Number of samples are 1000.}
	\label{comparison_mapping2d}
\end{figure}

\section{Additional Experiment Results}

\subsection{Mappings Learned by WGAN-GP, Monge / Bijection Solver}

Fig. \ref{comparison_mapping2d} shows the mappings learned by the WGAN-GP generator, our Kantorovich solver, Monge solver and Bijection solver on three 2D examples. Results of BOT are provided in the main body. As we can see, WGAN-GP generator cannot learn the optimal transport map in general as there is no constraint on the learned map except that the push-forward of source distribution should be target distribution. BOT exhibits collapse and out of distribution samples. Kantorovich solver, Monge solver and Bijiection solver achieve better performance in general.

\subsection{Visual Results for Domain Adaptation}

Fig. \ref{da_showcase} shows some source samples and the corresponding mapped samples in different domain adaptation tasks. We can see that, our model learns the desired mapping, which maps samples from the source domain to samples from the target domain with the same class label.

\begin{figure}[h]
	\centering
	\includegraphics[width=1.\columnwidth]{./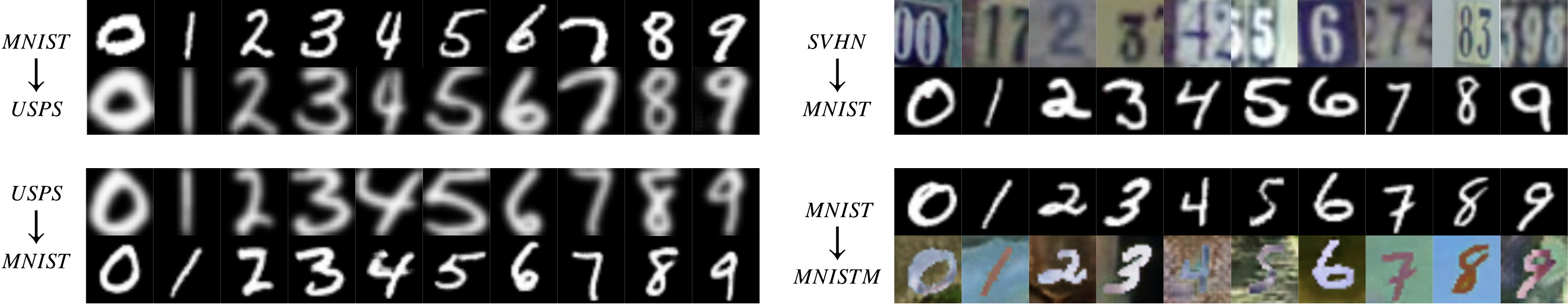}
	\caption{Source and mapped samples of different domain adaptations tasks.}
	\label{da_showcase}
\end{figure}

\subsection{Mappings Learned in Color Transfer}

Fig. \ref{color_transfer_fix_mapping} shows the learned mappings by different solvers in color transfer. As we can see, the mapping learned by Kantorovich solver is a stochastic mapping, while the ones learned by Monge solver and Bijection solver are deterministic mappings.

\begin{figure}[h]
	\vspace{-5pt}
	\centering
	\begin{subfigure}[b]{0.3\linewidth}
		\centering
		\includegraphics[width=0.9\linewidth]{./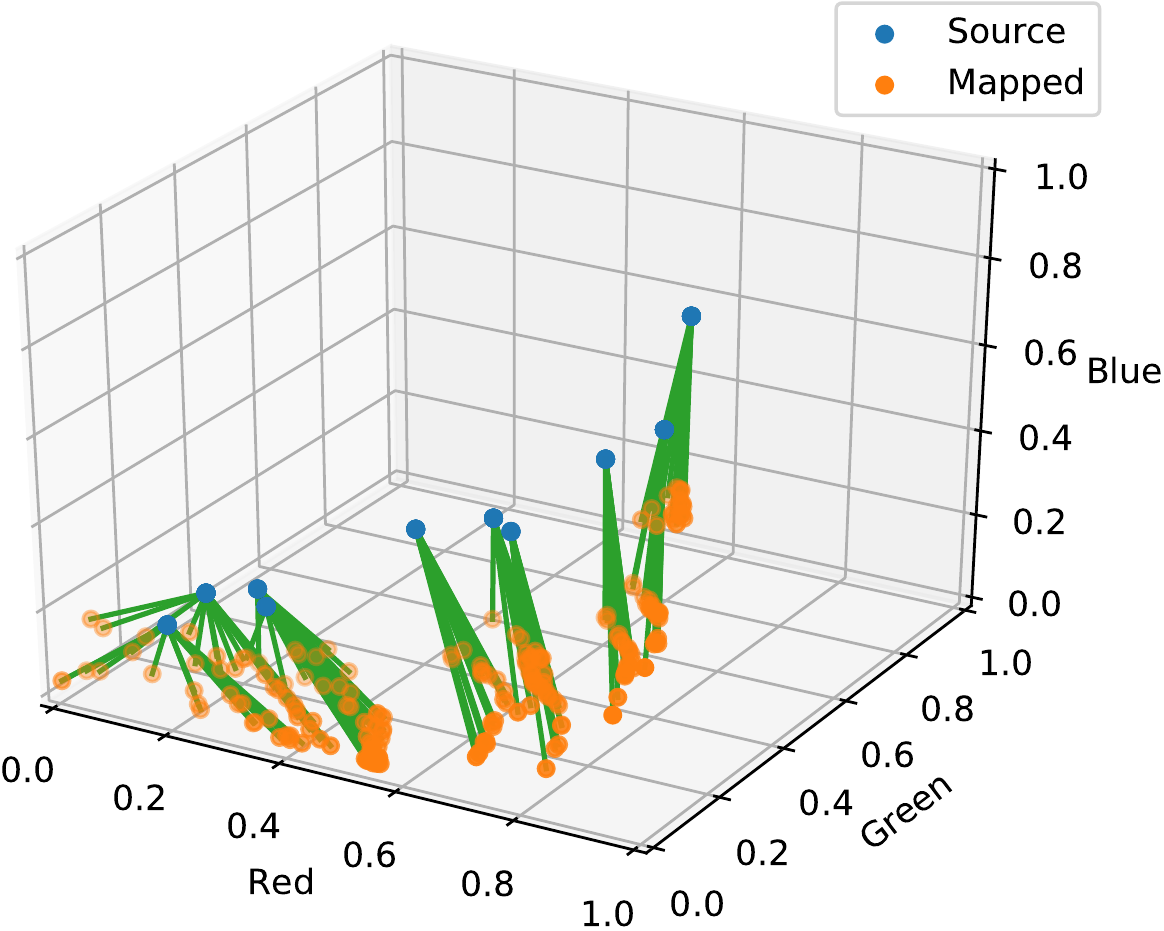} 
		\caption{Kantorovich solver} 
	\end{subfigure}
	\begin{subfigure}[b]{0.3\linewidth}
		\centering
		\includegraphics[width=0.9\linewidth]{./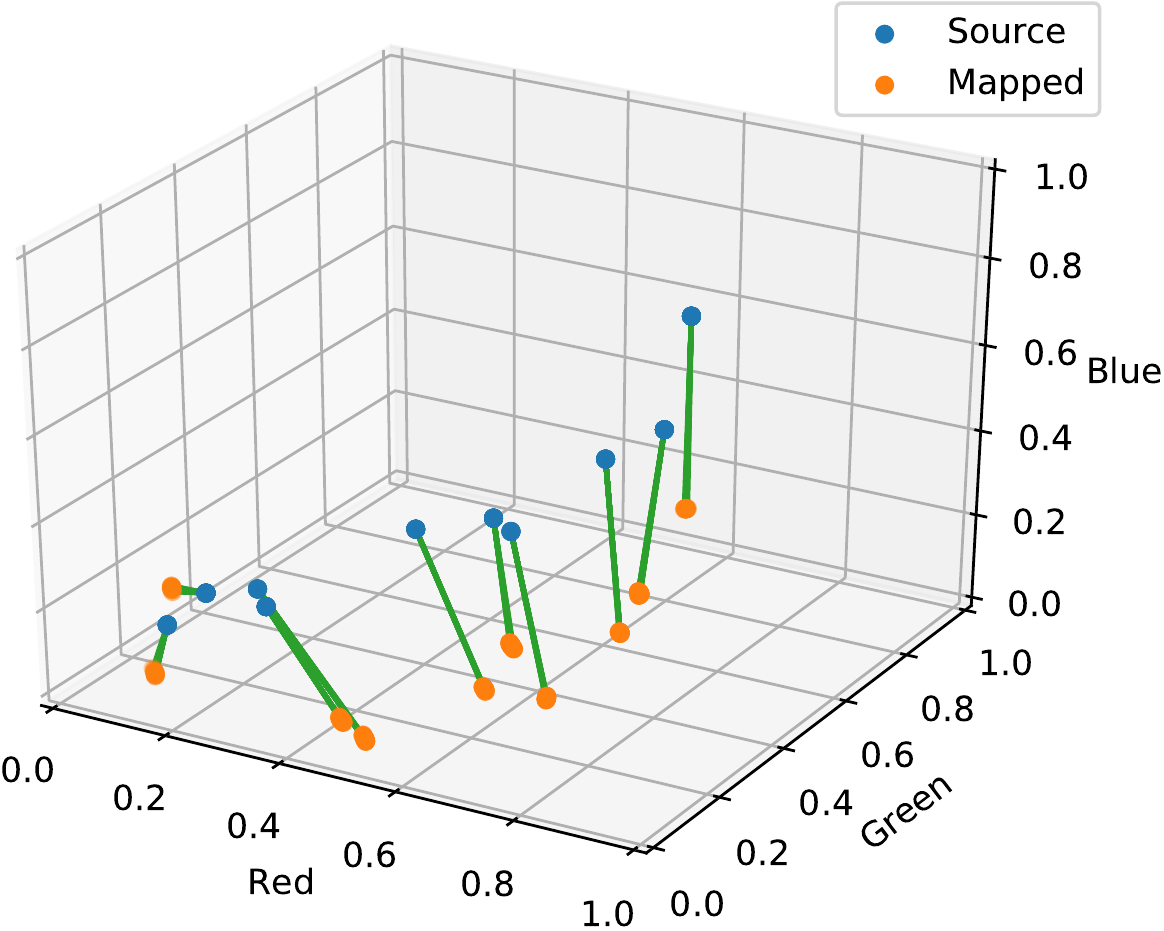}
		\caption{Monge solver}
	\end{subfigure}
	\begin{subfigure}[b]{0.3\linewidth}
		\centering
		\includegraphics[width=0.9\linewidth]{./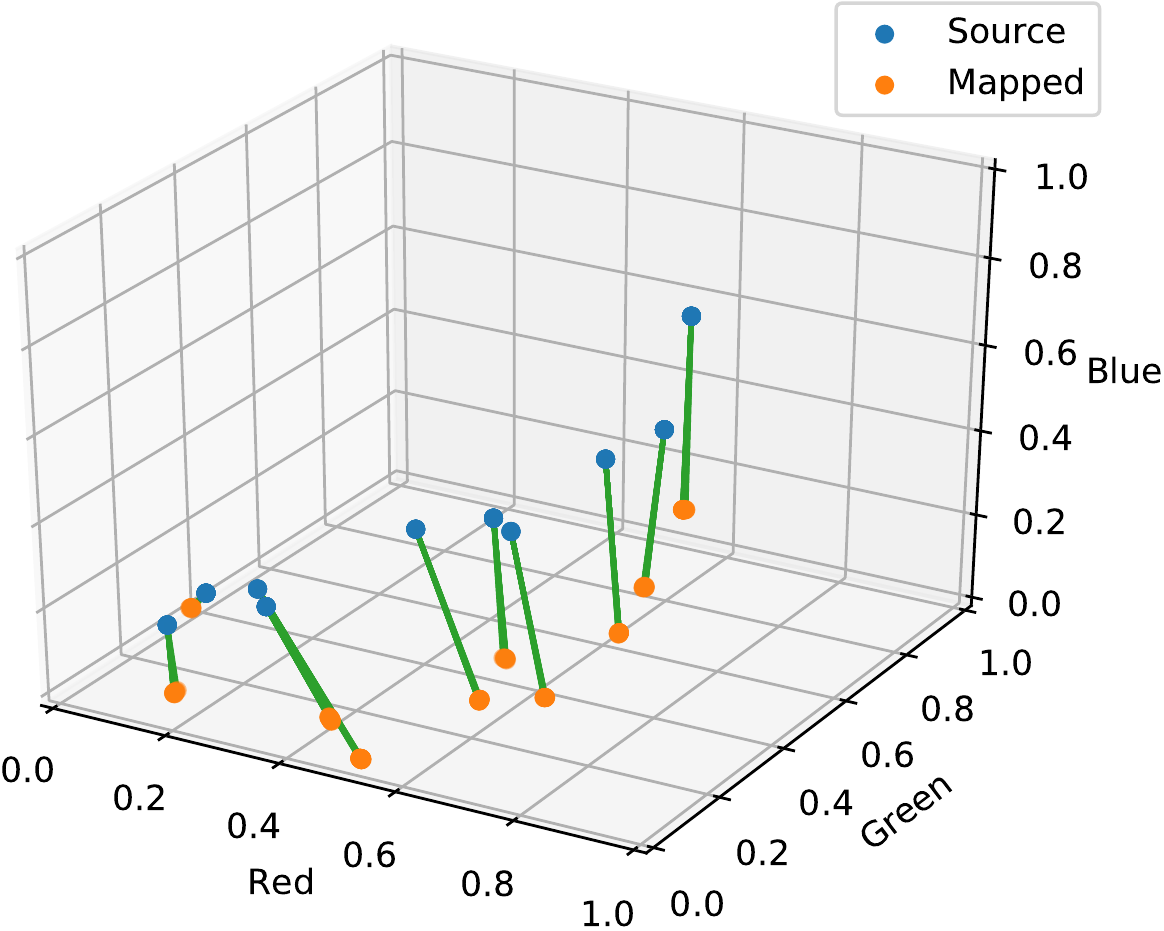}
		\caption{Bijection solver}
	\end{subfigure}
	\caption{Visualization of the mapping with various noise $z$ for color transfer.}
	\label{color_transfer_fix_mapping}
\end{figure}

\subsection{Kantorovich Solver with Different Noise $z$}

As our Kantorovich solver learns a stochastic mapping, in this part, we check the stochasticness of the mapping learned by the Kantorovich solver. 

\subsubsection{Toy Experiments}

Fig. \ref{k_discrete2continuous_different_z} and Fig. \ref{k_toy_different_z} shows results of Kantorovich solver on 2D toy examples. As we can see, different noise $z$ results in slightly different results, which indicates that our Kantorovich solver learns a stochastic mapping.

\begin{figure}[h]
	\vspace{-10pt}
	\begin{subfigure}[b]{1.\linewidth}
		\begin{subfigure}[b]{0.33\linewidth}
			\centering
			\includegraphics[width=0.9\linewidth]{./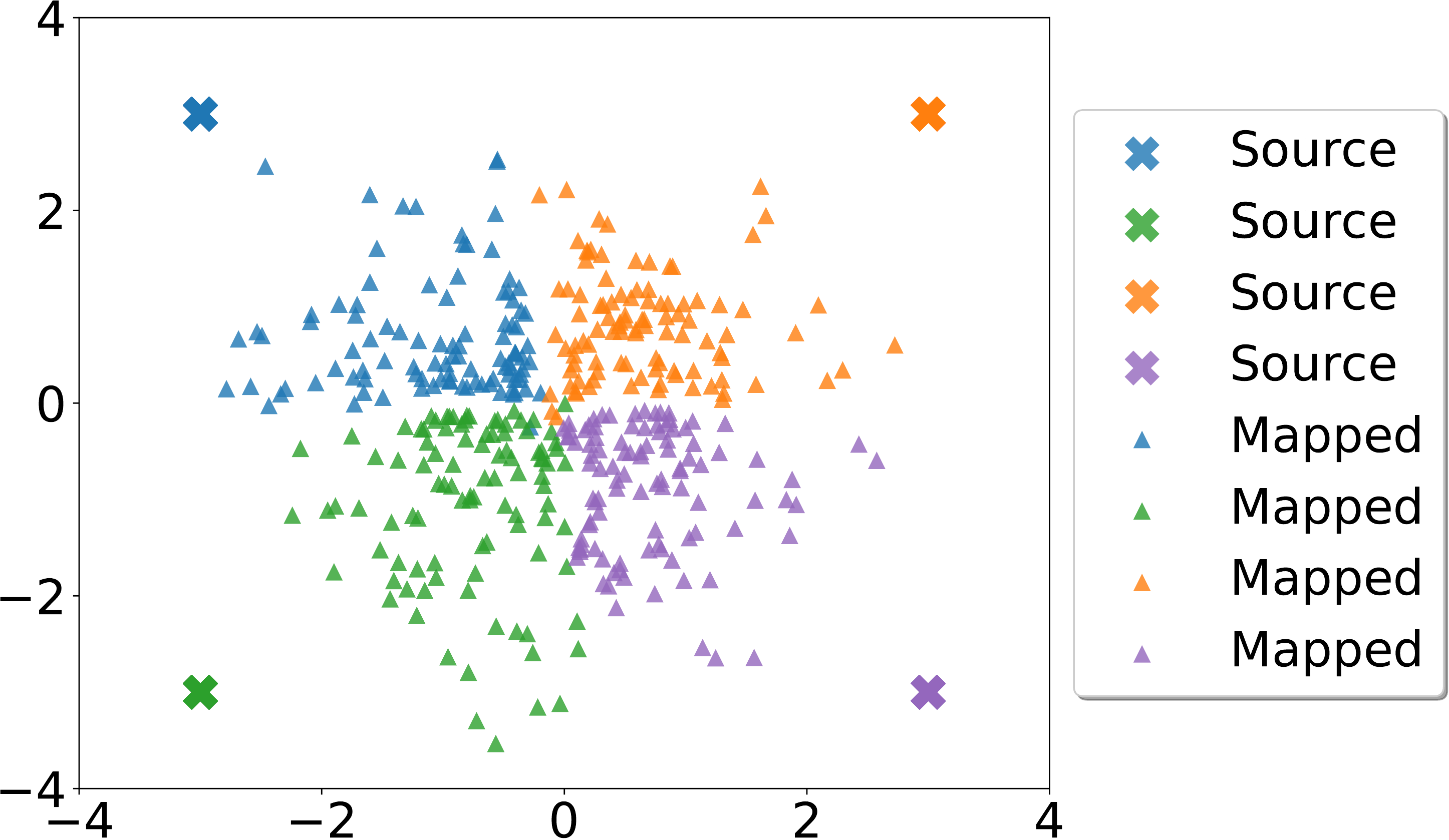} 
		\end{subfigure}
		\begin{subfigure}[b]{0.33\linewidth}
			\centering
			\includegraphics[width=0.9\linewidth]{./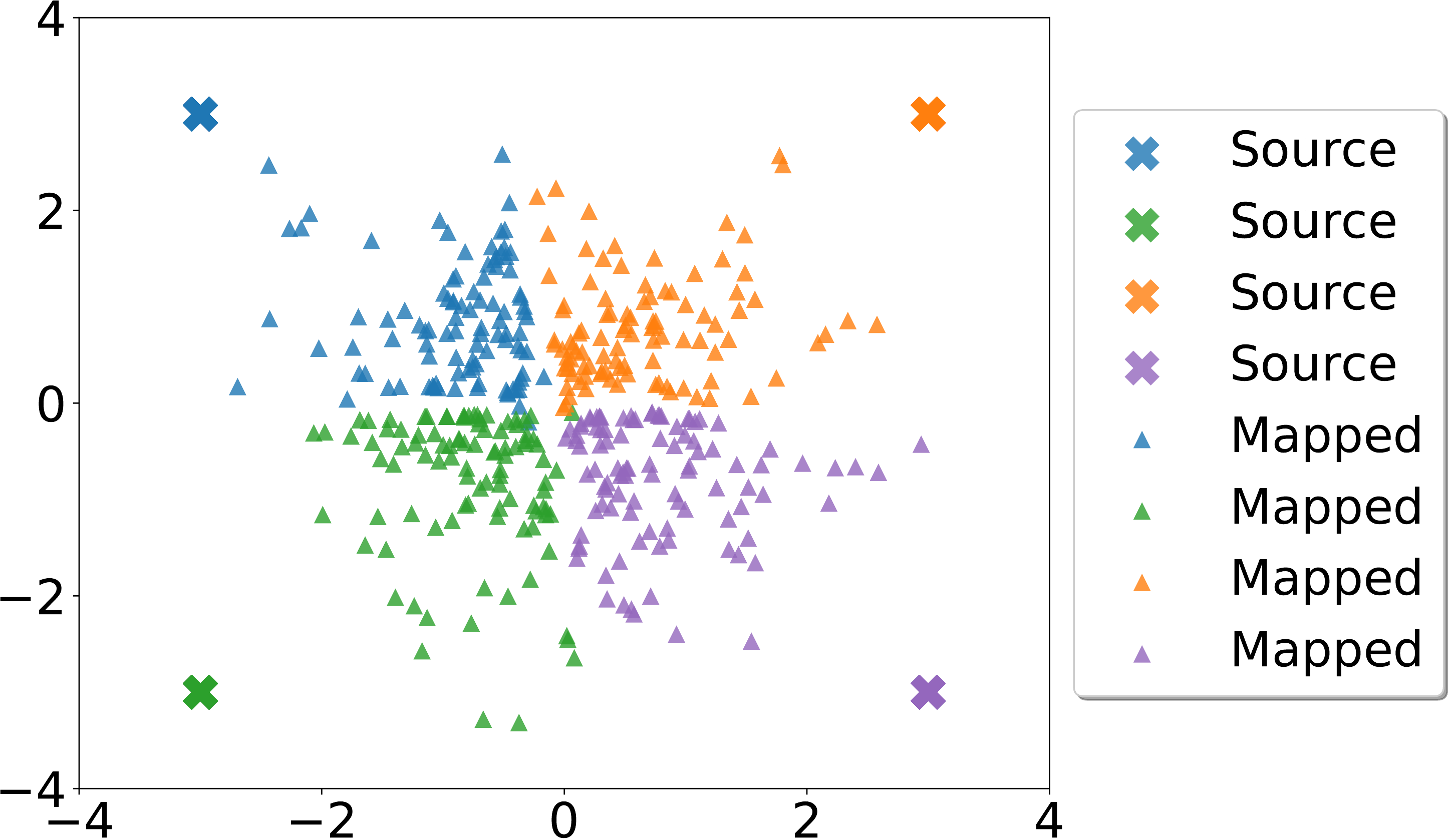} 
		\end{subfigure}
		\begin{subfigure}[b]{0.33\linewidth}
			\centering
			\includegraphics[width=0.9\linewidth]{./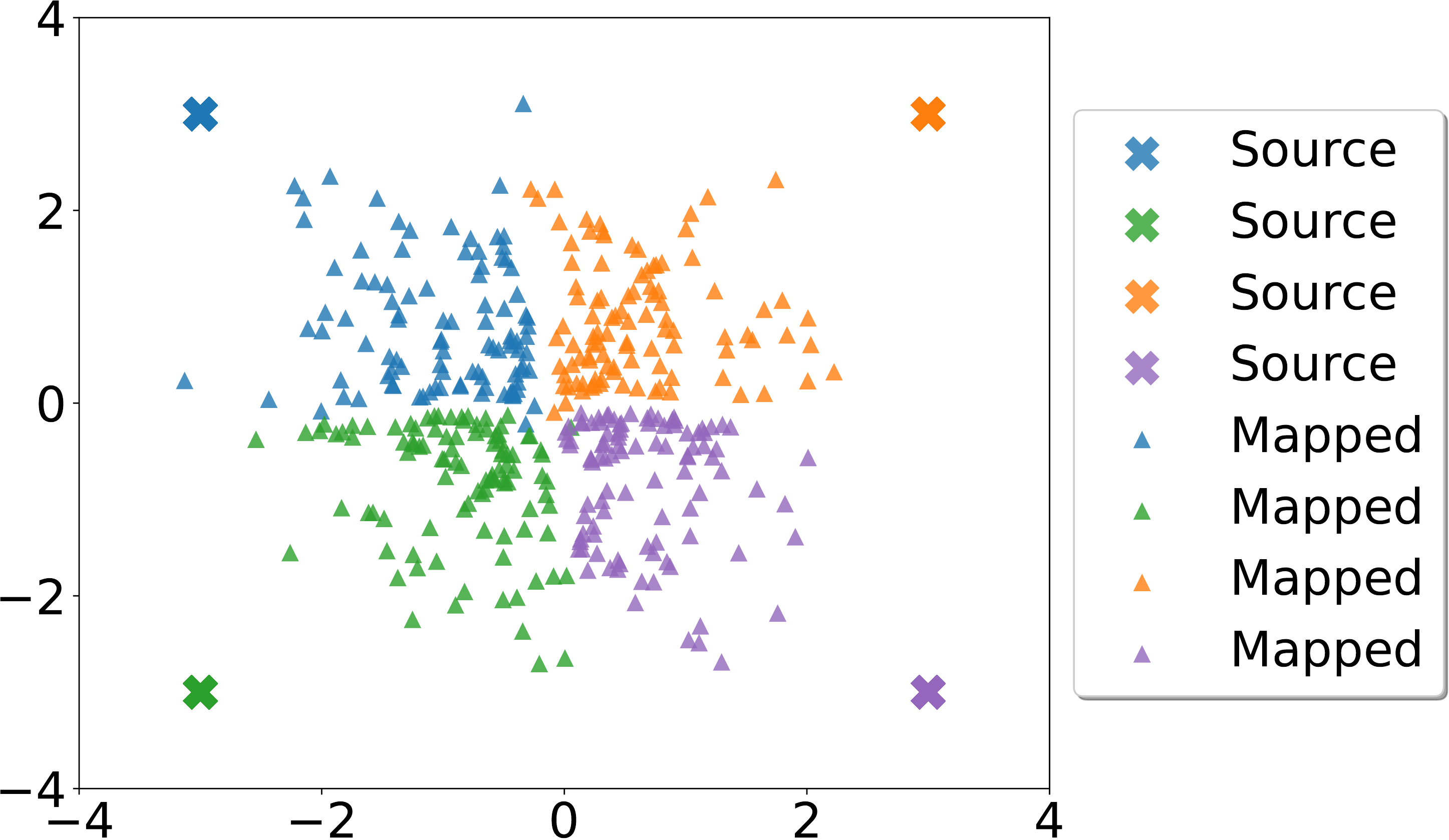} 
		\end{subfigure}
	\end{subfigure}
	\caption{Results with different noise $z$ on 2D discrete-to-continuous example.}
	\label{k_discrete2continuous_different_z}
\end{figure}

\begin{figure}[!h]
	\begin{subfigure}[b]{1.\linewidth}
		\begin{subfigure}[b]{0.33\linewidth}
			\centering
			\includegraphics[width=0.7\linewidth]{./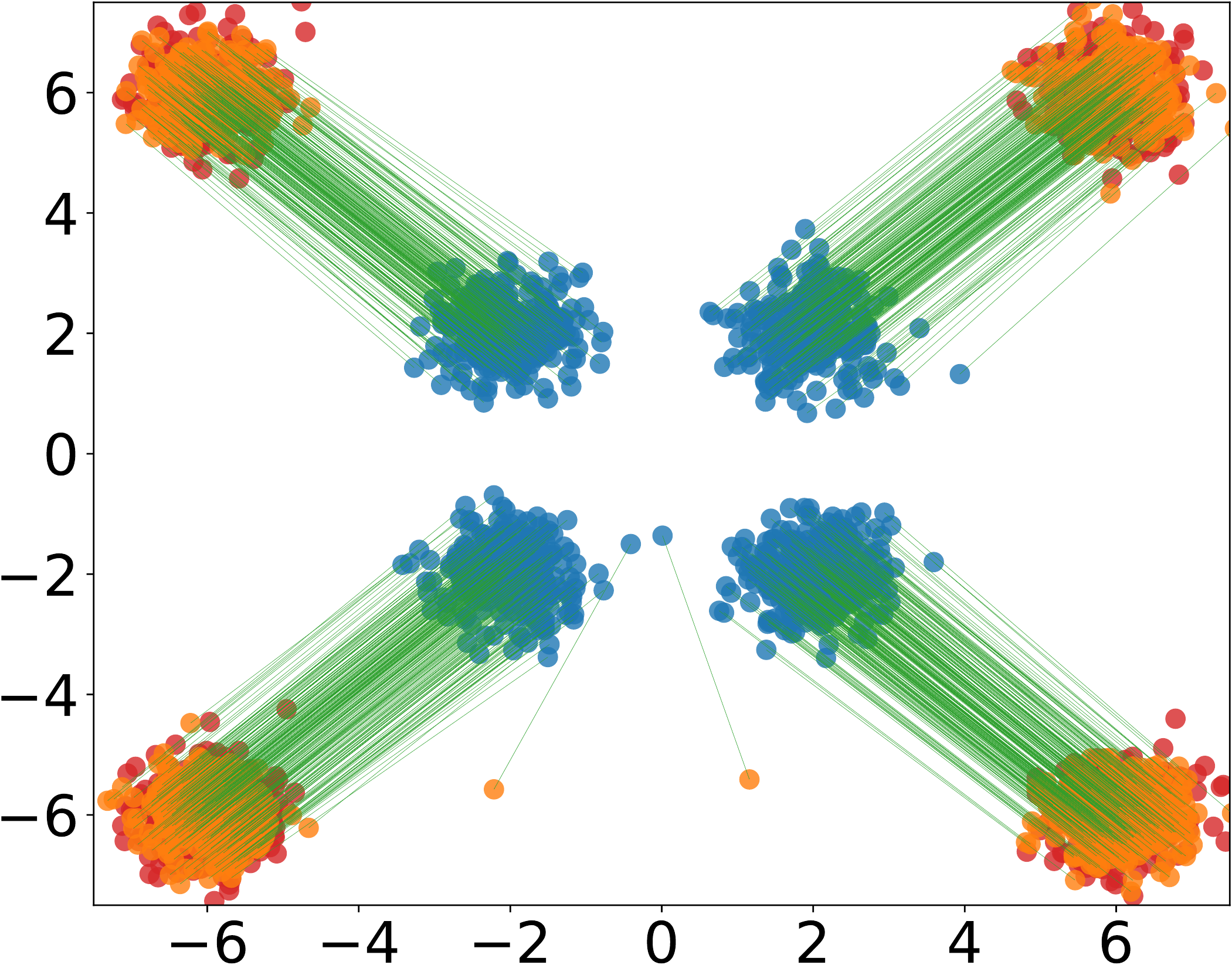} 
		\end{subfigure}
		\begin{subfigure}[b]{0.33\linewidth}
			\centering
			\includegraphics[width=0.7\linewidth]{./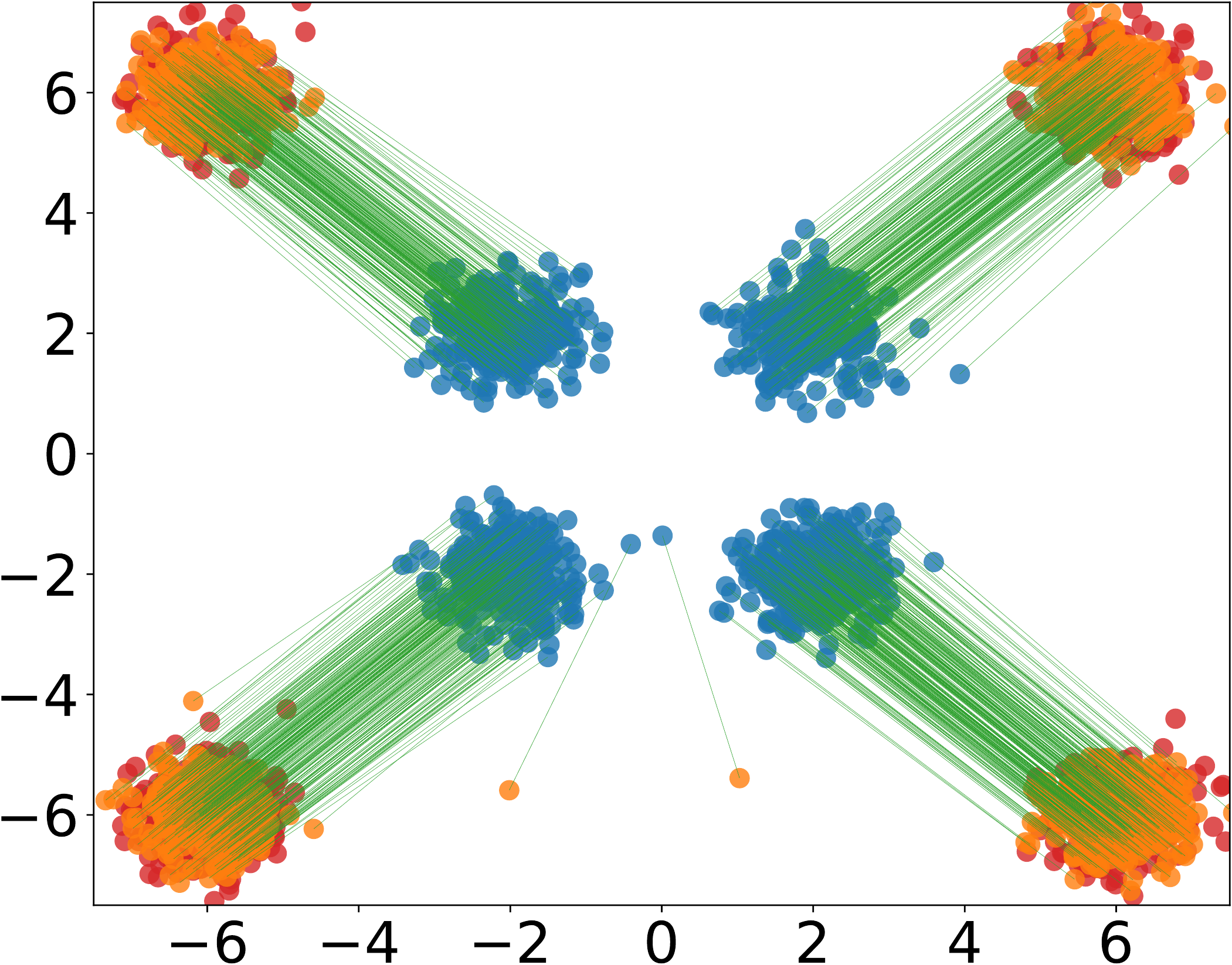} 
		\end{subfigure}
		\begin{subfigure}[b]{0.33\linewidth}
			\centering
			\includegraphics[width=0.7\linewidth]{./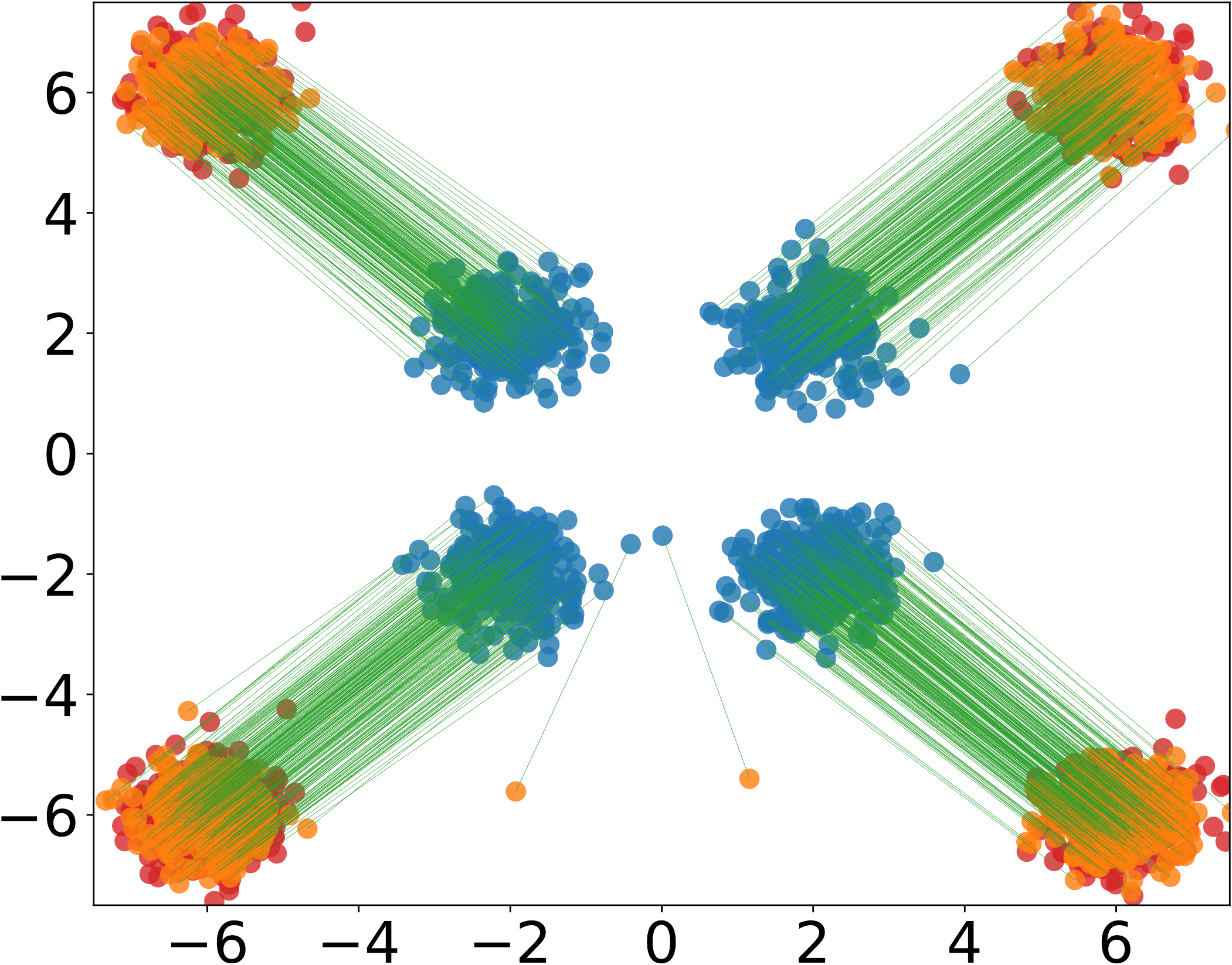} 
		\end{subfigure}
		\caption{Results with different noise $z$ on 4-Gaussian example.}
		\vspace{5pt}
	\end{subfigure}
	
	\begin{subfigure}[b]{1.\linewidth}
		\begin{subfigure}[b]{0.33\linewidth}
			\centering
			\hspace{-5pt}
			\includegraphics[width=0.7\linewidth]{./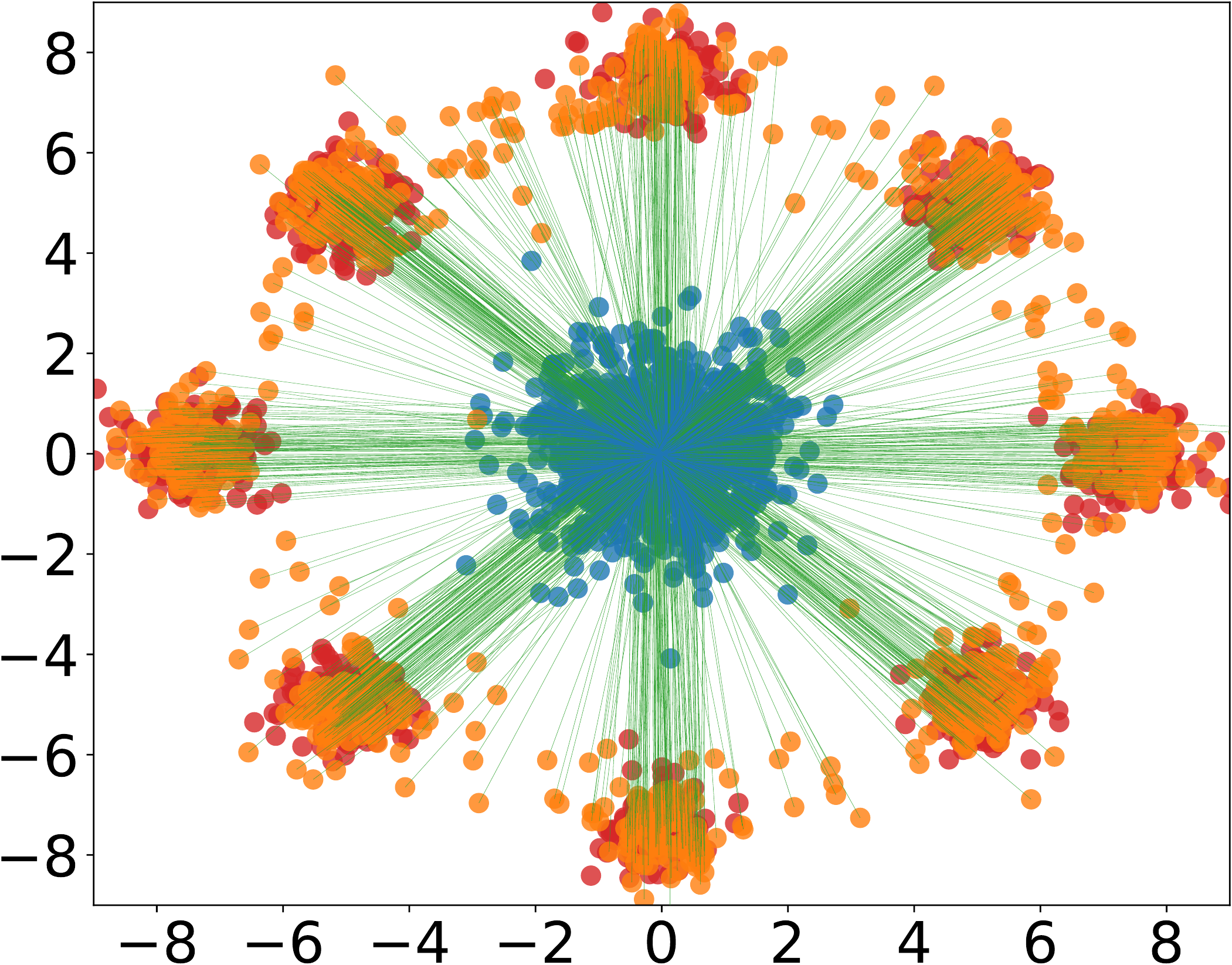} 
		\end{subfigure}
		\begin{subfigure}[b]{0.33\linewidth}
			\centering
			\hspace{-5pt}
			\includegraphics[width=0.7\linewidth]{./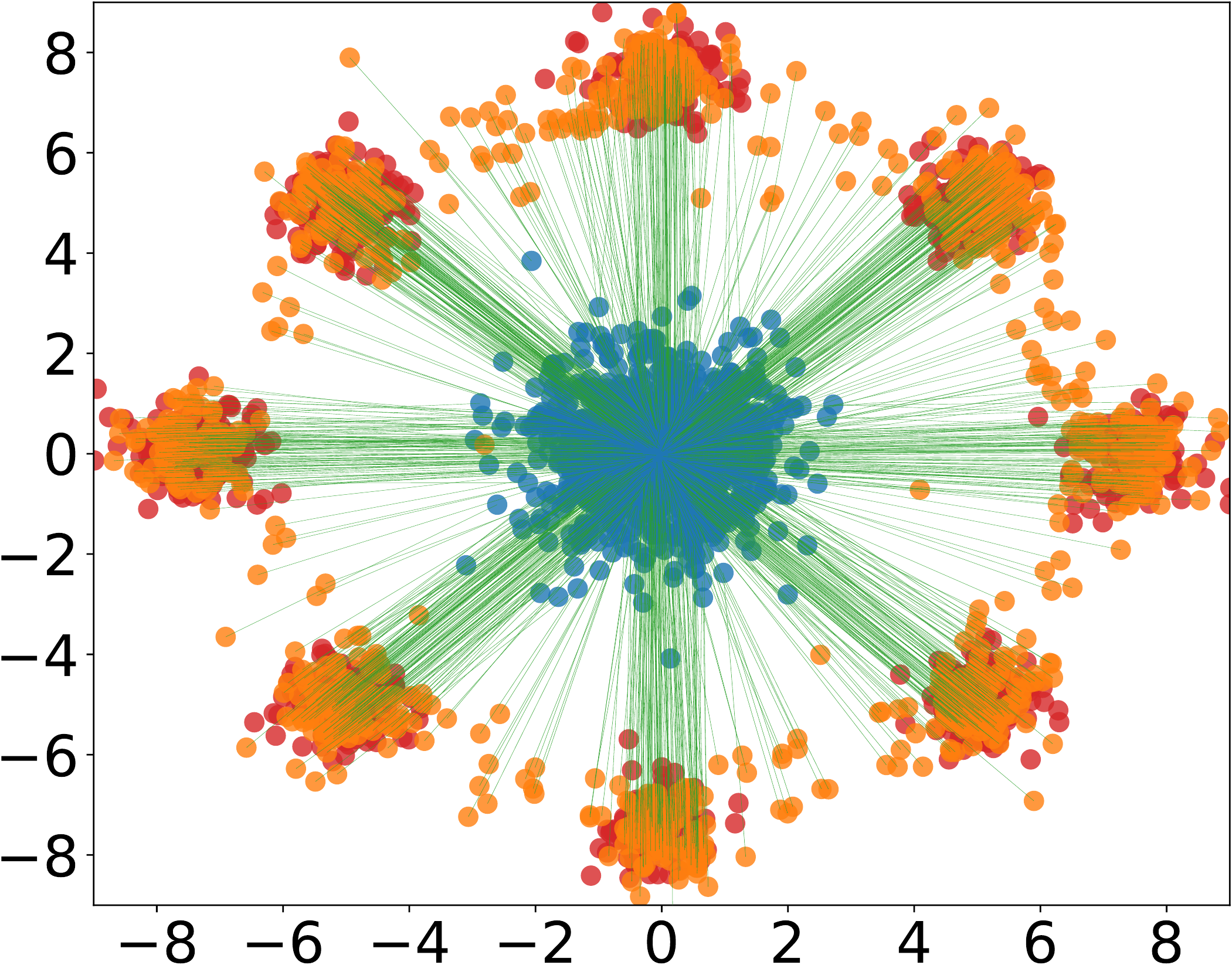} 
		\end{subfigure}
		\begin{subfigure}[b]{0.33\linewidth}
			\centering
			\hspace{-5pt}
			\includegraphics[width=0.7\linewidth]{./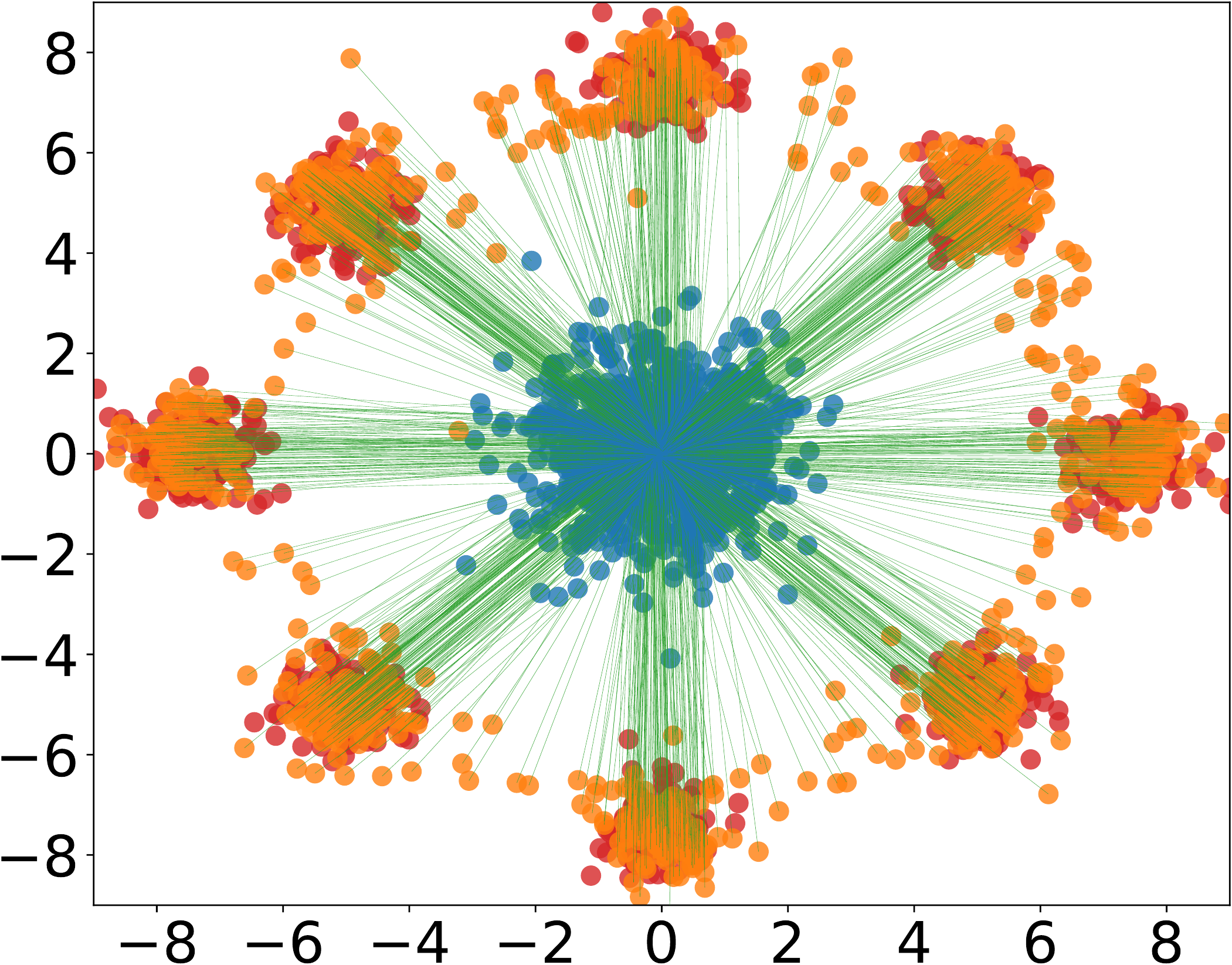} 
		\end{subfigure}
		\caption{Results with different noise $z$ on 8-Gaussian example.}
		\vspace{5pt}
	\end{subfigure}
	
	\begin{subfigure}[b]{1.\linewidth}
		\begin{subfigure}[b]{0.33\linewidth}
			\centering
			\hspace{-5pt}
			\includegraphics[width=0.7\linewidth]{./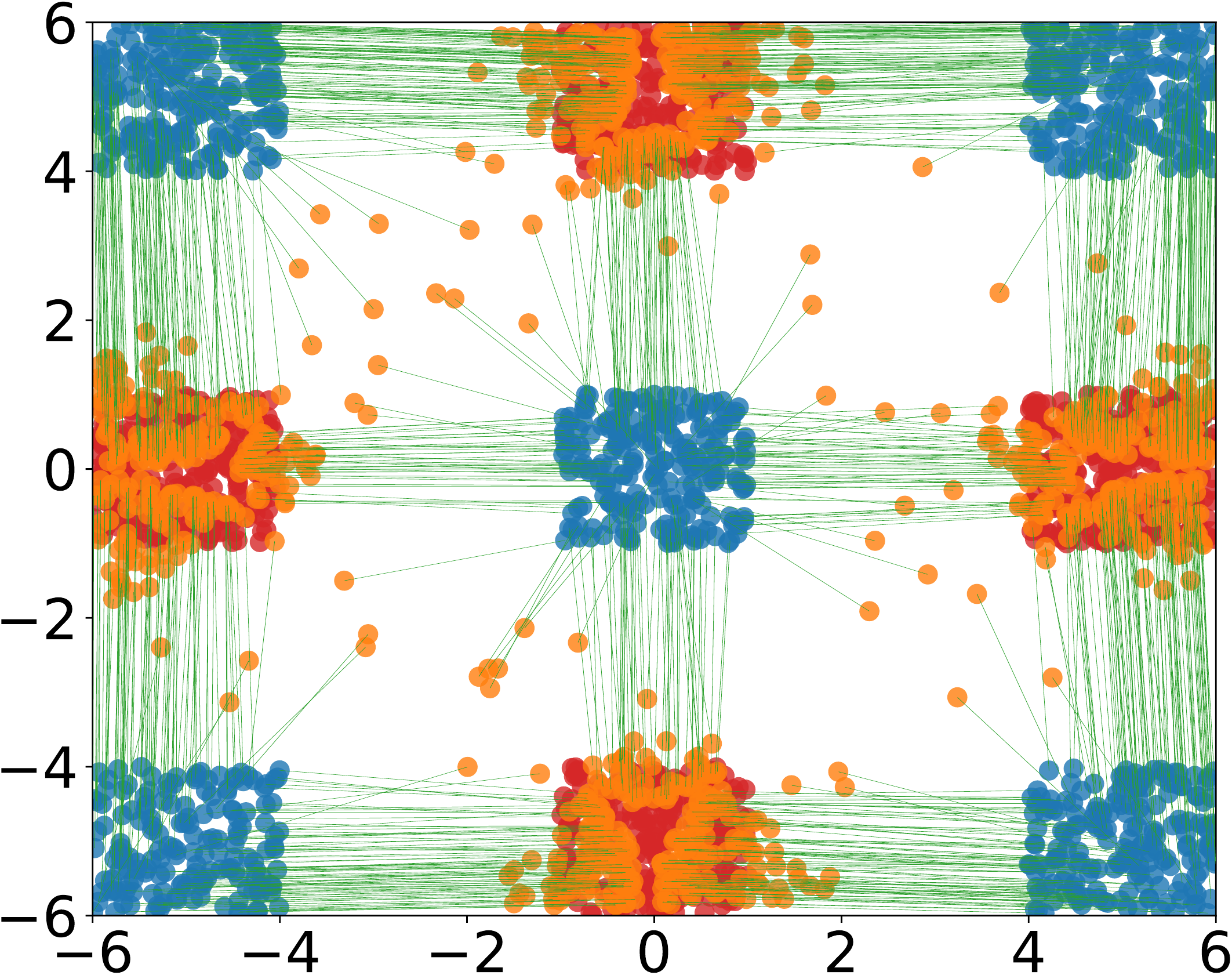} 
		\end{subfigure}
		\begin{subfigure}[b]{0.33\linewidth}
			\centering
			\hspace{-5pt}
			\includegraphics[width=0.7\linewidth]{./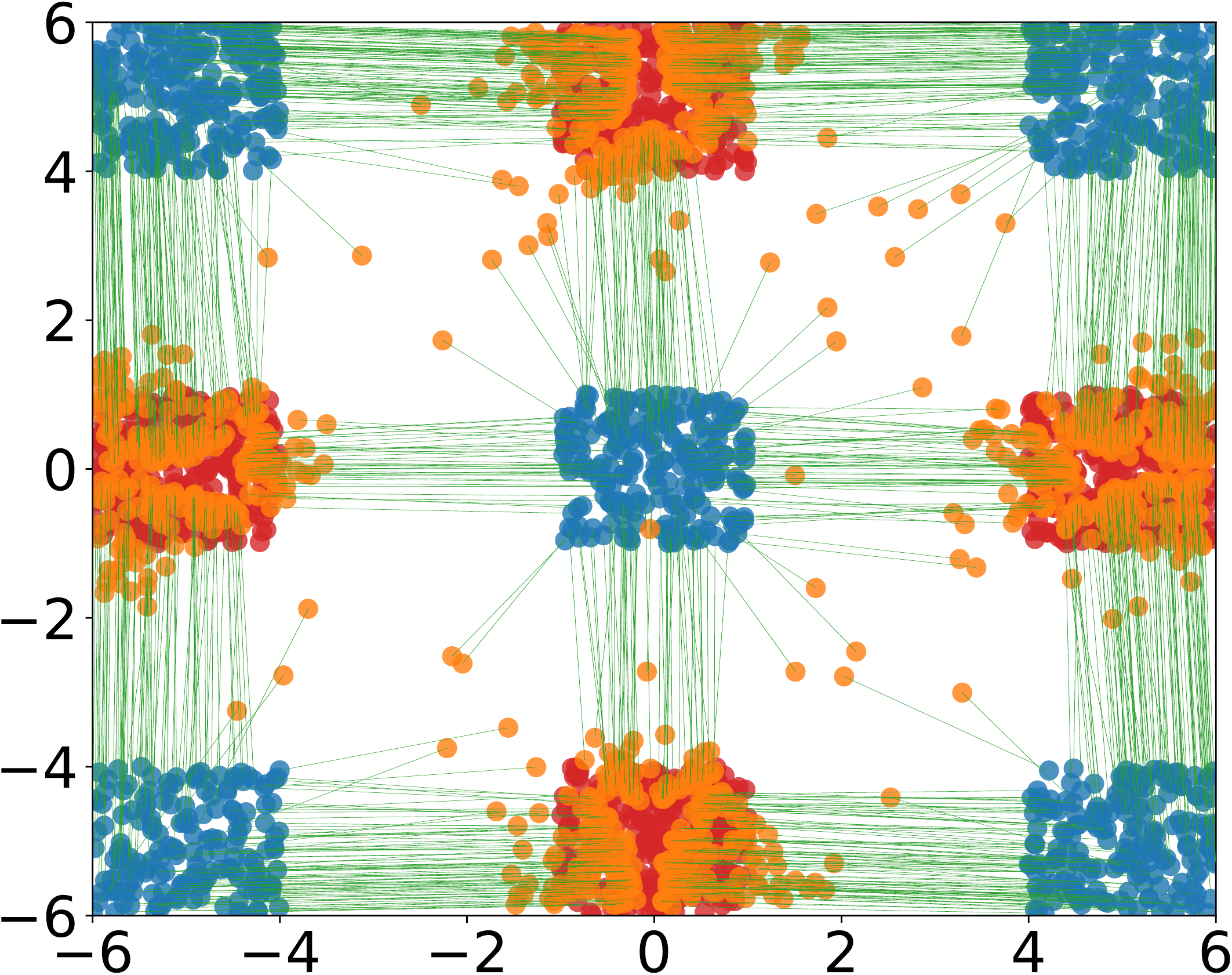} 
		\end{subfigure}
		\begin{subfigure}[b]{0.33\linewidth}
			\centering
			\hspace{-5pt}
			\includegraphics[width=0.7\linewidth]{./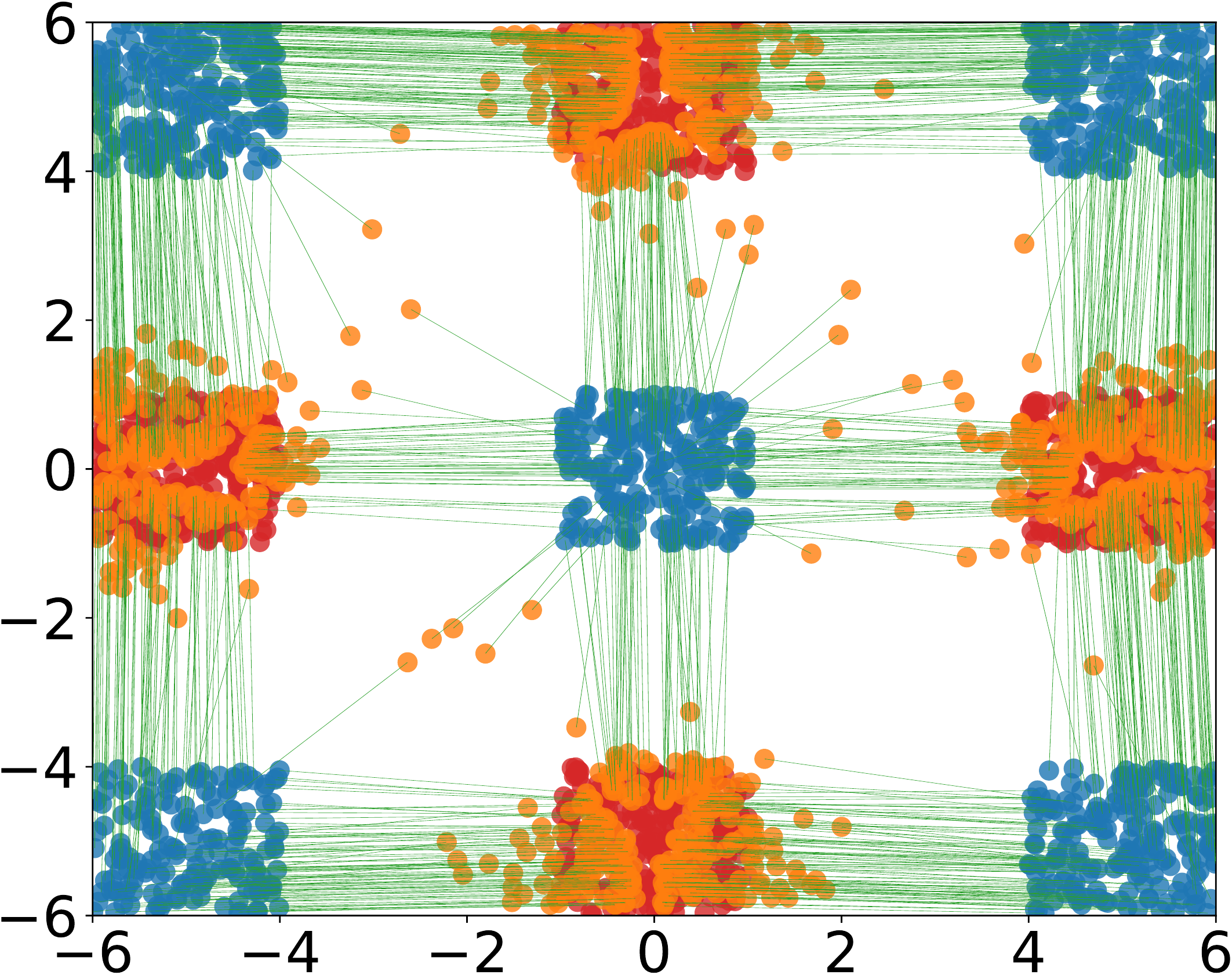} 
		\end{subfigure}
		\caption{Results with different noise $z$ on Checkerboard example.}
		\vspace{5pt}
	\end{subfigure}
	\caption{Results of our Kantorovich solver with different noise $z$ on 2D examples. Blue: source samples. Red: target samples. Orange: mapped samples. Green: the mapping. Number of samples are 1000.}
	\label{k_toy_different_z}
	\vspace{-15pt}
\end{figure}

\subsubsection{Unsupervised Image-to-Image Translation}

Table \ref{image_translation_table_appendix} and Fig. \ref{image2image_translation_appendix} shows results of Kantorovich solver on unsupervised image-to-image translation over different noise $z$. According to Table \ref{image_translation_table_appendix}, we can see that Kantorovich solver learns a stochastic mapping as different noise $z$ results in slightly different scores. According to Fig. \ref{image2image_translation_appendix}, different noise $z$ yield visually similar results.

\begin{figure*}[!h]
	\vspace{-10pt}
	\begin{subfigure}[t]{0.5\textwidth}
		\centering
		\includegraphics[width=0.99\textwidth]{./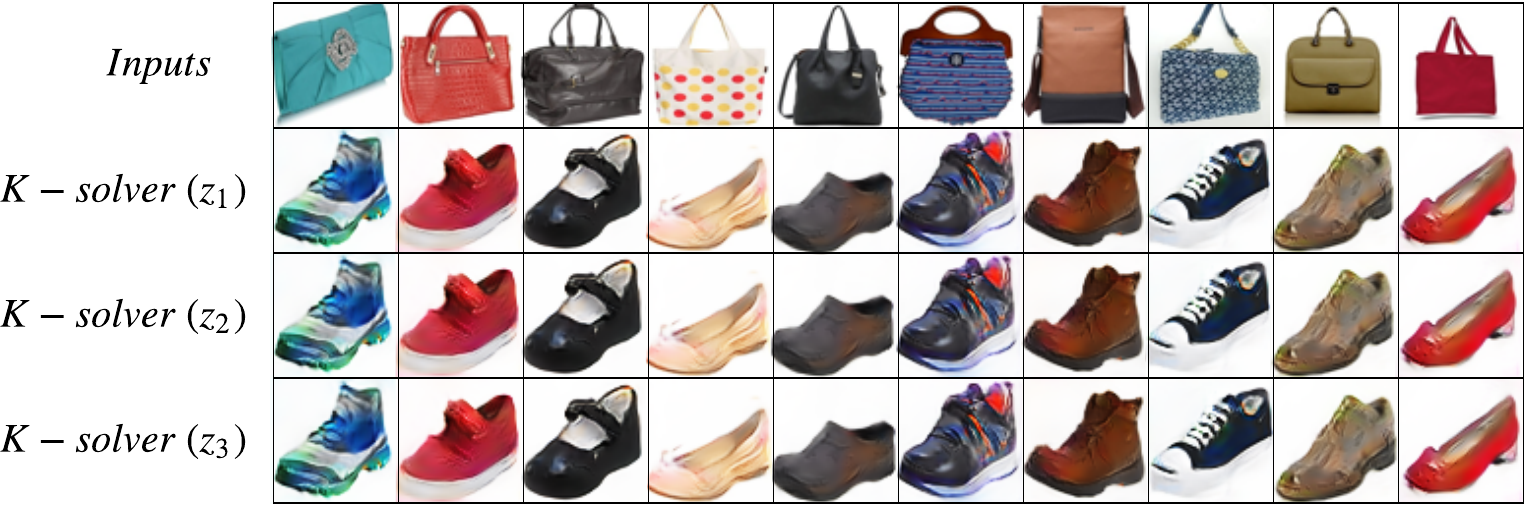}
		\caption{Handbags-to-Shoes}
	\end{subfigure}
	\begin{subfigure}[t]{0.5\textwidth}
		\centering
		\includegraphics[width=0.99\textwidth]{./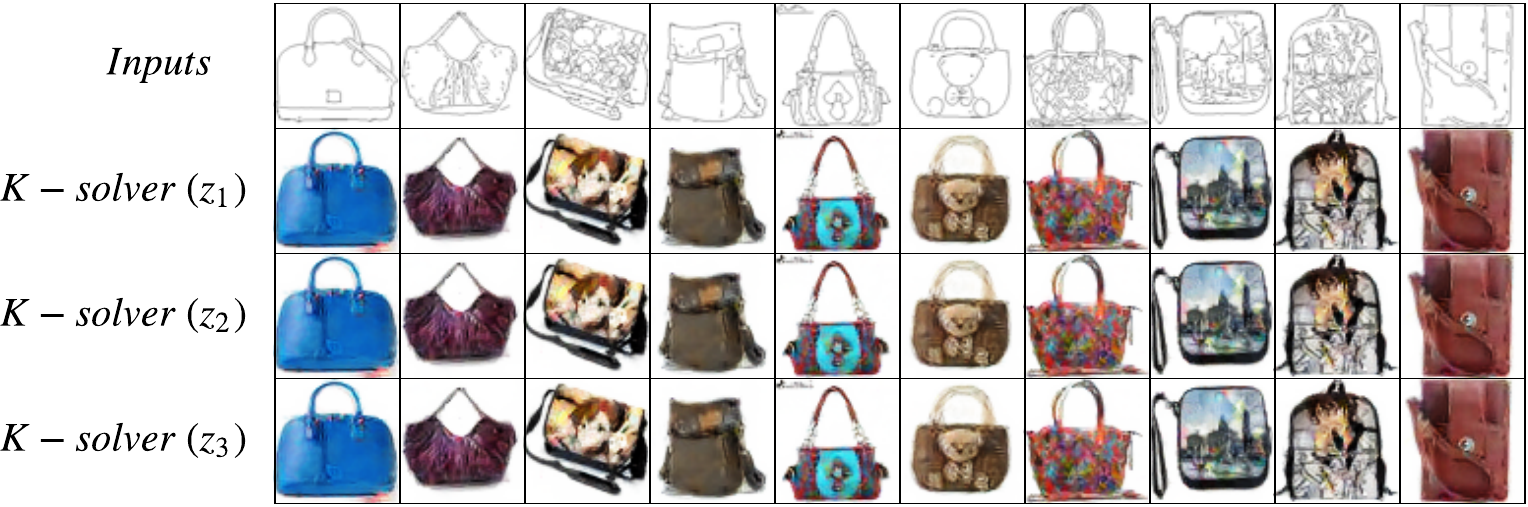}
		\caption{Edges-to-Handbags}
	\end{subfigure}
	\vspace{-5pt}
	\caption{Visual results on unsupervised image-to-image translation.}
	\label{image2image_translation_appendix}
\end{figure*}

\begin{table}[!h]
	\centering
	\vspace{-20pt}
	\caption{Quantitative results on image-to-image translation.}
	\vspace{3pt}
	\begin{tabular}{c|c|c|c|c}
		\multicolumn{5}{c}{Handbags2shoes: h $\rightarrow$ s.  Edges2handbags: e $\rightarrow$ h.} \\
		\hline
		\multirow{2}{*}{Method} & \multicolumn{2}{c|}{~~~~~~~~~~~~~~KID~~~~~~~~~~~~~~} & \multicolumn{2}{c}{~mismatching degree~~} \\ \cline{2-5} 
		& ~~~h $\rightarrow$ s~~~ & e $\rightarrow$ h & ~~~h $\rightarrow$ s~~~    & e $\rightarrow$ h                 \\ \hline 
		K-solver ($z_1$)               & ~~2.35$\pm$0.05   & ~~1.48$\pm$0.10  & ~~8.9               & 330.02             \\ \hline
		K-solver ($z_2$)               & ~~2.36$\pm$0.05   & ~~1.66$\pm$0.09  & ~~8.9               & 329.83             \\ \hline
		K-solver ($z_3$)              & ~~2.21$\pm$0.05   & ~~1.59$\pm$0.09  & ~~8.9               & 329.99             \\ \hline
	\end{tabular}
	\label{image_translation_table_appendix}
	\vspace{-15pt}
\end{table}

\subsubsection{Color Transfer}

Fig. \ref{color_transfer_diff_z} shows results of Kantorovich solver on color transfer with different noise $z$. 
According to the 3D color distributions, we can see that Kantorovich solver learns a stochastic mapping. According to the transferred images, different noise $z$ yield visually similar results.

\begin{figure}[h]
	\begin{subfigure}[b]{0.33\linewidth}
		\centering
		\includegraphics[width=0.8\linewidth]{./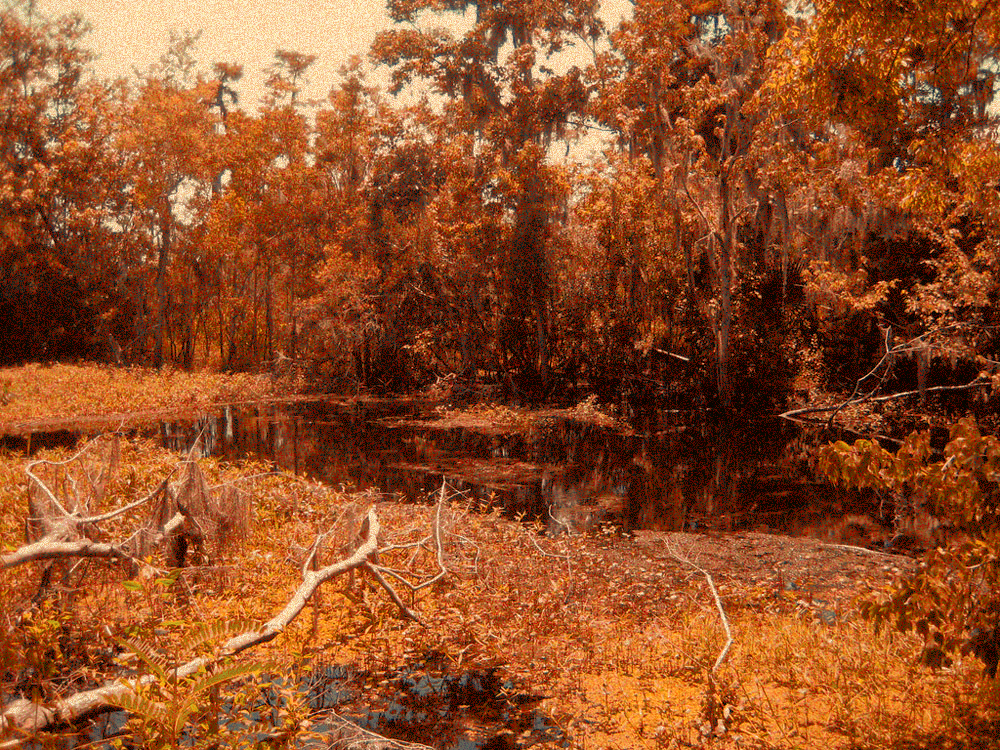} 
	\end{subfigure}
	\begin{subfigure}[b]{0.33\linewidth}
		\centering
		\includegraphics[width=0.8\linewidth]{./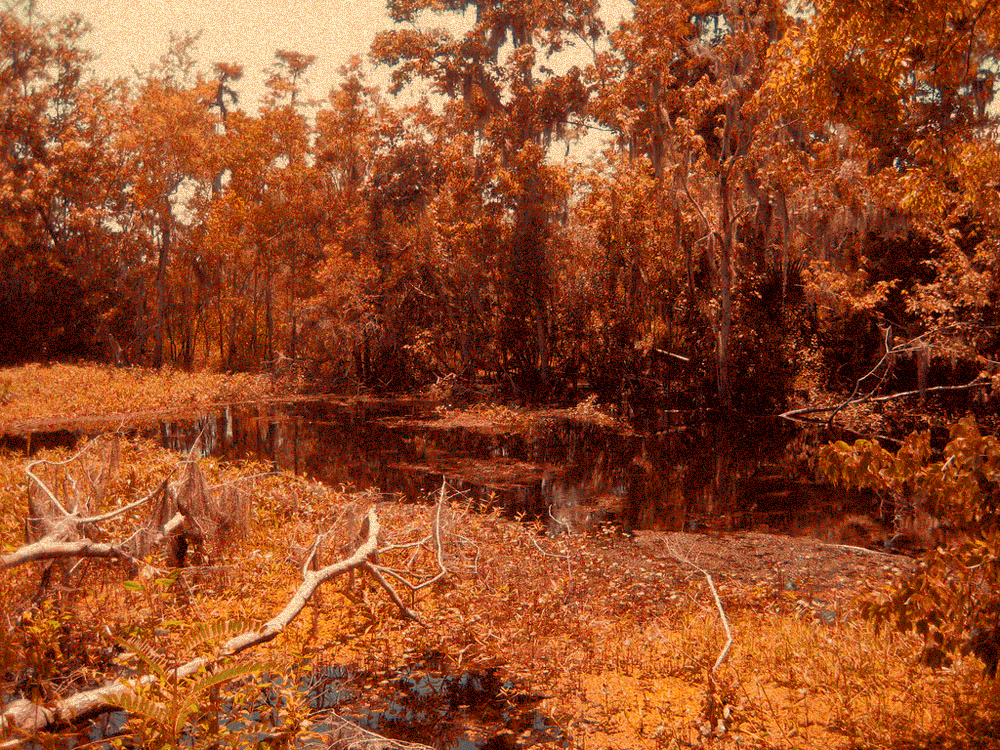} 
	\end{subfigure}
	\begin{subfigure}[b]{0.33\linewidth}
		\centering
		\includegraphics[width=0.8\linewidth]{./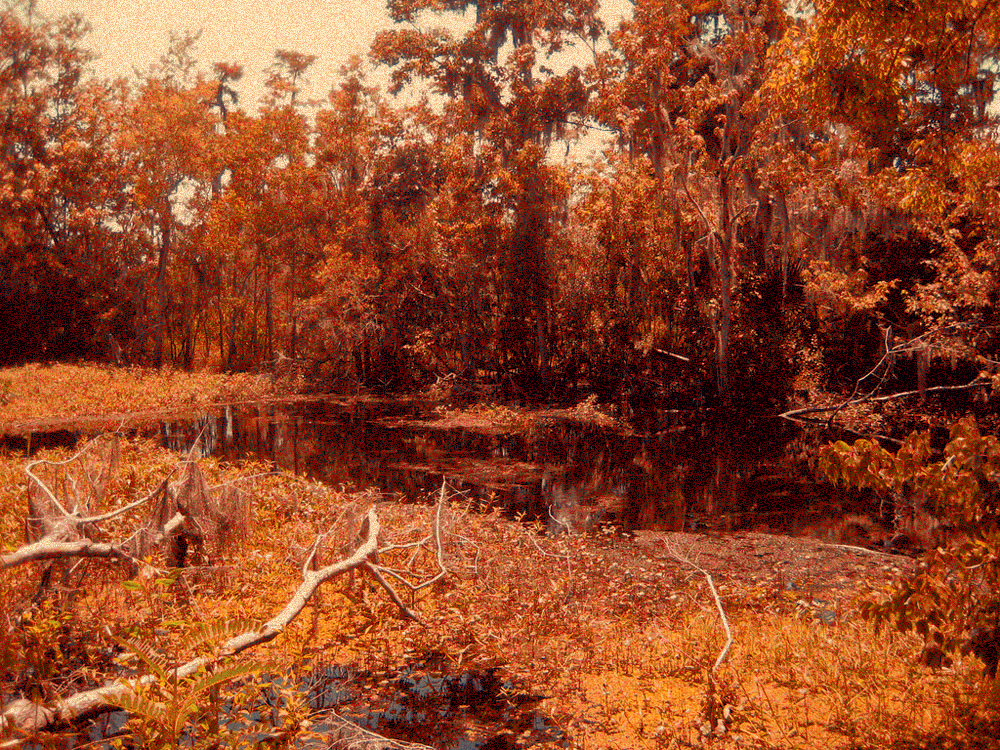} 
	\end{subfigure}
	\vspace{10pt}
	
	\begin{subfigure}[b]{0.33\linewidth}
		\centering
		\includegraphics[width=0.8\linewidth]{./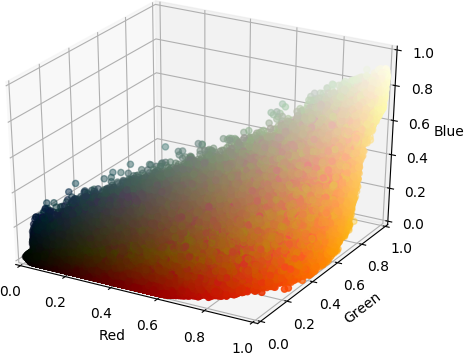} 
		\caption{Kantorovich ($z_1$)}
	\end{subfigure}
	\begin{subfigure}[b]{0.33\linewidth}
		\centering
		\includegraphics[width=0.8\linewidth]{./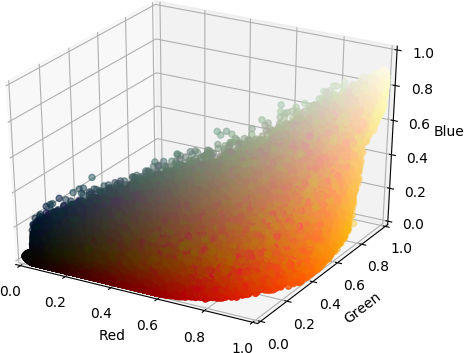} 
		\caption{Kantorovich ($z_2$)}
	\end{subfigure}
	\begin{subfigure}[b]{0.33\linewidth}
		\centering
		\includegraphics[width=0.8\linewidth]{./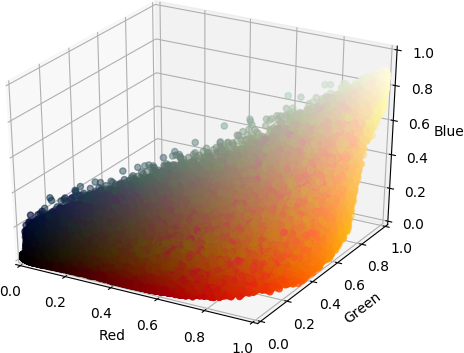} 
		\caption{Kantorovich ($z_3$)}
	\end{subfigure}
	\vspace{-3pt} 
	\caption{Transferred results and corresponding 3D color distributions with different noise $z$ for the color transfer example in the main body.}
	\label{color_transfer_diff_z}
	\vspace{-15pt}
\end{figure}

\clearpage
\subsection{More Results for Color Transfer}

\begin{figure}[h!]
	\begin{subfigure}[b]{0.5\linewidth}
		\begin{subfigure}[b]{0.5\linewidth}
			\centering
			\includegraphics[width=0.9\linewidth]{./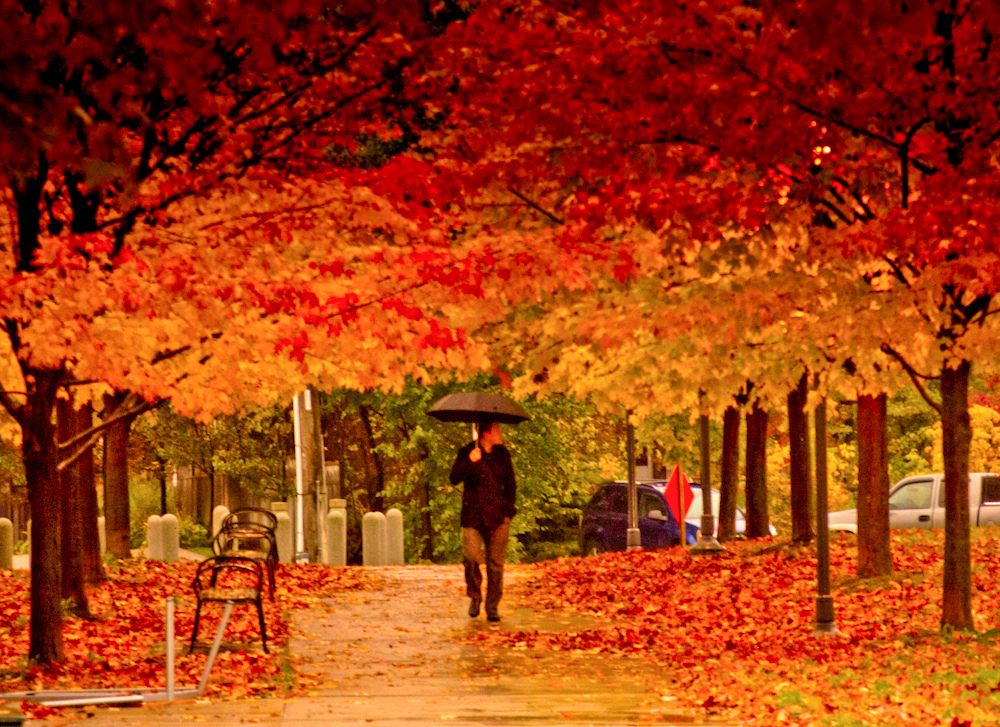} 
		\end{subfigure}
		\begin{subfigure}[b]{0.5\linewidth}
			\centering
			\includegraphics[width=0.9\linewidth]{./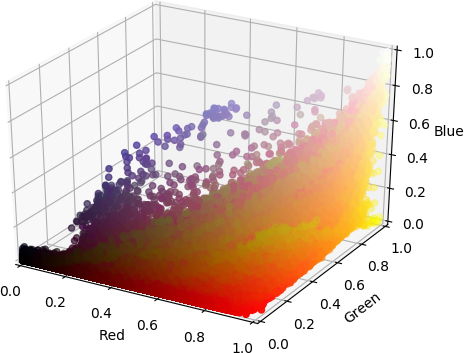} 
		\end{subfigure}
		\caption{Source image} 
	\end{subfigure}
	\begin{subfigure}[b]{0.5\linewidth}
		\begin{subfigure}[b]{0.5\linewidth}
			\centering
			\includegraphics[width=0.9\linewidth]{./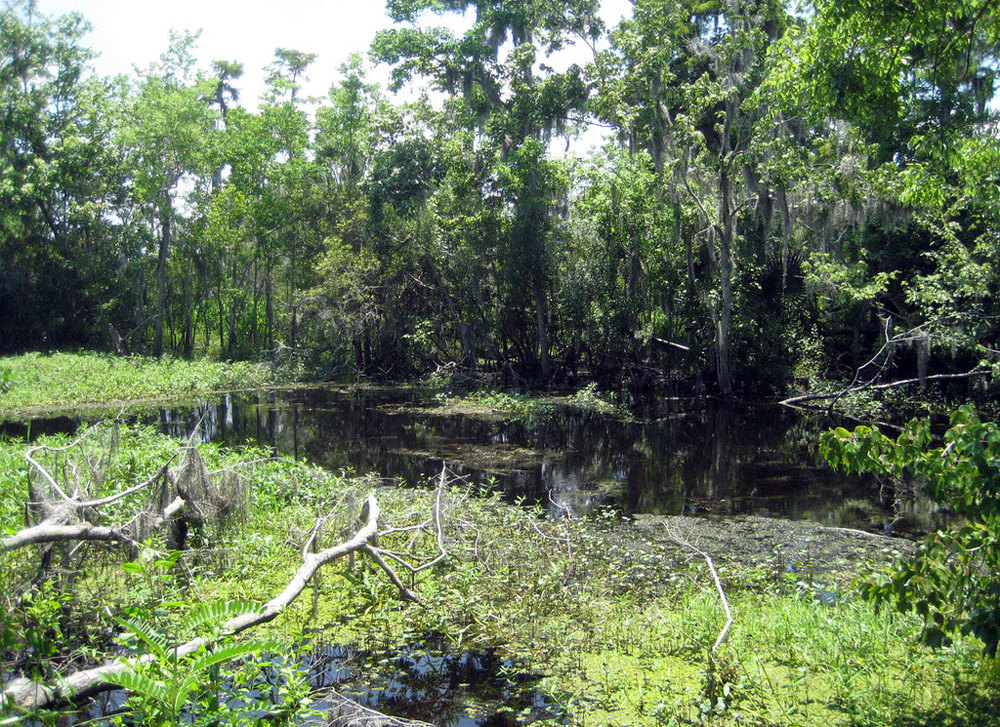} 
		\end{subfigure}
		\begin{subfigure}[b]{0.5\linewidth}
			\centering
			\includegraphics[width=0.9\linewidth]{./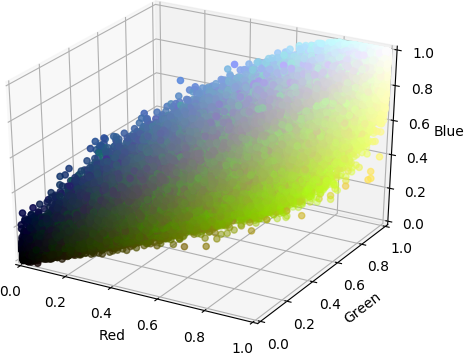} 
		\end{subfigure}
		\caption{Target image} 
	\end{subfigure}
	
	\begin{subfigure}[b]{0.2\linewidth}
		\centering
		\includegraphics[width=0.99\linewidth]{./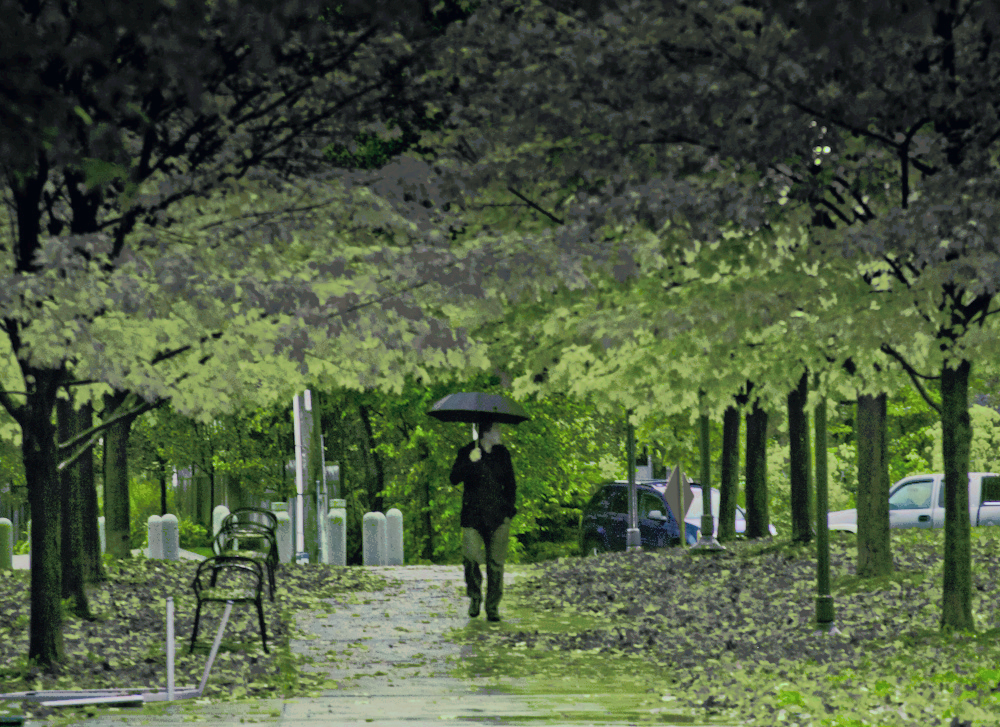} 
	\end{subfigure}
	\begin{subfigure}[b]{0.2\linewidth}
		\centering
		\includegraphics[width=0.99\linewidth]{./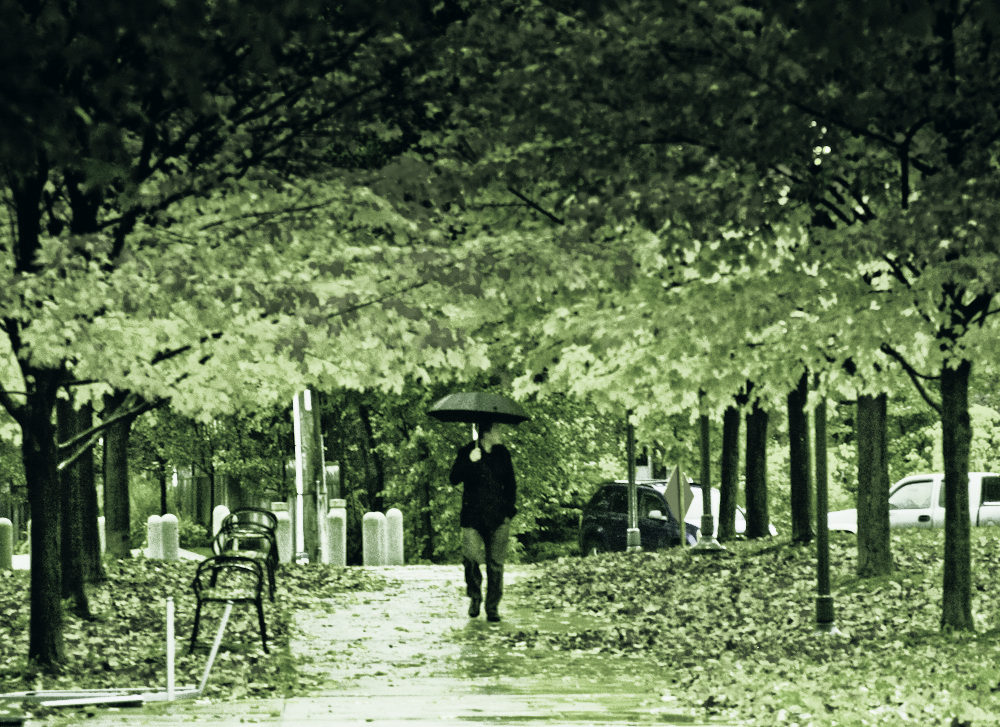} 
	\end{subfigure}
	\begin{subfigure}[b]{0.2\linewidth}
		\centering
		\includegraphics[width=0.99\linewidth]{./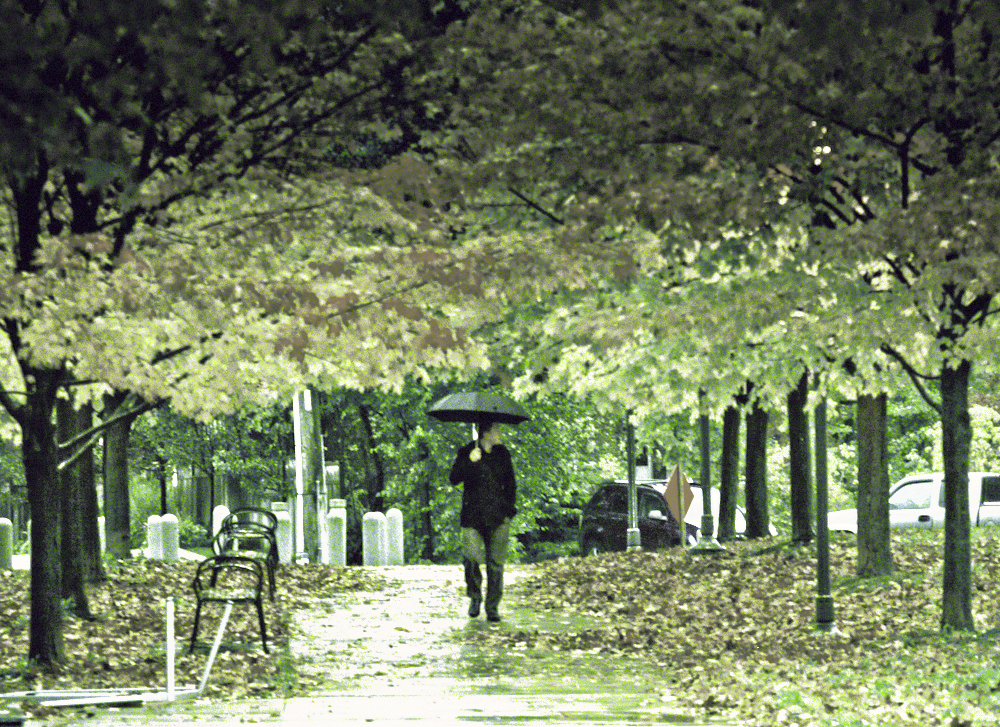} 
	\end{subfigure}
	\begin{subfigure}[b]{0.2\linewidth}
		\centering
		\includegraphics[width=0.99\linewidth]{./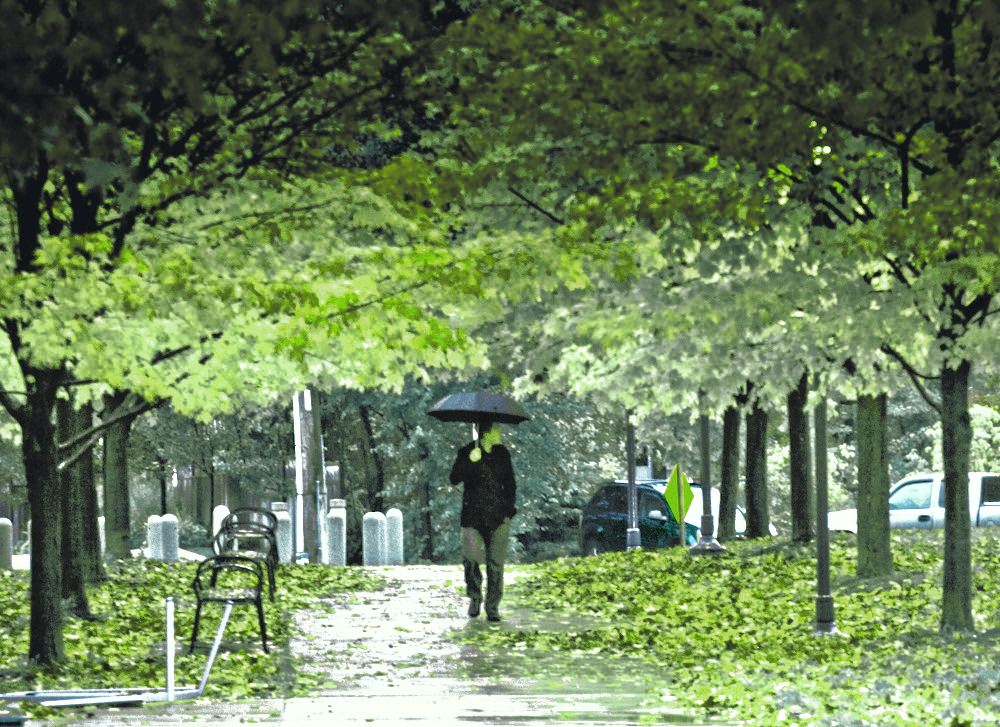} 
	\end{subfigure}
	\begin{subfigure}[b]{0.2\linewidth}
		\centering
		\includegraphics[width=0.99\linewidth]{./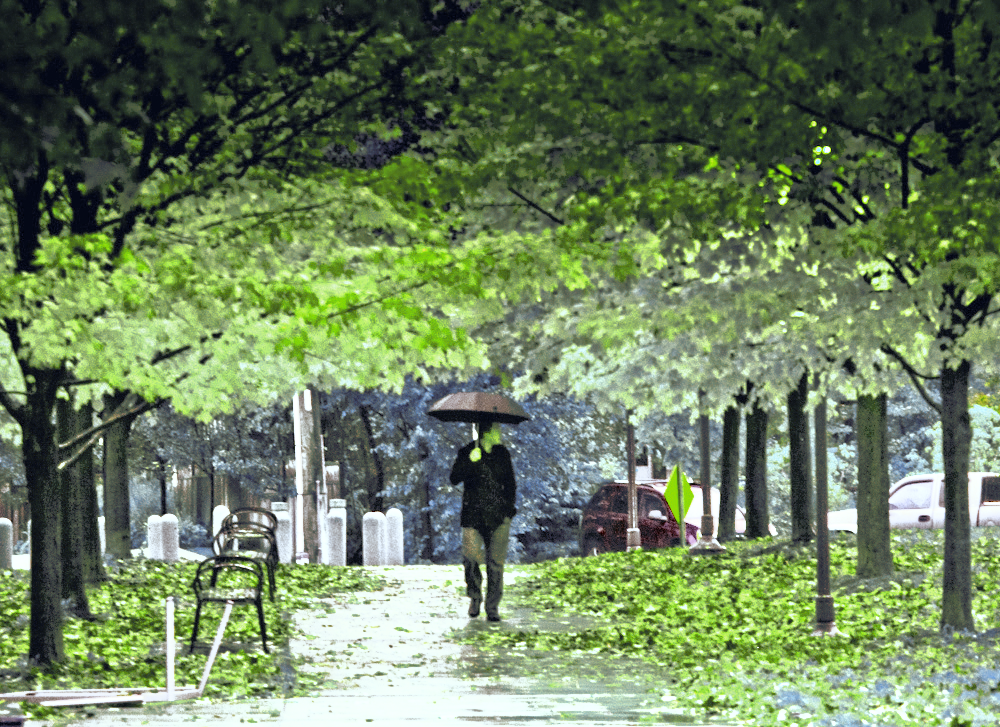} 
	\end{subfigure}
	
	\begin{subfigure}[b]{0.2\linewidth}
		\centering
		\includegraphics[width=0.99\linewidth]{./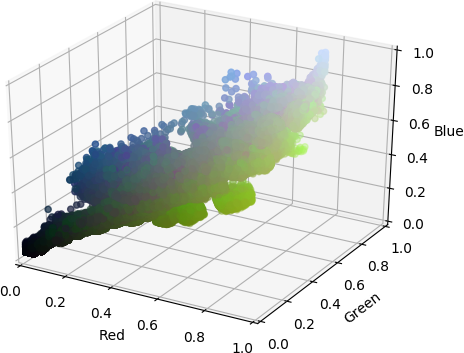} 
		\caption{ROT} 
	\end{subfigure}
	\begin{subfigure}[b]{0.2\linewidth}
		\centering
		\includegraphics[width=0.99\linewidth]{./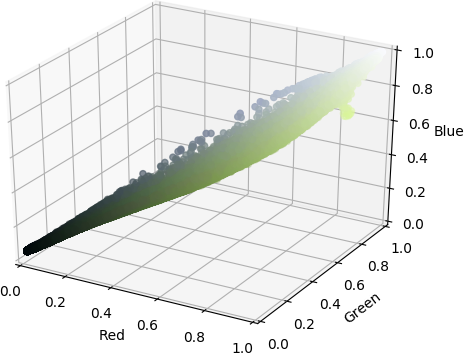} 
		\caption{BOT} 
	\end{subfigure}
	\begin{subfigure}[b]{0.2\linewidth}
		\centering
		\includegraphics[width=0.99\linewidth]{./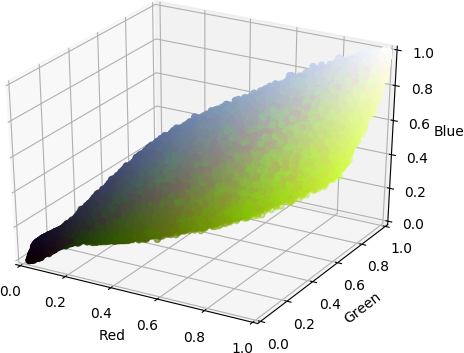} 
		\caption{Kantorovich} 
	\end{subfigure}
	\begin{subfigure}[b]{0.2\linewidth}
		\centering
		\includegraphics[width=0.99\linewidth]{./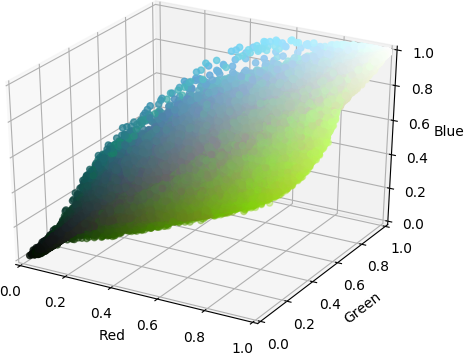} 
		\caption{Monge} 
	\end{subfigure}
	\begin{subfigure}[b]{0.2\linewidth}
		\centering
		\includegraphics[width=0.99\linewidth]{./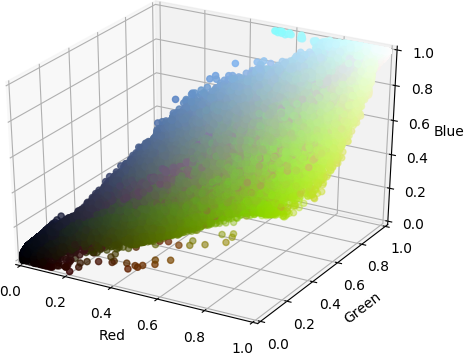} 
		\caption{Bijection} 
	\end{subfigure}
	
	\begin{subfigure}[b]{0.33\linewidth}
		\centering
		\includegraphics[width=0.7\linewidth]{./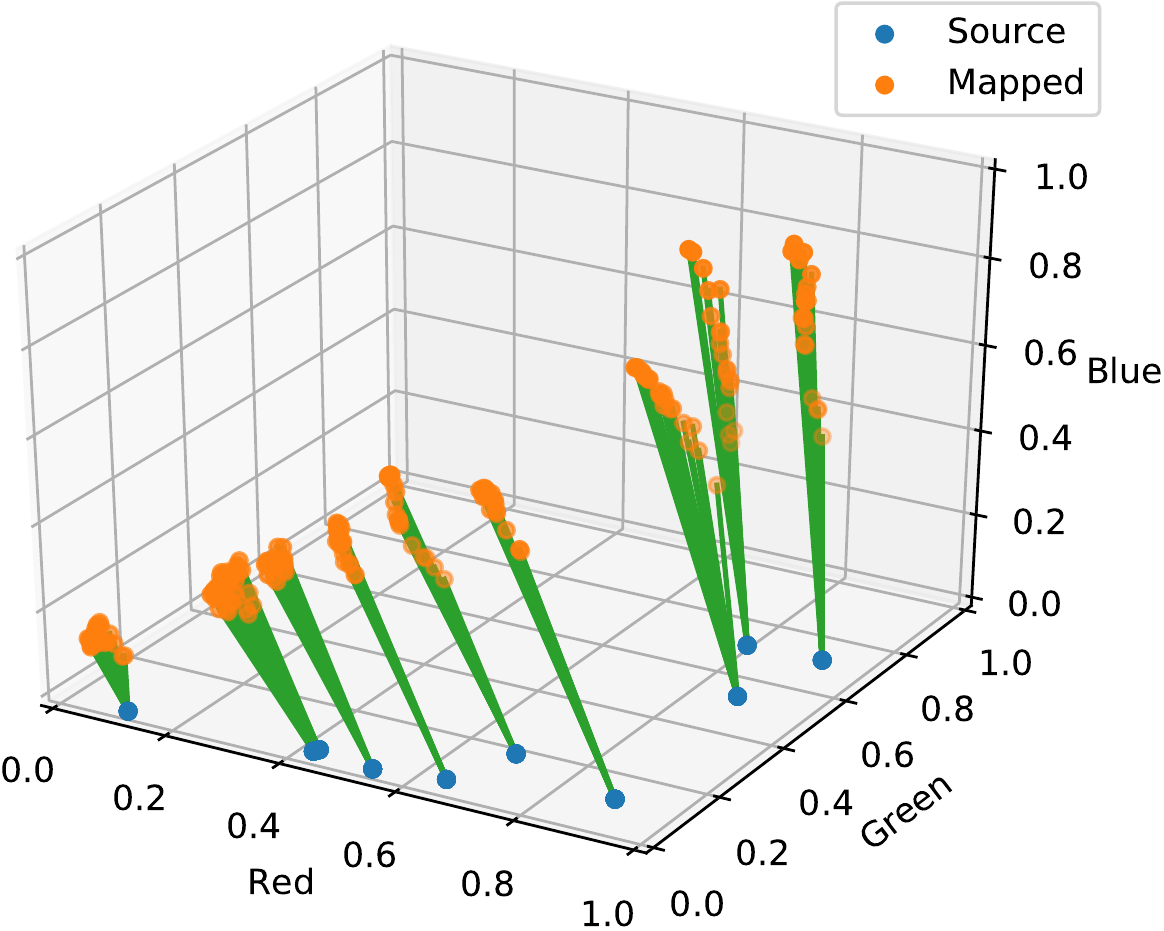} 
		\caption{Kantorovich solver} 
	\end{subfigure}
	\begin{subfigure}[b]{0.33\linewidth}
		\centering
		\includegraphics[width=0.7\linewidth]{./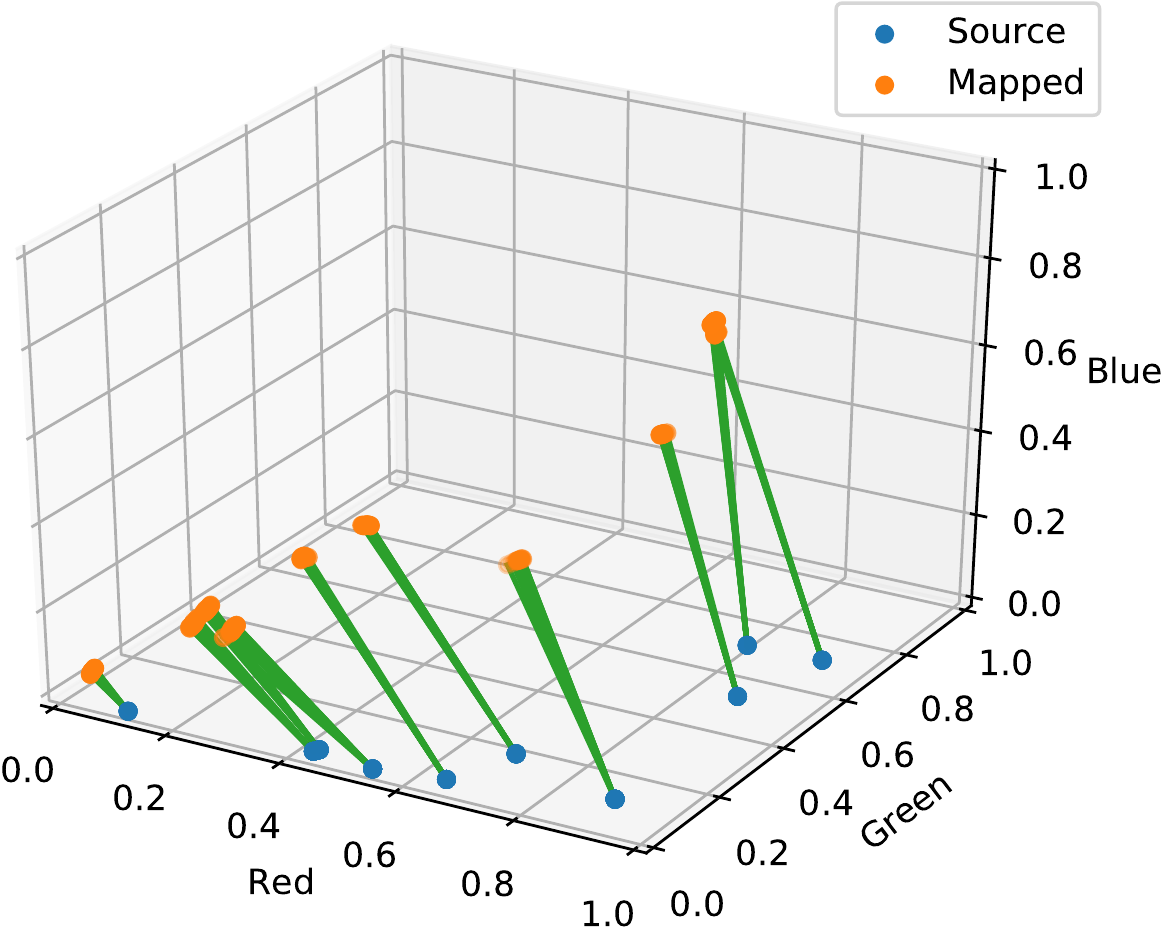}
		\caption{Monge solver}
	\end{subfigure}
	\begin{subfigure}[b]{0.33\linewidth}
		\centering
		\includegraphics[width=0.7\linewidth]{./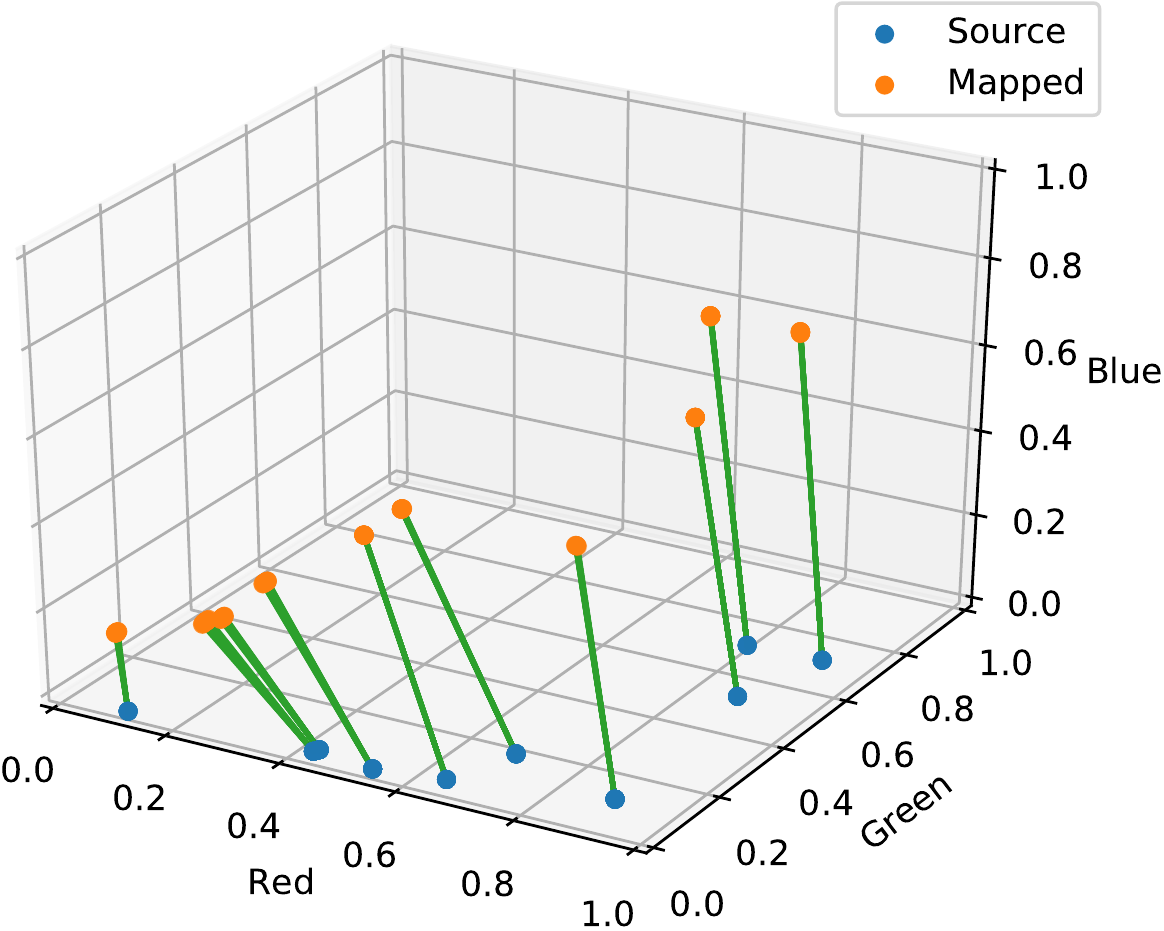}
		\caption{Bijection solver}
	\end{subfigure}
	\caption{(a) Source image (b) Target image (c-g) Transfer results of ROT, BOT, Kantorovich solver, Monge solver and Bijection solver, respectively (h-j) Mapping learned by Kantorovich solver, Monge solver and Bijection solver, respectively}
\end{figure}

\clearpage
\begin{figure}[!h]
	\begin{subfigure}[b]{0.5\linewidth}
		\begin{subfigure}[b]{0.5\linewidth}
			\centering
			\includegraphics[width=0.9\linewidth]{./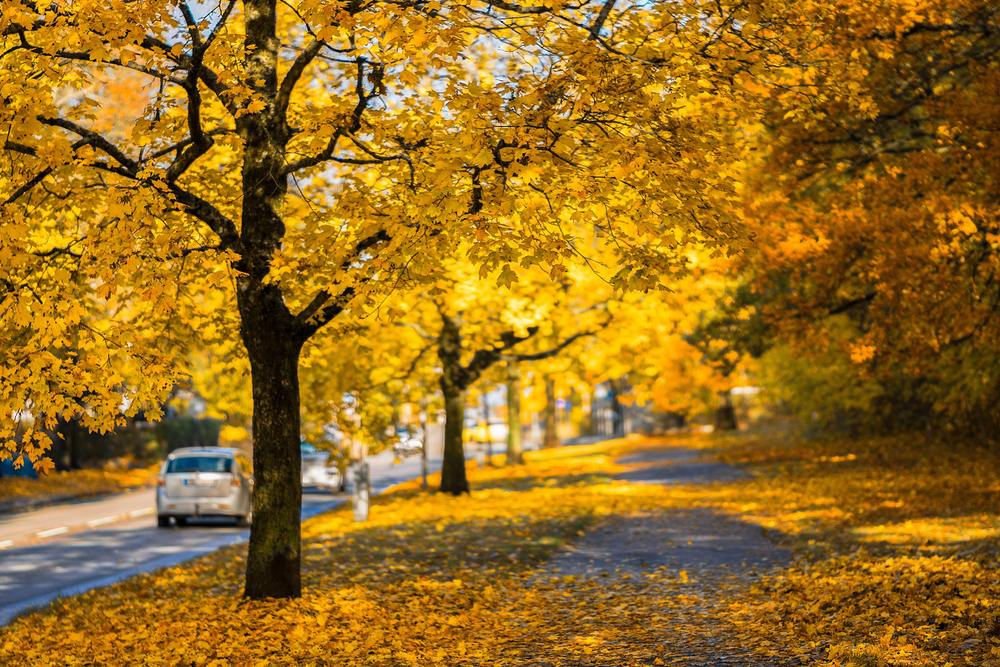} 
		\end{subfigure}
		\begin{subfigure}[b]{0.5\linewidth}
			\centering
			\includegraphics[width=0.9\linewidth]{./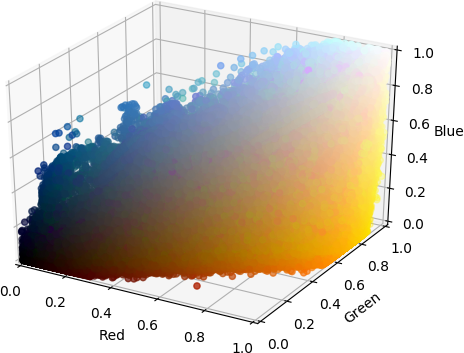} 
		\end{subfigure}
		\caption{Source image} 
	\end{subfigure}
	\begin{subfigure}[b]{0.5\linewidth}
		\begin{subfigure}[b]{0.5\linewidth}
			\centering
			\includegraphics[width=0.9\linewidth]{./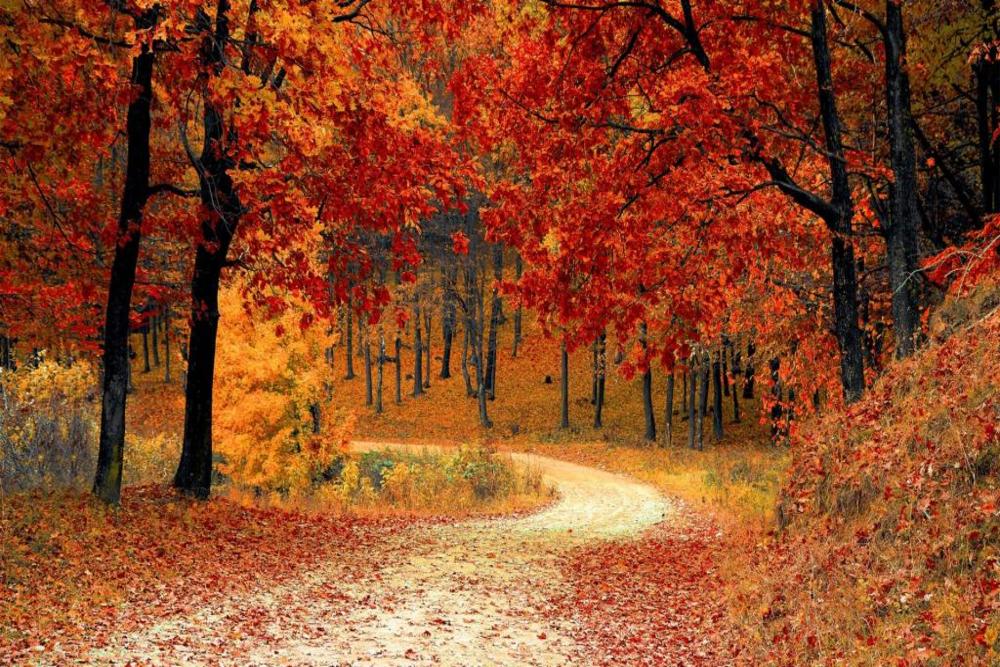} 
		\end{subfigure}
		\begin{subfigure}[b]{0.5\linewidth}
			\centering
			\includegraphics[width=0.9\linewidth]{./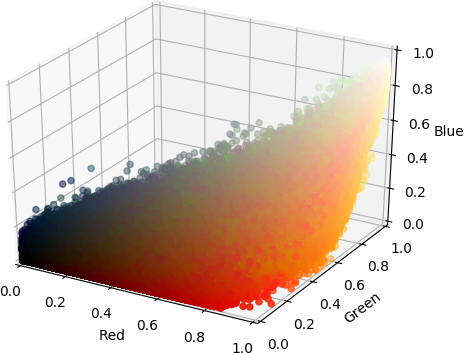} 
		\end{subfigure}
		\caption{Target image} 
	\end{subfigure}
	\begin{subfigure}[b]{0.2\linewidth}
		\centering
		\includegraphics[width=0.99\linewidth]{./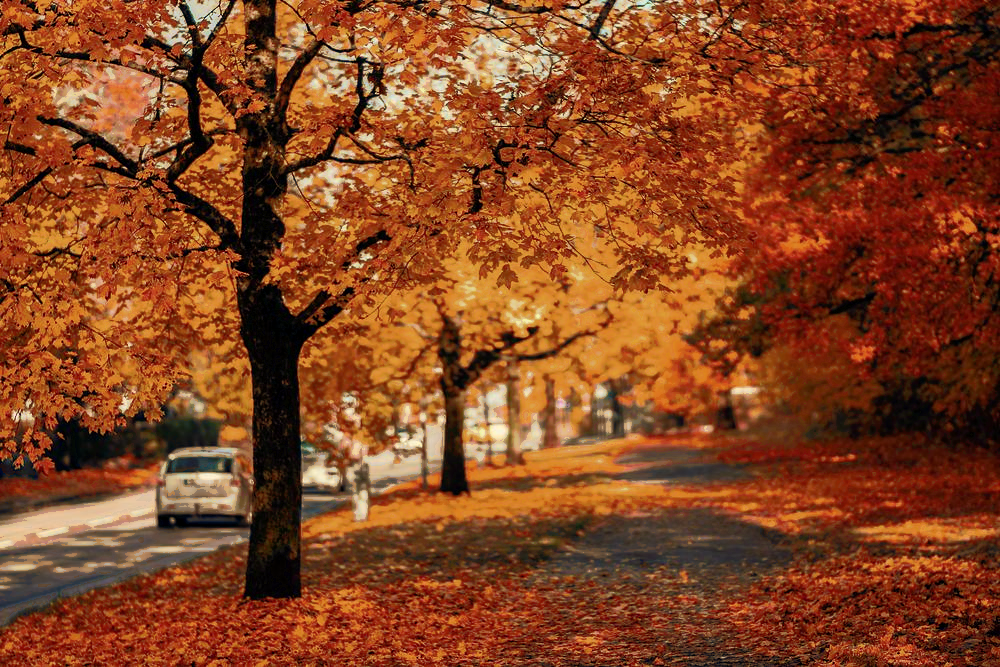} 
	\end{subfigure}
	\begin{subfigure}[b]{0.2\linewidth}
		\centering
		\includegraphics[width=0.99\linewidth]{./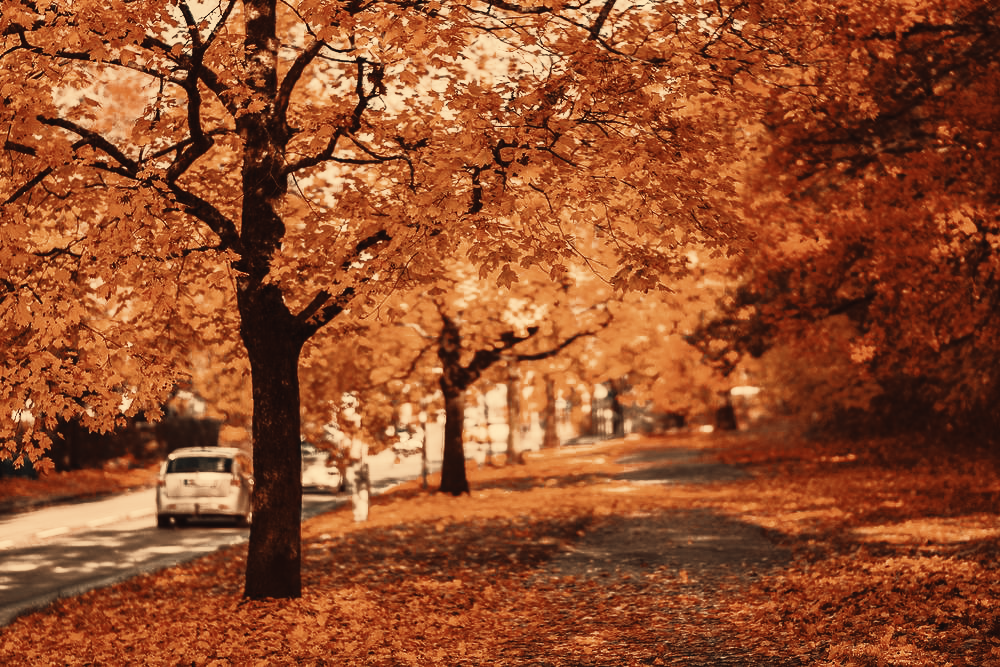} 
	\end{subfigure}
	\begin{subfigure}[b]{0.2\linewidth}
		\centering
		\includegraphics[width=0.99\linewidth]{./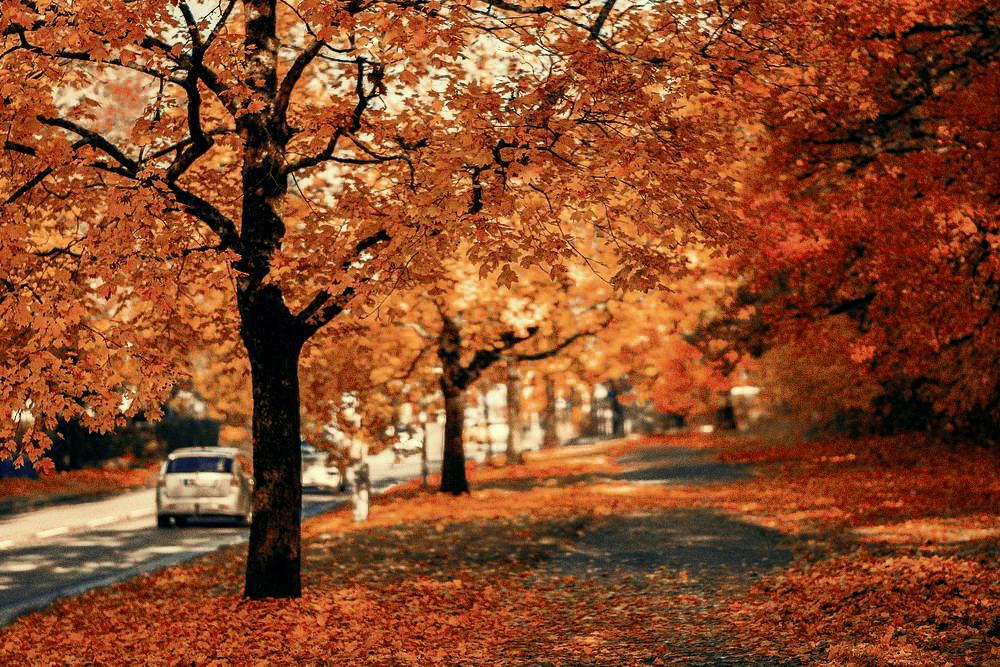} 
	\end{subfigure}
	\begin{subfigure}[b]{0.2\linewidth}
		\centering
		\includegraphics[width=0.99\linewidth]{./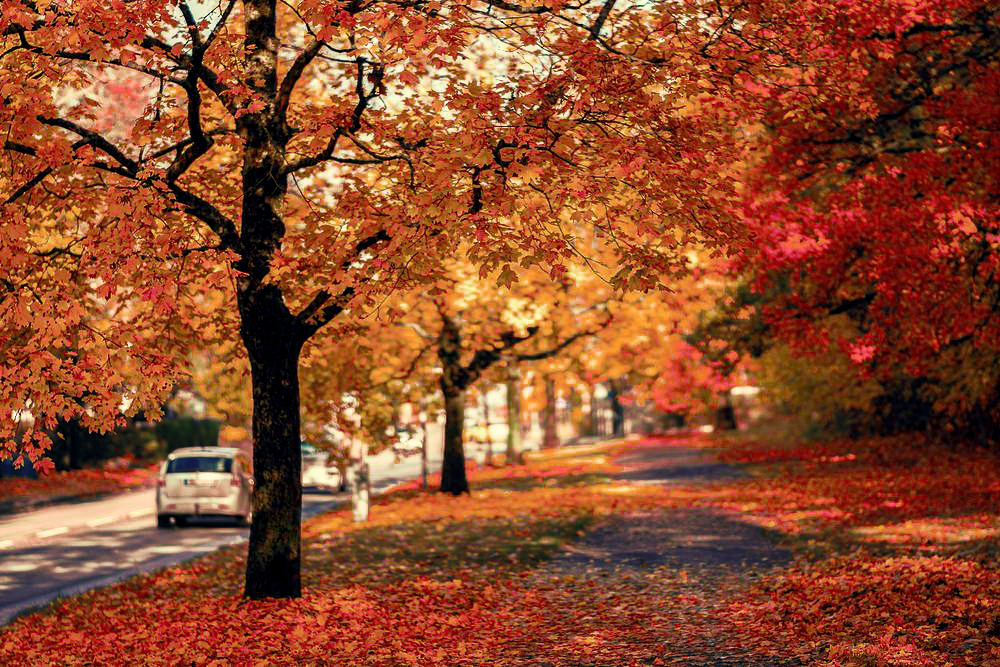} 
	\end{subfigure}
	\begin{subfigure}[b]{0.2\linewidth}
		\centering
		\includegraphics[width=0.99\linewidth]{./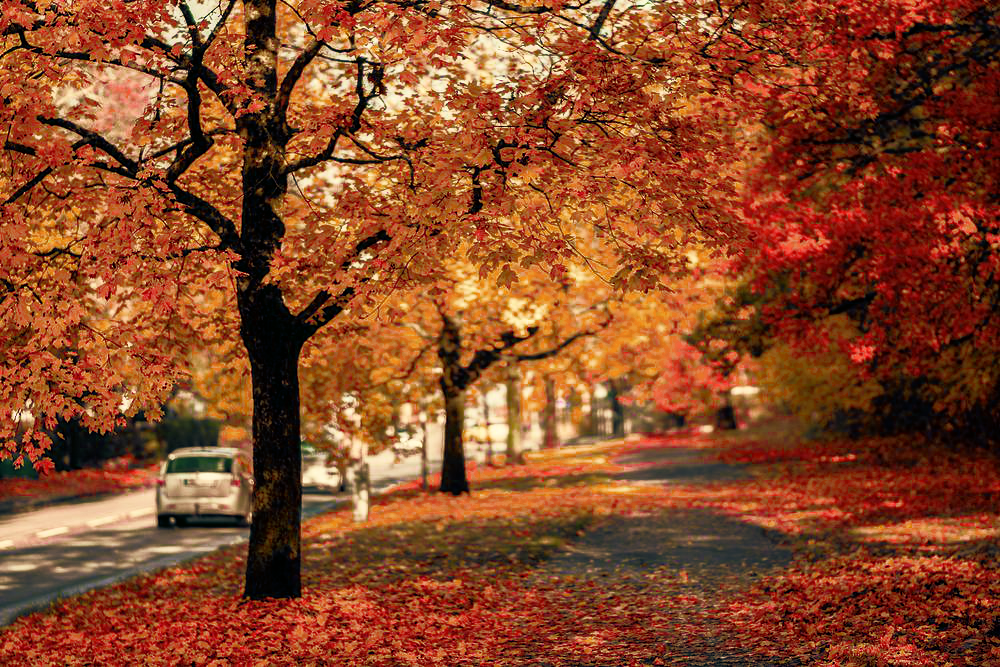} 
	\end{subfigure}
	
	\begin{subfigure}[b]{0.2\linewidth}
		\centering
		\includegraphics[width=0.99\linewidth]{./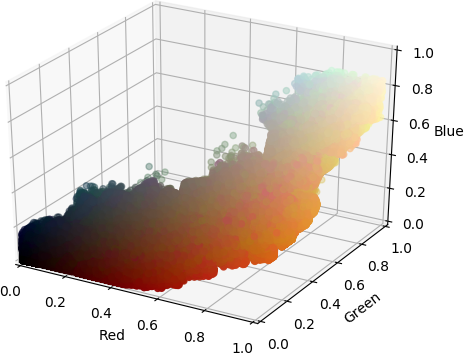} 
		\caption{ROT} 
	\end{subfigure}
	\begin{subfigure}[b]{0.2\linewidth}
		\centering
		\includegraphics[width=0.99\linewidth]{./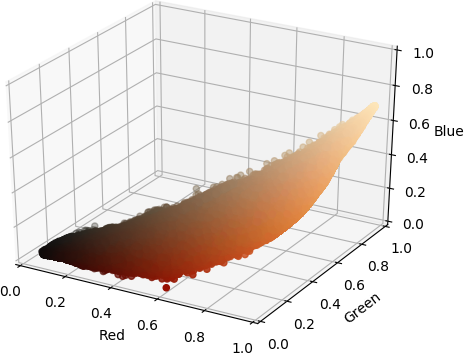} 
		\caption{BOT} 
	\end{subfigure}
	\begin{subfigure}[b]{0.2\linewidth}
		\centering
		\includegraphics[width=0.99\linewidth]{./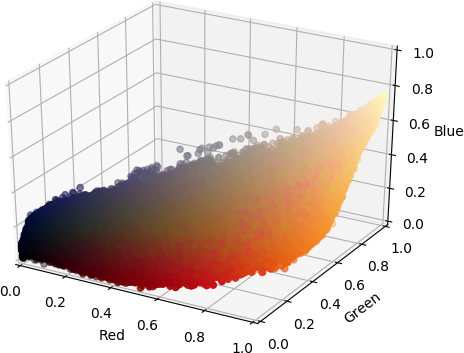} 
		\caption{Kantorovich} 
	\end{subfigure}
	\begin{subfigure}[b]{0.2\linewidth}
		\centering
		\includegraphics[width=0.99\linewidth]{./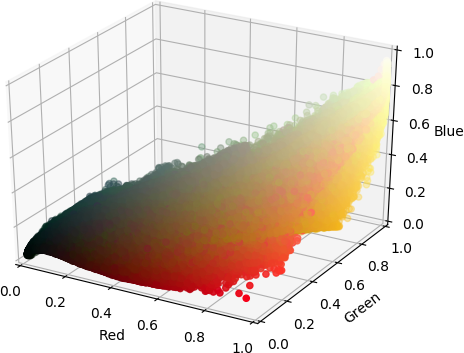} 
		\caption{Monge} 
	\end{subfigure}
	\begin{subfigure}[b]{0.2\linewidth}
		\centering
		\includegraphics[width=0.99\linewidth]{./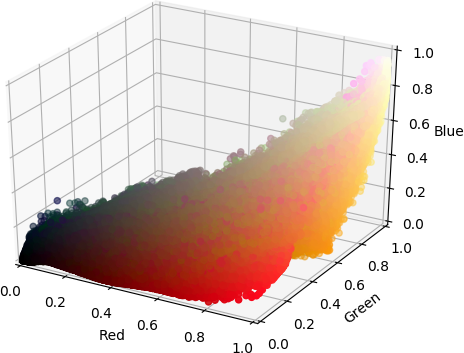} 
		\caption{Bijection} 
	\end{subfigure}
	
	\begin{subfigure}[b]{0.33\linewidth}
		\centering
		\includegraphics[width=0.7\linewidth]{./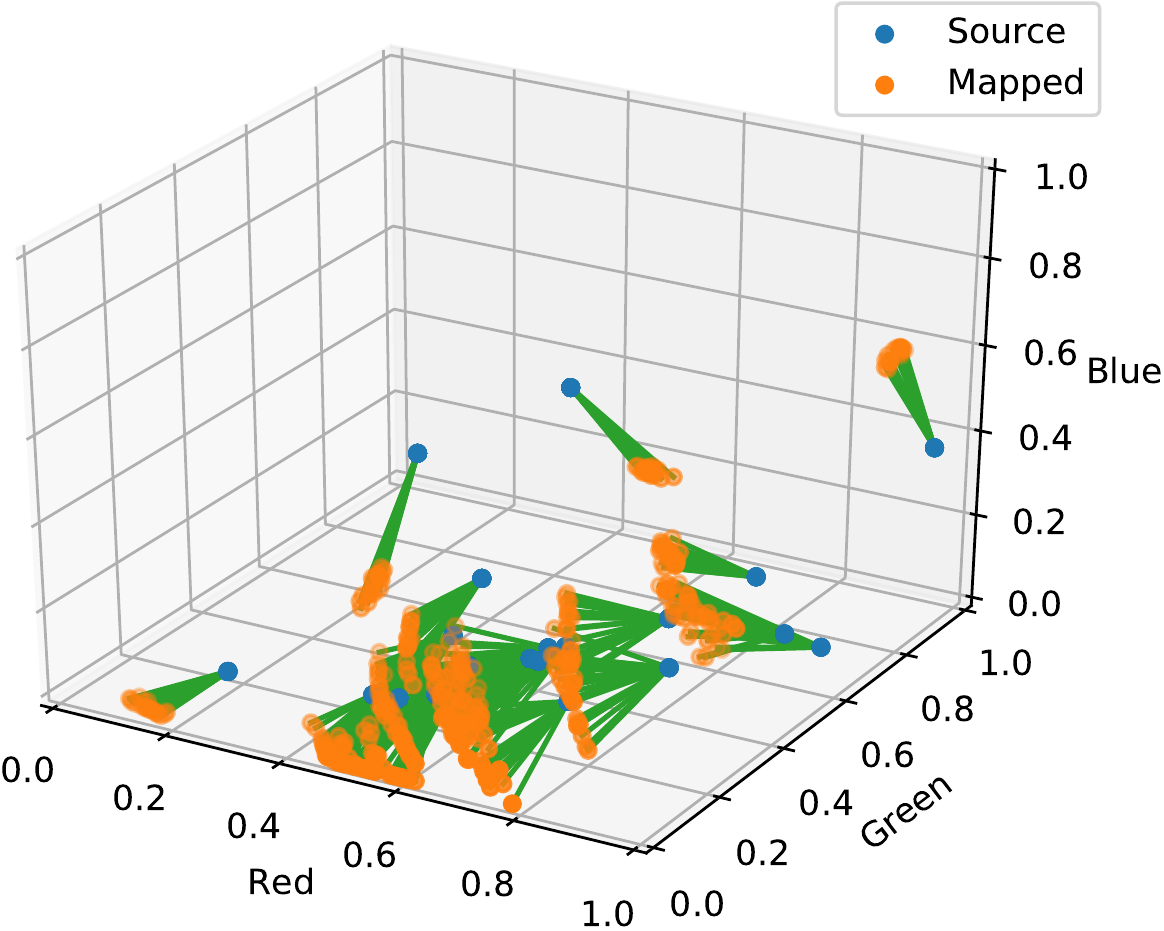} 
		\caption{Kantorovich solver} 
	\end{subfigure}
	\begin{subfigure}[b]{0.33\linewidth}
		\centering
		\includegraphics[width=0.7\linewidth]{./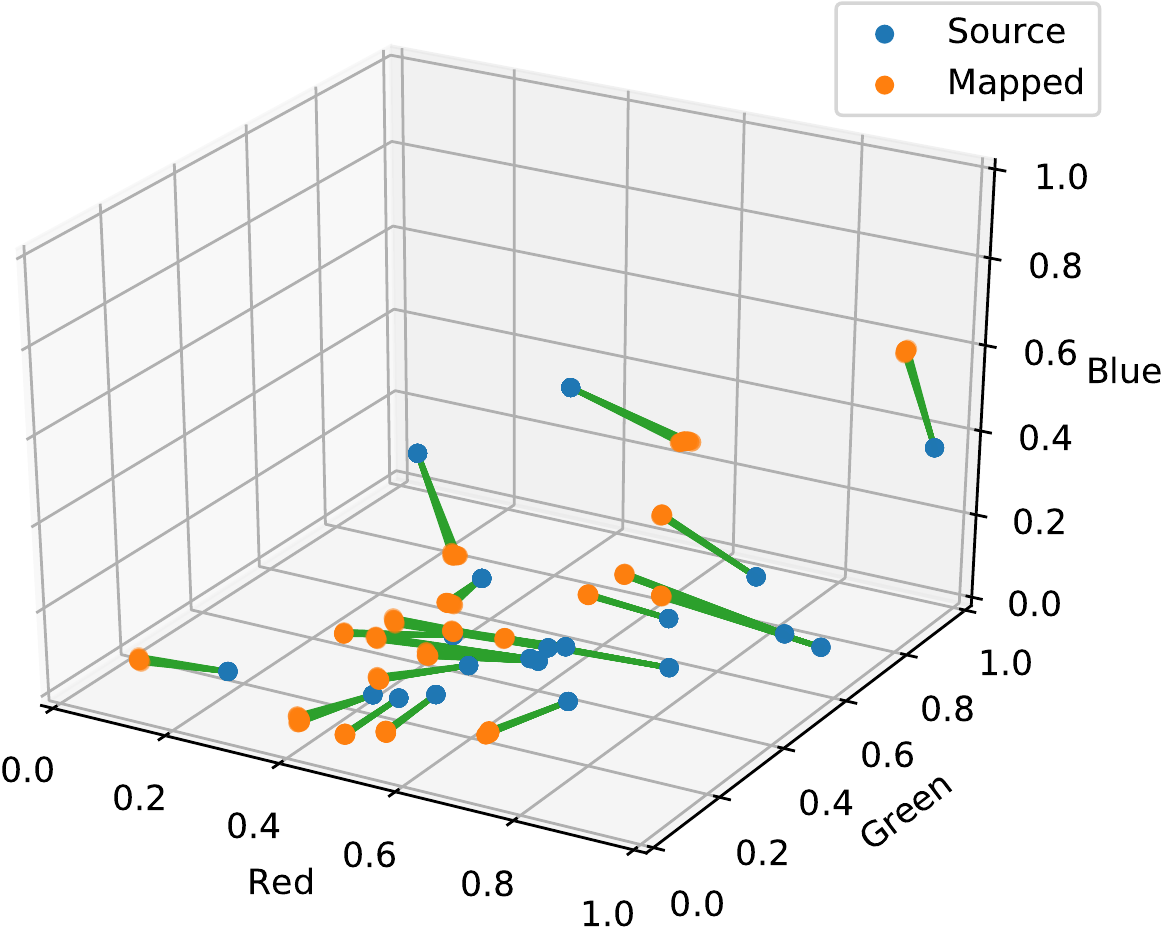}
		\caption{Monge solver}
	\end{subfigure}
	\begin{subfigure}[b]{0.33\linewidth}
		\centering
		\includegraphics[width=0.7\linewidth]{./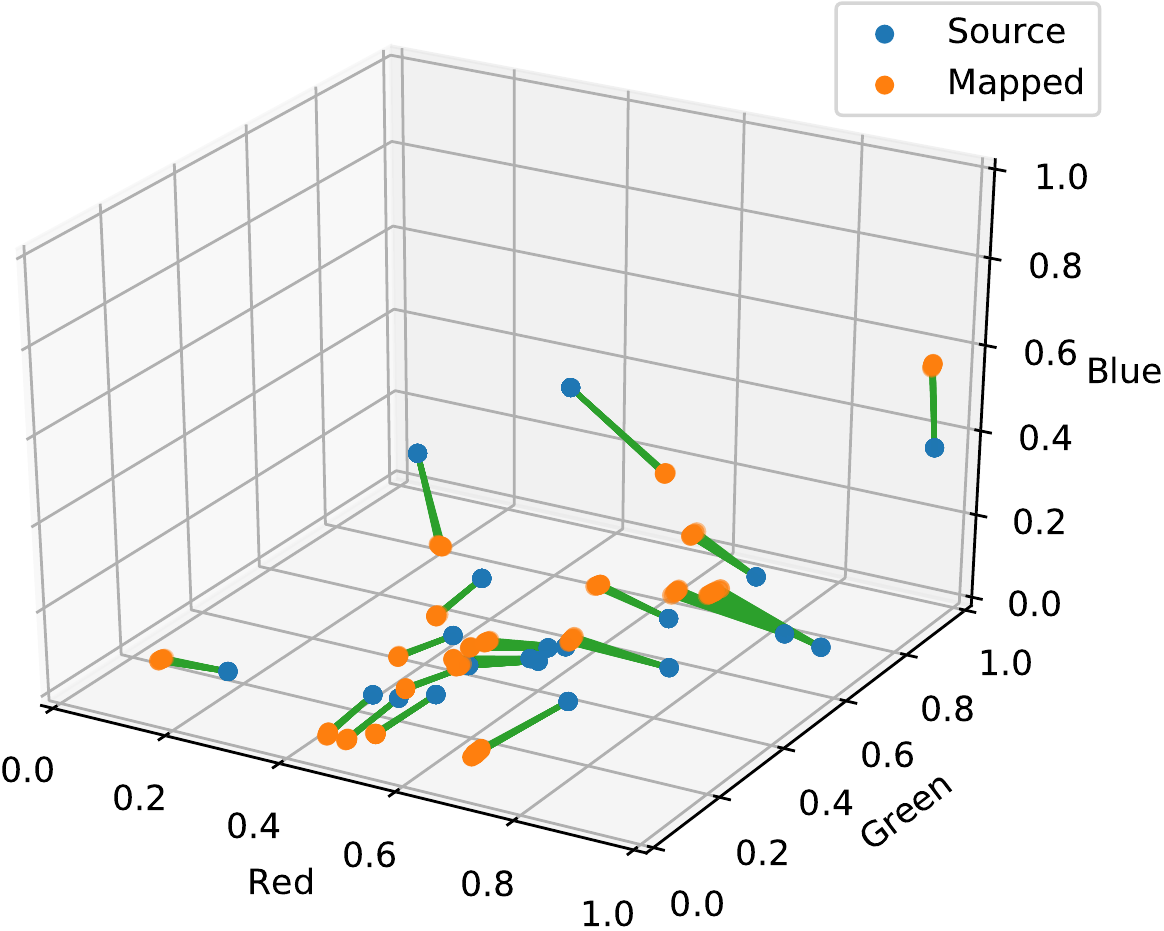}
		\caption{Bijection solver}
	\end{subfigure}
	\caption{(a) Source image (b) Target image (c-g) Transfer results of ROT, BOT, Kantorovich solver, Monge solver and Bijection solver, respectively (h-j) Mapping learned by Kantorovich solver, Monge solver and Bijection solver, respectively}
	\vspace{-30pt}
\end{figure}

\section{Experiment Methods}

\subsection{Domain Adaptation}
The data from the source domain are denoted as $\left<x_i, l_i\right>$, and the data from the target domain are denoted as $y_j$ without labels. 
A classifier $C_{x}$ in the source domain is pre-trained with $\{\left<x_i, l_i\right>\}_{i=1}^{n_x}$ and fixed hereafter.

To adapt the learned classifier to the target domain, we would like to learn an optimal mapping 
that maps the samples from the source domain to the samples from the target domain. Follow the common choice \cite{cycada,spot}, we define the cost function to be the difference between the label of the source sample and the label prediction of the translated target sample by $C_{x}$ and use the cross-entropy $\mathcal{H}$ to measure the difference: 
\begin{equation}
c(x, y) = \mathcal{H}(C_x(y), l), 
\end{equation}
where $y$ is the mapped sample of $x$, $l$ denotes the class label of $x$. 

After the mapping is learned stably, a classifier $C_y$ in the target domain is introduced and the cost function is changed to the one as follow:
\begin{equation}
c(x, y) = \mathcal{H}(C_y(y), l),
\end{equation}
where $C_{y}$ is trained together with $G_{xy}$. The whole training procedure is provided in Appendix \ref{algorithm_da}. 
We train the model with the training set of both datasets (the labels in the target dataset are omitted) and test on the target test set. 

We perform domain adaptation between four digit image datasets: MNIST \cite{mnist}, USPS \cite{usps}, SVHN \cite{svhn}, MNISTM \cite{mnistm}, each consists of images of digits 0-9.
Both MNIST and MNISTM consist of 60000 training images and 10000 test images which are of the size 28$\times$28.
USPS consists of 7291 training images and 2007 test images, which are of the size 16$\times$16. 
SVHN consists of 73257 training images and 26032 test images, which are of the size 32$\times$32. Images in MNIST and USPS are 1-channel, while images in SVHN and MNISTM are 3-channel.

\subsection{Image-to-Image Translation}
In the experiments of edges2handbags, we design the cost function as the $L_2$ norm between feature maps extracted through different convolution kernels for edge detection.
Specifically, we adopt the following two kernels:
\begin{equation*}
K_1 = \begin{bmatrix} 
-1 & 0 & 1 \\
-2 & 0 & 2 \\
-1 & 0 & 1	 
\end{bmatrix},
K_2 = \begin{bmatrix} 
1 & 2 & 1 \\
0 & 0 & 0 \\
-1 & -2 & -1	 
\end{bmatrix},
\end{equation*}
for detecting edges in two different directions.
Additionally, the cost function can be formulated as follow:
\begin{equation*}
c(x, y) = \sum_{k=1}^2\sum_{c=1}^3|||K_k * x_c| - |K_k * y_c|||_2,
\end{equation*}
where $*$ denotes the convolution operator, $|X|$ denotes a matrix with $[|X|]_{ij} = |X_{ij}|$.

In the experiments of handbags2shoes, we adopt the cost function as the mean squared distance between the average color vectors. This cost function can be formulated as follow:
\begin{equation*}
c(x, y) = \frac{1}{3} \sum_{c=1}^3 (Avg(x_c)-Avg(y_c))^2,
\end{equation*}
where $Avg(X)$ denotes the average of all elements of matrix $X$.

\subsection{Color Transfer}

For color transfer, the source and target distributions are the 3D color histograms of the source image $X$ and target image $Y$. We solve the OT problem between these two distributions. After the optimal mapping is learned, we apply the optimal mapping on each pixel of the source image $X$ and thereby obtain the transferred image. In this way, we transfer the color style of $Y$ to $X$, or in other words, impose the color histogram of $Y$ on $X$, and thus achieve the task of color transfer. We adopt the squared Euclidean distance $c(x, y) = \| x - y \| ^ 2$ as the cost function. 

\section{Network Architecture \& Hyperparameters}
\label{hyperparameters}

\subsection{Toy Experiments}
\label{hyperparameters_toy}
In toy experiments, generators and critics are all parameterized by multi-layer fully connected neural networks. Table \ref{toy_generator_generator} and Table \ref{toy_generator_critic} show the network architectures for generators and critics.

We set $\lambda_{gan_{xy}}=\lambda_{gan_{yx}}=1$, $\lambda_{gp_{xy}}=\lambda_{gp_{yx}}=0.1$, $\lambda_{cycle}=1$, $n_{critic}=5$, $lr=0.0001$ and the batch size is 100. The independent noise $z$ is sampled from the 2-dimension uniform distribution $\mathcal{U}[-1, 1]$. 

\begin{table}
	\vspace{-20pt}
	\centering
	\caption{The network architecture of generators for toy experiments and color transfer. For toy experiments, $d_{in}=2$; for color transfer, $d_{in}=3$}
	\vspace{3pt}
	\begin{tabular}{lll}
		\toprule[2pt]
		Input: & $x \in \mathbb{R}^{d_{in}}, z \in \mathbb{R}^{d_{in}}$ & \\
		\hline
		Linear: & [$2d_{in}$, 1024] & LeakyReLU \\
		\hline 
		Linear: & [1024, 1024] & LeakyReLU  \\
		\hline
		Linear: & [1024, $d_{in}$] \\
		\bottomrule[2pt]
	\end{tabular}
	\label{toy_generator_generator}
\end{table}

\begin{table}
	\vspace{-40pt}
	\centering
	\caption{The network architecture of critics for toy experiments and color transfer. For toy experiments, $d_{in}=2$; for color transfer, $d_{in}=3$}
	\vspace{3pt}
	\begin{tabular}{lll}
		\toprule[2pt]
		Input: & $x \in \mathbb{R}^{d_{in}}$ & \\
		\hline
		Linear: & [$d_{in}$, ~~1024] & LeakyReLU \\
		\hline 
		Linear: & [1024, 1024] & LeakyReLU  \\
		\hline
		Linear: & [1024, 1] \\
		\bottomrule[2pt]
	\end{tabular}
	\label{toy_generator_critic}
	\vspace{-10pt}
\end{table}

\subsection{Domain Adaptation}
\label{hyperparameters_da}

For MNIST-to-USPS, USPS-to-MNIST, and MNIST-to-MNISTM: images are resized to 64$\times$64, 
critics are implemented as 6-layer CNN and 
Classifiers are implemented as LeNet-like CNN. 
For SVHN-to-MNIST, images are resized to 32$\times$32, 
critics are implemented as 5-layer CNN and 
classifiers are adopted as the WideResNet \cite{wideresnet}. 
Generators are all implemented as 6-block ResNets \cite{resnet}. 

Table \ref{da_generator_resnet6} shows the network architecture of generators for DA. 
Table \ref{da_discriminator_image64} shows the network architecture of critics for DA. 	
For MNIST-to-USPS and USPS-to-MNIST $s=64$, $c=1$. For MNIST-to-MNISTM $s=64$, $c=3$. For SVHN-MNIST, $s=32$, $c=3$. 	
Table \ref{da_classifier_lenet} shows the network architecture of classifiers for DA except SVHN-to-MNIST, for which we adopt the WideResNet \cite{wideresnet}. 

We set $\lambda_{gan}=1$, $\lambda_{gp}=10$, $n_{critic}=3$, $lr=0.0002$ and batch size as 32. $\lambda_{cycle}$ is typically set within $[100, 1000]$. The independent noise $z$ is sampled from the 10-dimension uniform distribution $\mathcal{U}[-1, 1]$.

\begin{table}
	\vspace{-20pt}
	\centering
	\caption{The network architecture of generators for DA.}
	\vspace{3pt}
	\begin{tabular}{lll}
		\toprule[2pt]
		Input: & $z \in \mathbb{R}^{10}$ &\\
		\hline
		& Parameters & Activation\\
		\hline
		Linear: & [10, $s\times s\times$1] & BN, ReLU \\
		\hline
		Concat: & $x \in \mathbb{R}^{s\times s\times c}$ \\
		\hline 
		Conv: & [$c$+1, 64, ~k=7, s=1, p=3] & IN, ReLU \\
		\hline
		Conv: & [64, ~~128, k=3, s=2, p=1] & IN, ReLU \\
		\hline
		Conv: & [128, ~256, k=3, s=2, p=1] & IN, ReLU \\
		\hline 
		ResBlock: & 6 blocks \\
		\hline 
		Deconv: & [256, 128, k=3, s=2, p=1] & IN, ReLU \\
		\hline
		Deconv: & [128, 64, ~~k=3, s=2, p=1] & IN, ReLU \\
		\hline
		Conv: & [64, ~~$c$, ~~~k=7, s=1, p=3] & Tanh \\
		\bottomrule[2pt]
	\end{tabular}
	\label{da_generator_resnet6}
\end{table}

\begin{table}
	\vspace{-40pt}
	\centering
	\caption{The network architecture of critics for DA. Conv[$s$=64] denotes a Conv layer which exists if $s$=64.}
	\vspace{3pt}
	\begin{tabular}{lll}
		\toprule[2pt]
		Input: & $x \in \mathbb{R}^{s\times s\times c}$ & \\
		\hline
		& Parameters & Activation \\
		\hline
		Conv: & [$c$,~~~~64,~~k=4, s=2, p=1] & LeakyReLU \\
		\hline
		Conv: & [64,~~128, k=4, s=2, p=1] & LeakyReLU \\
		\hline
		Conv: & [128, 256, k=4, s=2, p=1] & LeakyReLU \\
		\hline 
		Conv: & [256, 512, k=4, s=2, p=1] & LeakyReLU \\
		\hline 
		Conv[$s$=64]: & [512, 512, k=4, s=2, p=1] & LeakyReLU \\
		\hline 
		Conv: & [512, 1,~~~~k=4, s=2, p=1] \\
		\bottomrule[2pt]
	\end{tabular}
	\label{da_discriminator_image64}
	\vspace{-10pt}
\end{table}

\begin{table}
	\vspace{-30pt}
	\centering
	\caption{The network architecture of classifiers for DA.}
	\vspace{3pt}
	\begin{tabular}{lll}
		\toprule[2pt]
		Input: & $x \in \mathbb{R}^{64\times 64\times c}$ & \\
		\hline
		& Parameters & Activation \\
		\hline
		Conv: & [$c$,~~~32, k=5, s=1, p=2] & ReLU, MaxPool(2,2) \\
		\hline
		Conv: & [32, 48, k=5, s=1, p=2] & ReLU, MaxPool(2,2) \\
		\hline
		Linear: & [12288, 100] & ReLU \\
		\hline 
		Linear: & [100, 100] & ReLU  \\
		\hline
		Linear: & [100, 10] \\
		\bottomrule[2pt]
	\end{tabular}
	\label{da_classifier_lenet}
	\vspace{-20pt}
\end{table}

\subsection{Image-to-Image Translation}
\label{hyperparameters_image_translation}

Inputs and outputs of all tasks are of size 64$\times$64$\times$3. 	
For Edges-to-Handbags, 
generators adopt the architecture of autoencoder \cite{autoencoder} and 
critics are implemented as 6-layer CNN. 
For Handbags-to-Shoes, 
generators are implemented as 8-block ResNets and 
critics are implemented as 5-block ResNets. 

Table \ref{image2image_translation_generator_ae64} shows the network architecture of the generators for image-to-image translation on edges2handbags, where the architecture of the critics is shows in Table \ref{da_discriminator_image64} with $s=64$.
Table \ref{image2image_translation_generator_snres} and Table \ref{image2image_translation_discriminator_snres} show the network architecture of generators and critics for image-to-image translation on handbags2shoes. The ResBlock is the same as the one in WGAN-GP \cite{wgangp}. 

Other hyperparameters are the same as the ones used for experiments of domain adaptation.

\begin{table}
	\vspace{-20pt}
	\centering
	\caption{The network architecture of generators for image-to-image translation on edges2handbags.}
	\vspace{3pt}
	\begin{tabular}{lll}
		\toprule[2pt]
		Input: & $z \in \mathbb{R}^{10}$ & \\
		\hline
		& Parameters & Activation \\
		\hline
		Linear: & [10, 64 $\times$64$\times$1] & BN, ReLU \\
		\hline
		Concat: & $x \in \mathbb{R}^{64\times64\times3}$ \\
		\hline 
		Conv: & [4,~~~~64,~~k=4, s=2, p=1] & LeakyReLU \\
		\hline
		Conv: & [64,~~128, k=4, s=2, p=1] & IN,LeakyReLU \\
		\hline
		Conv: & [128, 256, k=4, s=2, p=1] & IN,LeakyReLU \\
		\hline 
		Conv: & [256, 512, k=4, s=2, p=1] & IN,LeakyReLU \\
		\hline 
		Conv: & [512, 512, k=4, s=2, p=1] & IN,LeakyReLU \\
		\hline 
		Conv: & [512, 512, k=4, s=2, p=1] & ReLU \\
		\hline 
		Deconv: & [512, 512, k=4, s=2, p=1] & IN,ReLU \\
		\hline
		Deconv: & [512, 512, k=4, s=2, p=1] & IN,ReLU \\
		\hline
		Deconv: & [512, 256, k=4, s=2, p=1] & IN,ReLU \\
		\hline 
		Deconv: & [256, 128, k=4, s=2, p=1] & IN,ReLU \\
		\hline 
		Deconv: & [128, 64,~~k=4, s=2, p=1] & IN,ReLU \\
		\hline 
		Deconv: & [64,~~3,~~~~k=4, s=2, p=1] & Tanh \\
		\bottomrule[2pt]
	\end{tabular}
	\label{image2image_translation_generator_ae64}
\end{table}

\begin{table}
	\vspace{-40pt}
	\centering
	\caption{The network architecture of generators for image-to-image translation on handbags2shoes.}
	\vspace{3pt}
	\begin{tabular}{lll}
		\toprule[2pt]
		Input: & $z \in \mathbb{R}^{10}$ & \\
		\hline
		& Parameters & Activation \\
		\hline
		Linear: & [10, 64 $\times$64$\times$1] & BN, ReLU \\
		\hline
		Concat: & $x \in \mathbb{R}^{64\times64\times3}$ \\
		\hline 
		ResBlock down & channel = 64 \\
		\hline 
		ResBlock down & channel = 128 \\
		\hline 
		ResBlock down & channel = 256 \\
		\hline 
		ResBlock down & channel = 512 & ReLU \\
		\hline 
		ResBlock up & channel = 256 & \\
		\hline 
		ResBlock up & channel = 128 & \\
		\hline 
		ResBlock up & channel = 64 & \\
		\hline 
		ResBlock up & channel = 32 & BN,ReLU \\
		\hline
		Conv: & [32, 3, k=3, s=1, p=1] & Tanh \\
		\bottomrule[2pt]
	\end{tabular}
	\label{image2image_translation_generator_snres}
\end{table}

\begin{table}
	\centering
	\caption{The network architecture of critics for image-to-image translation on handbags2shoes.}
	\vspace{3pt}
	\begin{tabular}{lll}
		\toprule[2pt]
		Input: & $x \in \mathbb{R}^{64\times64\times3}$ \\
		\hline 
		& Parameters & Activation \\
		\hline
		ResBlock down & channel = 64 \\
		\hline 
		ResBlock down & channel = 128 \\
		\hline 
		ResBlock down & channel = 256 \\
		\hline 
		ResBlock down & channel = 512 \\
		\hline 
		ResBlock down & channel = 1024 & ReLU \\
		\hline 
		Linear: & [1024, 1] \\
		\bottomrule[2pt]
	\end{tabular}
	\label{image2image_translation_discriminator_snres}
\end{table}

\subsection{Color Transfer}
\label{hyperparameters_color_transfer}

Network Architectures and hyperparameters are the same as the ones in toy experiments except that the dimensions of source sample, target sample and independent noise $z$ are changed to 3. Table \ref{toy_generator_generator} and Table \ref{toy_generator_critic} show the network architectures for generators and critics.

\clearpage

\section{Algorithm for Training Kantorovich Solver/Monge Solver/Optimal Bijection Solver}

\begin{breakablealgorithm}
	\caption{Stochastic Gradient Algorithm for K-solver/M-solver/B-solver}
	\label{training algorithm}
	\begin{algorithmic}[1]
		\REQUIRE Source distribution $\mu$; target distribution $\nu$; independent noise distribution $p(z)$; cost function $c$; generator networks $G_{xy}$, $G_{yx}$ and critic networks $D_{x}$, $D_{y}$ with parameters $\theta_{xy}$, $\theta_{yx}$, $\omega_{x}$ and $\omega_{y}$ respectively; coefficients $\lambda_{cycle_{\mu}}$, $\lambda_{cycle_{\nu}}$, $\lambda_{gan_{xy}}$, $\lambda_{gan_{yx}}$, $\lambda_{gp_{xy}}$, $\lambda_{gp_{yx}}$; parameters of Adam $\alpha$, $\beta_1$, $\beta_2$; batch size $m$; number of critic iterations per generator iteration $n_{critic}$
		\WHILE{not converged}
		\FOR{$t=1$ \textbf{to} $n_{critic}$}
		\FOR{$i=1$ \textbf{to} $m$}
		\STATE sample $x$, $y$, $z_x$, $z_y$, $\epsilon_x$, $\epsilon_y$ from $\mu$,  $\nu$, $p(z)$, $p(z)$, $U[0,1]$, $U[0,1]$ respectively
		\STATE $y' \gets G_{xy}(x, z_x)$, $x' \gets G_{yx}(y, z_y)$
		\STATE $\tilde{y} \gets \epsilon_y y+(1-\epsilon_y)y'$, $\tilde{x} \gets \epsilon_x x+(1-\epsilon_x)x'$
		\STATE $L_i \gets D_{y}(y') - D_{y}(y) + \lambda_{gp_{xy}} (\|\nabla_{\tilde{y}}D_{y}(\tilde{y}) \|_2 - 1)^2 + D_{x}(x') - D_{x}(x) + \lambda_{gp_{yx}} (\|\nabla_{\tilde{x}}D_{x}(\tilde{x}) \|_2 - 1)^2$
		\ENDFOR
		\STATE $\omega_{y} \gets$ Adam$(\nabla_{\omega_y} \frac{1}{m} \sum_{i=1}^m L_i, \alpha, \beta_1, \beta_2)$
		\STATE $\omega_{x} \gets$ Adam$(\nabla_{\omega_x} \frac{1}{m} \sum_{i=1}^m L_i, \alpha, \beta_1, \beta_2)$
		\ENDFOR
		\FOR{$i=1$ \textbf{to} $m$}
		\STATE sample $x$, $y$, $z_x$, $z_y$ from $\mu$,  $\nu$, $p(z)$, $p(z)$ respectively
		\STATE $y' \gets G_{xy}(x, z_x)$, $x' \gets G_{yx}(y, z_y)$
		\STATE $\hat{y} \gets G_{xy}(x',z_x)$, $\hat{x} \gets G_{yx}(y', z_y)$
		\STATE $L_i \gets c(x, y') + \lambda_{cycle_{\nu}}\|\hat{y} - y\|_2 + \lambda_{cycle_{\mu}}\|\hat{x} - x\|_2 - \lambda_{gan_{xy}} D_{y}(y') - \lambda_{gan_{yx}} D_{x}(x')$
		\ENDFOR
		\STATE $\theta_{xy} \gets$ Adam$(\nabla_{\theta_{xy}} \frac{1}{m} \sum_{i=1}^m L_i, \alpha, \beta_1, \beta_2)$
		\STATE $\theta_{yx} \gets$ Adam$(\nabla_{\theta_{yx}} \frac{1}{m} \sum_{i=1}^m L_i, \alpha, \beta_1, \beta_2)$
		\ENDWHILE
	\end{algorithmic}
\end{breakablealgorithm}

\newpage
\section{Algorithm for Domain Adaptation}
\label{algorithm_da}

\begin{breakablealgorithm}
	\caption{Stochastic Gradient Algorithm for Domain Adaptation}
	\label{training algorithm for DA}
	\begin{algorithmic}[1]
		\REQUIRE Datasets $\{\left< x_i, l_i \right>\}_{i=1}^{n_x}$, $\{\left< y_i \right>\}_{i=1}^{n_y}$; independent noise distribution $p(z)$; generator networks $G_{xy}$, $G_{yx}$, classifier networks $C_{x}$, $C_{y}$ and critic networks $D_{x}$, $D_{y}$ with parameters $\theta_{xy}$, $\theta_{yx}$, $\psi_x$, $\psi_y$, $\omega_{x}$ and $\omega_{y}$ respectively; coefficients $\lambda_{cycle_{\mu}}$, $\lambda_{cycle_{\nu}}$, $\lambda_{gan_{xy}}$, $\lambda_{gan_{yx}}$, $\lambda_{gp_{xy}}$, $\lambda_{gp_{yx}}$; parameters of Adam $\alpha$, $\beta_1$, $\beta_2$; batch size $m$; number of critic iterations of per generator iteration $n_{critic}$
		\WHILE{not converged}
		\FOR{$i=1$ \textbf{to} $m$}
		\STATE sample $\left< x, l \right>$ from $\{\left< x_i, l_i \right>\}_{i=1}^{n_x}$
		\STATE $L_i = \mathcal{H}(C_{x}(x), l)$
		\ENDFOR
		\STATE $\psi_x \gets$ Adam $(\frac{1}{m} \sum_{i=1}^m L_i, \alpha, \beta_1, \beta_2)$
		\ENDWHILE
		\STATE
		
		\WHILE{not converged}
		\FOR{$t=1$ \textbf{to} $n_{critic}$}
		\FOR{$i=1$ \textbf{to} $m$}
		\STATE sample $\left< x, l \right>$, $y$, $z_x$, $z_y$, $\epsilon_x$, $\epsilon_y$ from $\{\left< x_i, l_i \right>\}_{i=1}^{n_x}$, $\{\left< y_i \right>\}_{i=1}^{n_y}$, $p(z)$, $p(z)$, $U[0,1]$, $U[0,1]$ respectively
		\STATE $y' \gets G_{xy}(x, z_x)$, $\tilde{y} \gets \epsilon_y y+(1-\epsilon_y)y'$
		\STATE $x' \gets G_{yx}(y, z_y)$, $\tilde{x} \gets \epsilon_x x+(1-\epsilon_x)x'$
		\STATE $L_i \gets D_{y}(y') - D_{y}(y) + \lambda_{gp_{xy}} (||\nabla_{\tilde{y}}D_{y}(\tilde{y}) ||_2 - 1)^2 + D_{x}(x') - D_{x}(x) + \lambda_{gp_{yx}} (||\nabla_{\tilde{x}}D_{x}(\tilde{x}) ||_2 - 1)^2$
		\ENDFOR
		\STATE $\omega_{y} \gets$ Adam$(\nabla_{\omega_y} \frac{1}{m} \sum_{i=1}^m L_i, \alpha, \beta_1, \beta_2)$
		\STATE $\omega_{x} \gets$ Adam$(\nabla_{\omega_x} \frac{1}{m} \sum_{i=1}^m L_i, \alpha, \beta_1, \beta_2)$
		\ENDFOR
		\FOR{$i=1$ \textbf{to} $m$}
		\STATE sample $\left< x, l \right>$, $y$, $z_x$, $z_y$ from $\{\left< x_i, l_i \right>\}_{i=1}^{n_x}$, $\{\left< y_i \right>\}_{i=1}^{n_y}$, $p(z)$, $p(z)$ respectively
		\STATE $y' \gets G_{xy}(x, z_x)$, $x' \gets G_{yx}(y, z_y)$
		\STATE $\hat{y} \gets G_{xy}(x', z_x)$, $\hat{x} \gets G_{yx}(y', z_y)$
		\STATE $L_i \gets\mathcal{H}(C_x(y'), l) + \lambda_{cycle_{\nu}}||\hat{y} - y||_2 + \lambda_{cycle_{\mu}}||\hat{x} - x||_2 - \lambda_{gan_{xy}} D_{y}(y') - \lambda_{gan_{yx}} D_{x}(x')$
		\ENDFOR
		\STATE $\theta_{xy} \gets$ Adam$(\nabla_{\theta_{xy}} \frac{1}{m} \sum_{i=1}^m L_i, \alpha, \beta_1, \beta_2)$
		\STATE $\theta_{yx} \gets$ Adam$(\nabla_{\theta_{yx}} \frac{1}{m} \sum_{i=1}^m L_i, \alpha, \beta_1, \beta_2)$
		\ENDWHILE
		\STATE
		
		\WHILE{not converged}
		\FOR{$t=1$ \textbf{to} $n_{critic}$}
		\FOR{$i=1$ \textbf{to} $m$}
		\STATE sample $\left< x, l \right>$, $y$, $z_x$, $z_y$, $\epsilon_x$, $\epsilon_y$ from $\{\left< x_i, l_i \right>\}_{i=1}^{n_x}$, $\{\left< y_i \right>\}_{i=1}^{n_y}$, $p(z)$, $p(z)$, $U[0,1]$, $U[0,1]$ respectively
		\STATE $y' \gets G_{xy}(x, z_x)$, $\tilde{y} \gets \epsilon_y y+(1-\epsilon_y)y'$
		\STATE $x' \gets G_{yx}(y, z_y)$, $\tilde{x} \gets \epsilon_x x+(1-\epsilon_x)x'$
		\STATE $L_i \gets D_{y}(y') - D_{y}(y) + \lambda_{gp_{xy}} (||\nabla_{\tilde{y}}D_{y}(\tilde{y}) ||_2 - 1)^2 + D_{x}(x') - D_{x}(x) + \lambda_{gp_{yx}} (||\nabla_{\tilde{x}}D_{x}(\tilde{x}) ||_2 - 1)^2$
		\ENDFOR
		\STATE $\omega_{y} \gets$ Adam$(\nabla_{\omega_y} \frac{1}{m} \sum_{i=1}^m L_i, \alpha, \beta_1, \beta_2)$
		\STATE $\omega_{x} \gets$ Adam$(\nabla_{\omega_x} \frac{1}{m} \sum_{i=1}^m L_i, \alpha, \beta_1, \beta_2)$
		\ENDFOR
		\FOR{$i=1$ \textbf{to} $m$}
		\STATE sample $\left< x, l \right>$, $y$, $z_x$, $z_y$ from $\{\left< x_i, l_i \right>\}_{i=1}^{n_x}$, $\{\left< y_i \right>\}_{i=1}^{n_y}$, $p(z)$, $p(z)$ respectively
		\STATE $y' \gets G_{xy}(x, z_x)$, $x' \gets G_{yx}(y, z_y)$
		\STATE $\hat{y} \gets G_{xy}(x', z_x)$, $\hat{x} \gets G_{yx}(y', z_y)$
		\STATE $L_i \gets\mathcal{H}(C_y(y'), l) + \lambda_{cycle_{\nu}}||\hat{y} - y||_2 + \lambda_{cycle_{\mu}}||\hat{x} - x||_2 - \lambda_{gan_{xy}} D_{y}(y') - \lambda_{gan_{yx}} D_{x}(x')$
		\ENDFOR
		\STATE $\theta_{xy} \gets$ Adam$(\nabla_{\theta_{xy}} \frac{1}{m} \sum_{i=1}^m L_i, \alpha, \beta_1, \beta_2)$
		\STATE $\theta_{yx} \gets$ Adam$(\nabla_{\theta_{yx}} \frac{1}{m} \sum_{i=1}^m L_i, \alpha, \beta_1, \beta_2)$
		\STATE $\psi_y \gets$ Adam$(\nabla_{\psi_{y}} \frac{1}{m} \sum_{i=1}^m L_i, \alpha, \beta_1, \beta_2)$
		\ENDWHILE
	\end{algorithmic}
\end{breakablealgorithm}

\end{document}